\DeclareMathOperator*{\argmax}{argmax}
\journalname{Autonomous Robots: Special Issue on Active Perception}
\begin{document}

\title{Exploiting Submodular Value Functions\\for Scaling Up Active Perception} 

\author{Yash Satsangi \and Shimon Whiteson \and \newline Frans A. Oliehoek \and	Matthijs T. J. Spaan}

\authorrunning{Satsangi et al.} 

\institute{Yash Satsangi \at
              Informatics Institute, University of Amsterdam \\
              \email{y.satsangi@uva.nl} \\ 
              Tel.: +31-(0)20-525-8516 \\ 
              Fax.: +31-(0)20-525-7490 
              \and
	Shimon Whiteson\at
	University of Oxford. \\
	\vspace{-2mm} 
	\and
	Frans A. Oliehoek\at
	University of Liverpool. \\
	University of Amsterdam. \\
	\vspace{-2mm}
	\and 
	Matthijs T. J. Spaan\at
	Delft University of Technology.
}

\date{Received: date / Accepted: date}

\maketitle

\begin{abstract}
In \emph{active perception} tasks, an agent aims to select sensory actions that reduce its uncertainty about one or more hidden variables. For example, a mobile robot takes sensory actions to efficiently navigate in a new environment. While \emph{partially observable Markov decision processes} (POMDPs) provide a natural model for such problems, \emph{reward functions} that directly penalize uncertainty in the agent's belief can remove the \emph{piecewise-linear and convex} (PWLC) property of the \emph{value function} required by most POMDP planners. Furthermore, as the number of sensors available to the agent grows, the computational cost of POMDP planning grows exponentially with it, making POMDP planning infeasible with traditional methods.

In this article, we address a twofold challenge of modeling and planning for active perception tasks. We analyze $\rho$POMDP and POMDP-IR, two frameworks for modeling active perception tasks, that restore the PWLC property of the value function. We show the mathematical equivalence of these two frameworks by showing that given a $\rho$POMDP along with a policy, they can be reduced to a POMDP-IR and an equivalent policy (and vice-versa). We prove that the value function for the given $\rho$POMDP (and the given policy) and the reduced POMDP-IR (and the reduced policy) is the same. 
To efficiently plan for active perception tasks, we identify and exploit the independence properties of POMDP-IR to reduce the computational cost of solving POMDP-IR (and $\rho$POMDP). We propose \emph{greedy point-based value iteration (PBVI)}, a new POMDP planning method that uses \emph{greedy maximization} to greatly improve scalability in the action space of an active perception POMDP. Furthermore, we show that, under certain conditions, including \emph{submodularity}, the value function computed using greedy PBVI is guaranteed to have bounded error with respect to the optimal value function. We establish the conditions under which the value function of an active perception POMDP is guaranteed to be \emph{submodular}. Finally, we present a detailed empirical analysis on a dataset collected from a multi-camera tracking system employed in a shopping mall. Our method achieves similar performance to existing methods but at a fraction of the computational cost leading to better scalability for solving active perception tasks. 
\keywords{Sensor selection \and Long-term planning \and Mobile sensors \and Submodularity \and POMDP}
\end{abstract}

\section{Introduction}

\nocite{*}

\emph{Multi-sensor systems} are becoming increasingly prevalent in a wide-range of settings. For example, multi-camera systems are now routinely used for security, surveillance and tracking \citep{ActiveSensingSensor,natarajan2012,SpaanJaamas}. A key challenge in the design  of these systems is the efficient allocation of scarce resources such as the bandwidth required to communicate the collected data to a central server, the CPU cycles required to process that data, and the energy costs of the entire system \citep{ActiveSensingSensor,ApproxDynamProg,Spaan09icaps}. For example, state of the art human activity recognition algorithms require high resolution video streams coupled with significant computational resources. When a human operator must monitor many camera streams, displaying only a small number of them can reduce the operator's cognitive load. IP-cameras connected directly to a local area network need to share bandwidth.  Such constraints gives rise to the \emph{dynamic sensor selection} \citep{satsangi2015}\footnote{This article extends the research already presented by \citet{satsangi2015} at AAAI 2015. In this article, we present additional theoretical results on equivalence of POMDP-IR and $\rho$POMDP, a new technique that exploits the independence properties of POMDP-IR to solve it more efficiently, and we present a detailed empirical analysis of belief-based rewards for POMDPs in active perception tasks. This is a corrected version of this article and contains the same corrections as pointed in \citet{satsangi2015}. We thank Csaba Szepesvari for pointing this. The original article is publicly made available by Springer Journals at https://doi.org/10.1007/s10514-017-9666-5 .} problem where an agent at each time step, must select $K$ out of the $N$ available sensors to allocate these resources to, where $K$ is the maximum number of sensors allowed given the resource constraints. 

For example, consider the \emph{surveillance task}, in which a mobile robot aims to minimize its future uncertainty about the state of the environment but can use only $K$ of its $N$ sensors at each time step.
Surveillance is an example of an \emph{active perception} \citep{bajcsy1988active} task, where an agent takes actions to reduce uncertainty about one or more hidden variables, while reasoning about various resource constraints. When the state of the environment is static, a \emph{myopic} approach that always selects actions that maximize the immediate expected reduction in uncertainty is typically sufficient. However, when the state changes over time, a \emph{non-myopic} approach that reasons about the long-term effects of action selection performed at each time step can be better. For example, in the surveillance task, as the robot moves and the state of the environment changes, it becomes essential to reason about the long-term consequences of the robot's actions to minimize the future uncertainty.
 
A natural decision-theoretic model for such an approach is the \emph{partially observable Markov decision process} (POMDP) \citep{Sondik71,KaelblingPOMDP,mykelK}. POMDPs provide a comprehensive and powerful framework for planning under uncertainty. They can model the dynamic and partially observable state and express the goals of the systems in terms of rewards associated with state-action pairs. This model of the world can be used to compute closed-loop, long term policies that can help the agent to decide what actions to take given a belief about the state of the environment \citep{mobileRobot,kurniawati}.

In a typical POMDP, reducing uncertainty about the state is only \emph{a means to an end}. For example, a robot whose goal is to reach a particular location may take sensing actions that reduce its uncertainty about its current location because doing so helps it determine what future actions will bring it closer to its goal. By contrast, in active perception problems reducing uncertainty is \emph{an end in itself}. For example, in the surveillance task, the system's goal is typically to ascertain the state of its environment, not use that knowledge to achieve a goal. While perception is arguably always performed to aid decision-making, in an active perception problem that decision is made by another agent such as a human, that is not modeled as a part of the POMDP. For example, in the surveillance task, the robot might be able to detect a suspicious activity but only the human users of the system may decide how to react to such an activity. 

One way to formulate uncertainty reduction as an end in itself is to define a \emph{reward function} whose additive inverse is some measure of the agent's uncertainty about the hidden state, e.g., the \emph{entropy} of its \emph{belief}. However this formulation leads to a reward function that conditions on the belief, rather than the state and the resulting \emph{value function} is not PWLC, which makes many traditional POMDP solvers inapplicable. There exists \emph{online planning methods} \citep{MCTS, rtdpbel}, which generates policy on the fly, that do not require the PWLC property of the value function. However, many of these methods require multiple `hypothetical' belief updates to compute the optimal policy, which makes them unsuitable for sensor selection where the optimal policy must be computed in a fraction of a second. There exists other online planning methods that do not require hypothetical belief updates \citep{MCTS}, but since we are dealing with belief based rewards, they cannot be directly applied here.
Here, we address the case of \emph{offline planning} where the policy is computed before execution of the task.

Thus, to efficiently solve active perception problems, we must (a) model the problem with minimizing uncertainty as the objective while maintaining a PWLC value function and (b) use this model to solve the POMDP efficiently. Recently, two frameworks have been proposed, \emph{$\rho$POMDP} \citep{Mauricio}  and \emph{POMDP with Information Reward} (POMDP-IR) \citep{SpaanJaamas} to efficiently model active perception tasks, such that the PWLC property of the value function is maintained. The idea behind $\rho$POMDP is to find a PWLC approximation to the ``true'' continuous belief-based reward function, and then solve it with the traditional solvers. POMDP-IR, on the other hand, allows the agent to make predictions about the hidden state and the agent is rewarded for accurate predictions via a state-based reward function. There is no research that examines the relationship between these two frameworks, their pros and cons, or their efficacy in realistic tasks, thus it is not clear how to choose between these two frameworks to model the active perception problems. 

In this article, we address the problem of efficient modeling and planning for active perception tasks. First, we study the relationship between $\rho$POMDP and POMDP-IR. Specifically, we establish \emph{equivalence} between them by showing that any $\rho$POMDP can be reduced to a POMDP-IR (and vice-versa) that preserves the value function for equivalent policies. Having established the theoretical relationship between $\rho$POMDP and POMDP-IR, we model the surveillance task as a POMDP-IR and propose a new method to solve it efficiently by exploiting a simple insight that lets us decompose the maximization over prediction actions and normal actions while computing the value function.
 
Although POMDPs are computationally difficult to solve, recent methods \citep{littmanthesis,hauskrecht2000,anytimePBVI,perseus,poupart,jiparr,sarsop,pointSurvey} have proved successful in solving POMDPs with large state spaces. Solving active perception POMDPs pose a different challenge: as the number of sensors grows, the size of the action space $\binom{N}{K}$ grows exponentially with it. Current POMDP solvers fail to address scalability in the action space of a POMDP. We propose a new \emph{point-based} planning method that scales much better in the number of sensors for such POMDPs. The main idea is to replace the maximization operator in the Bellman optimality equation with \emph{greedy maximization} in which a subset of sensors is constructed iteratively by adding the sensor that gives the largest marginal increase in value.

We present theoretical results bounding the error in the value functions computed by this method. We prove that, under certain conditions including \emph{submodularity}, the value function computed using POMDP backups based on greedy maximization has bounded error. We achieve this by extending the existing results \citep{Nemhauser} for the greedy algorithm, which are valid only for a single time step, to a full sequential decision making setting where the greedy operator is employed multiple times over multiple time steps. In addition, we show that the conditions required for such a guarantee to hold are met, or approximately met, if the reward is defined using negative belief entropy.

Finally, we present a detailed empirical analysis on a real-life dataset from a multi-camera tracking system installed in a shopping mall. We identify and study the critical factors relevant to the performance and behavior of the agent in active perception tasks. We show that our proposed planner outperforms a myopic baseline and nearly matches the performance of existing point-based methods while incurring only a fraction of the computational cost, leading to much better scalability in the number of cameras.

\section{Related Work}
Sensor selection as an active perception task has been studied in many contexts. Most work focus on either open-loop or myopic solutions, e.g., \citep{ActiveSensingSensor}, \citep{Spaan09icaps}, \citep{ApproxDynamProg}, \citep{convexOpt}. 
\cite{ActiveSensingSensor} proposes a Monte-Carlo approach that mainly focuses on a myopic solution. \cite{ApproxDynamProg} and \cite{convexOpt} developed planning methods that can provide long-term but open-loop policies. 
By contrast, a POMDP-based approach enables a closed-loop, non-myopic approach can lead to a better performance when the underlying state of the world changes over time.
\cite{Spaan08pomdp}, \cite{Spaan09icaps}, \cite{SpaanNRS} and \cite{natarajan2012} also consider a POMDP-based approach to active and cooperative active perception. However, they consider an objective function that conditions on state and not on belief, as the belief-dependent rewards in POMDP break the PWLC property of the value function. They use point-based methods \citep{perseus} for solving the POMDPs. While recent point-based methods \citep{pointSurvey} for solving POMDPs scale reasonably in the state space of POMDPs, they do not address the scalability in the action and observation space of a POMDP.

Greedy PBVI focuses specially on the scalability in the action space of an active perception POMDP and provides better scalability by leveraging greedy maximization.
Traditionally, POMDPs require the reward function to be defined as a function of the state. However, for active perception POMDPs, the objective is to reduce the uncertainty in the belief of the agent. 

In recent years, applying greedy maximization to submodular functions has become a popular and effective approach to sensor placement/selection \citep{krause05optimal,krause07nectar,KZijcai09}. However, such work focuses on myopic or fully observable settings and thus does not enable the long-term planning required to cope with dynamic state in a POMDP. 

\emph{Adaptive submodularity} \citep{adaptiveSubmod} is a recently developed extension that addresses these limitations by allowing action selection to condition on previous observations. However, it assumes a static state and thus cannot model the dynamics of a POMDP across timesteps. Therefore, in a POMDP, adaptive submodularity is only applicable \emph{within} a timestep, during which state does not change but the agent can sequentially add sensors to a set.  In principle, adaptive submodularity could enable this intra-timestep sequential process to be adaptive, i.e., the choice of later sensors could condition on the observations generated by earlier sensors. However, this is not possible in our setting because (a) we assume that, due to computational costs, all sensors must be selected simultaneously; (b) information gain is not known to be adaptive submodular \citep{submodSurrogate}. Consequently, our analysis considers only classic, non-adaptive submodularity.  

To our knowledge, our work is the first to establish the sufficient conditions for the submodularity of POMDP value functions for active perception POMDPs and thus leverage greedy maximization to scalably compute bounded approximate policies for dynamic sensor selection modeled as a full POMDP.

\section{Background}

In this section, we provide background on POMDPs, active perception POMDPs and solution methods for POMDPs.

\subsection{Partially Observable Markov Decision Processes}

POMDPs provide a decision-theoretic framework for modeling partial
observability and dynamic environments. Formally, a POMDP is defined by a tuple $ \langle S, A, \Omega, T, O, R, b_{0}, h \rangle$. At each time step, the environment is in a state $s \in S$, the agent takes an action $a \in A$ and receives a reward whose expected value is $R(s,a)$, and the system transitions to a new state $s' \in S$ according to the transition function $T(s,a,s') = \Pr(s'|s,a)$. Then, the agent receives an observation $z \in \Omega$ according to the observation function $O(s',a,z) = \Pr(z|s',a)$. Starting from an initial belief $b_0$, the agent maintains a \emph{belief} $b(s)$ about the state which is a probability distribution over all the possible states. The number of time steps for which the decision process lasts, i.e., the horizon is denoted by $h$. If the agent took action $a$ in belief $b$ and got an observation $z$, then the updated belief $b^{a,z}(s)$ can be computed using Bayes rule.
A policy $\pi$ specifies how the agent acts in each belief.  
Given $b(s)$ and $R(s,a)$, one can compute a belief-based reward, $\rho(b,a)$ as:
\begin{equation} \label{immRew}
 	\rho(b,a) =  \sum_{s} b(s)R(s,a).
\end{equation}

The $t$-step \emph{value function} of a policy $V^{\pi}_{t}$ is defined as the expected future discounted reward the agent can gather by following $\pi$ for next $t$ steps. $V^{\pi}_{t}$ can be characterized recursively using the \emph{Bellman equation}: 
\begin{equation}
V_{t}^{\pi}(b)  \triangleq \Bigg[ \rho(b,a_{\pi}) +  \sum_{z\in\Omega} \Pr(z|a_{\pi},b)V_{t-1}^{\pi}(b^{a_{\pi},z})\Bigg],
\end{equation}
where $a_\pi = \pi(b)$ and $V^{\pi}_{0}(b) = 0$. 
The action-value function $Q^{\pi}_{t}(b,a)$ is the value of taking action $a$ and following $\pi$ thereafter: 
\begin{equation} \label{eq:q-def}
Q^{\pi}_{t}(b,a) \triangleq \rho(b,a) + \sum_{z \in \Omega} \Pr(z| a,b)V_{t-1}^{\pi}(b^{a,z}).
\end{equation}
The policy that maximizes $V^{\pi}_{t}$ is called the \emph{optimal policy} $\pi^{*}$ and the corresponding value function is called the \emph{optimal value function} $V^{*}_{t}$.
The \emph{optimal value function} $V^{*}_{t}(b)$ can be characterized recursively as:
\begin{equation} \label{eq:boe}
\begin{split}
V_{t}^{*}(b) &= \max_{a}\Bigg[ \rho(b,a) +  \sum_{z\in\Omega} \Pr(z|a,b)V_{t-1}^{*}(b^{a,z})\Bigg].
\end{split}
\end{equation}
We can also define \emph{Bellman optimality operator} $\mathfrak{B}^*$:
\begin{equation}
(\mathfrak{B}^{*}V_{t-1})(b) = \max_{a} [\rho(b,a) + \sum_{z \in \Omega} \Pr(z|a,b)V_{t-1}(b^{z,a})], \nonumber
\end{equation}
and write \eqref{eq:boe} as $V^{*}_{t}(b) = (\mathfrak{B}^{*}V^{*}_{t-1})(b).$ 

\begin{figure}
\begin{center}
\includegraphics[scale=0.12]{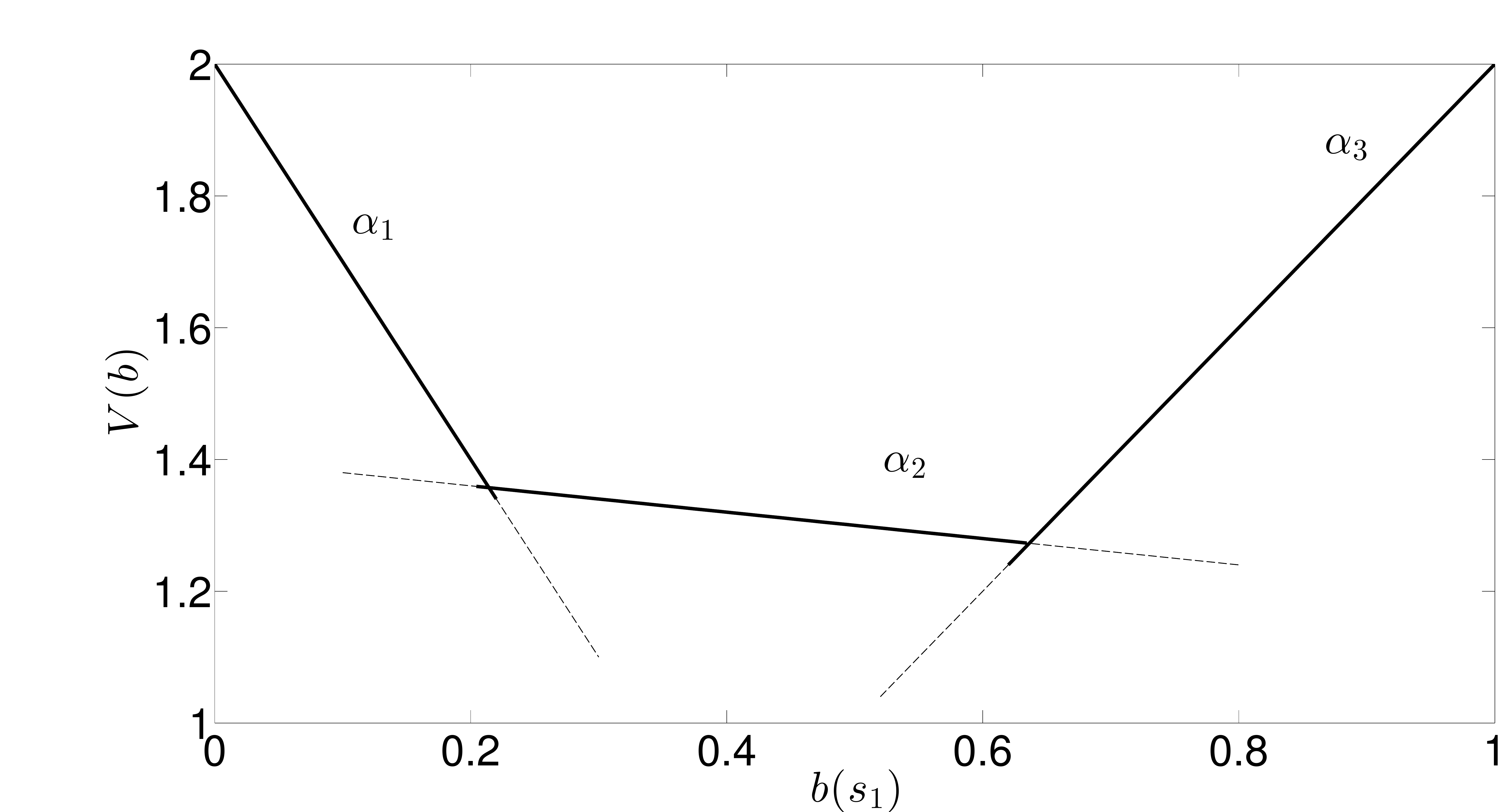}
    \caption{Illustration of the PWLC property of the value function. The value function is the upper surface indicated by the solid lines.}
\end{center}
\label{fig:pwlc1}
\end{figure}

An important consequence of these equations is that the value function is \emph{piecewise-linear and convex} (PWLC), as shown in Figure 1, a property exploited by most POMDP planners. \cite{Sondik71} showed that a PWLC value function at any finite time step $t$ can be expressed as a set of vectors: $\Gamma_{t} = \{ \alpha_{0}, \alpha_{1}, \dots, \alpha_{m} \}$. Each $\alpha_i$ represents an $|S|$-dimensional hyperplane defining the value function over a bounded region of the belief space. The value of a given belief point can be computed from the vectors as: $V_{t}^{*}(b) = \max_{\alpha_{i} \in \Gamma_t} \sum_{s} b(s)\alpha_{i}(s)$.

\subsection{POMDP Solvers}
Exact methods like Monahan's enumeration algorithm \citep{monahan1982} computes the value function for all possible belief points by computing the optimal $\mathbf{\Gamma}_{t}$. 
Point-based planners \citep{anytimePBVI,pointSurvey,perseus}, on the other hand, avoid the expense of solving for all belief points by computing $\Gamma_{t}$ only for a set of sampled beliefs $B$. Since exact POMDP solvers \citep{Sondik71,monahan1982} are intractable for all but the smallest POMDPs, we focus on point-based methods here. Point-based methods compute $\Gamma_{t}$ using the following recursive algorithm.

At each iteration (starting from t = 1), for each action $a$ and observation $z$, an intermediate $\Gamma^{a,z}_{t}$ is computed from $\Gamma_{t-1}$:
\begin{equation} \label{eq:gamma-az}
\Gamma_{t}^{a,z} = \{ \alpha_{i}^{a,z} : \alpha_{i} \in \Gamma_{t-1} \},
\end{equation}
 Next, $\Gamma^{a}_{t}$ is computed only for the sampled beliefs, i.e., $\Gamma^{a}_{t} = \{\alpha^a_{b}: b \in B\}$, where:
\begin{equation} \label{eq:pbvistep}
\alpha_{b}^{a} = \Gamma^{a} + \sum_{z \in \Omega} \argmax_{\alpha \in \Gamma_{t}^{a,z}} \sum_{s'} b(s')\alpha(s').
\end{equation}
Finally, the best $\alpha$-vector for each $b \in B$ is selected:
\begin{equation} \label{eq:pbvi2ndstep}
\alpha_{b} =  \argmax_{\alpha_{b}^{a}} \sum_{s'} b(s')\alpha_{b}^{a}(s'), \\
\end{equation}
\begin{equation} \label{eq:pbvifinstep}
\Gamma_{t} =  \cup_{b \in B} \alpha_{b}.
\end{equation}

The above algorithm at each timestep $t$, generates $|A_n||\Omega||\Gamma_{t-1}|$ alpha vectors in $\mathcal{O}(|S|^2 |A||\Omega||\Gamma_{t-1}|)$ time and then reduces them to $|B|$ vectors in $\mathcal{O}(|S||B||A||\Omega||\Gamma_{t-1}|)$ \citep{anytimePBVI}.

\section{Active Perception POMDP}
The goal in an active perception POMDP is to reduce uncertainty about a \emph{feature of interest} that is not directly observable. In general, the feature of interest may be only part of the state, e.g., if a surveillance system cares only about people's positions, not their velocities, or higher-level features derived from the state. However, for simplicity, we focus on the case where the feature of interest is just the state $s$\footnote{We make this assumption without loss of generality. The following sections will make it clear that none of our results require this assumption.} of the POMDP. For simplicity, we also focus on \emph{pure} active perception tasks in which the agent's only goal is to reduce uncertainty about the state, as opposed to hybrid tasks where the agent may also have other goals. For such cases, \emph{hybrid} rewards \citep{eck}, which combine the advantage of belief-based and state-based rewards, are appropriate. Although not covered in this article, it is straightforward to extend our results to hybrid tasks \citep{SpaanJaamas}.

We model the active perception task as a POMDP in which an agent must choose a subset of available sensors at each time step. We assume that all selected sensors must be chosen simultaneously, i.e. it is not possible within a timestep to condition the choice of one sensor on the observations generated by another sensor. This corresponds to the common setting where generating each sensor's observation is time consuming, e.g., in the surveillance task, because it requires applying expensive computer vision algorithms, and thus all the observations from the selected cameras must be generated in parallel. Formally, an active perception POMDP has the following components: 

\begin{itemize}
\item Actions $\textbf{a} = \langle a_{1} \dots a_{N} \rangle$ are vectors of $N$ binary \emph{action features}, each of which specifies whether a given sensor is selected or not. For each $\textbf{a}$, we also define its set equivalent $\mathfrak{a} = \{i : a_i = 1\}$, i.e., the set of indices of the selected sensors. Due to the resource constraints, the set of all actions $A = \{\mathfrak{a} : |\mathfrak{a}| \leq K \}$ contains only sensor subsets of size $K$ or less. $A^+=\{1,\ldots,N\}$ indicates the set of all sensors. 

\item Observations $\mathbf{z} = \langle z_{1} \dots z_{N} \rangle$ are vectors of $N$ \emph{observation features}, each of which specifies the sensor reading obtained by the given sensor.  If sensor $i$ is not selected, then $z_i = \emptyset$. The set equivalent of $\mathbf{z}$ is $\mathfrak{z} = \{z_i : z_i \neq \emptyset\}$. To prevent ambiguity about which sensor generated which observation in $\mathfrak{z}$, we assume that, for all $i$ and $j$, the domains of $z_i$ and $z_j$ share only $\emptyset$. This assumption is only made for notational convenience and does not restrict the applicability of our methods in any way.  
\end{itemize}

For example, in the surveillance task, $\mathfrak{a}$ indicates the set of cameras that are active and $\mathfrak{z}$ are the observations received from the cameras in $\mathfrak{a}$. The model for the sensor selection problem for surveillance task is shown in Figure \ref{fig:sensorselection}. Here, we assume that the actions involve only selecting $K$ out of $N$ sensors. The transition function is thus independent of the actions, as selecting sensors cannot change the state. However, as we outline in the later subsection (6.4), it is possible to extend our results to general active perception POMDPs with arbitrary transition functions, that can model, e.g., mobile sensors that, by moving, change the state.

\begin{figure}
\centering
    \includegraphics[scale=0.42]{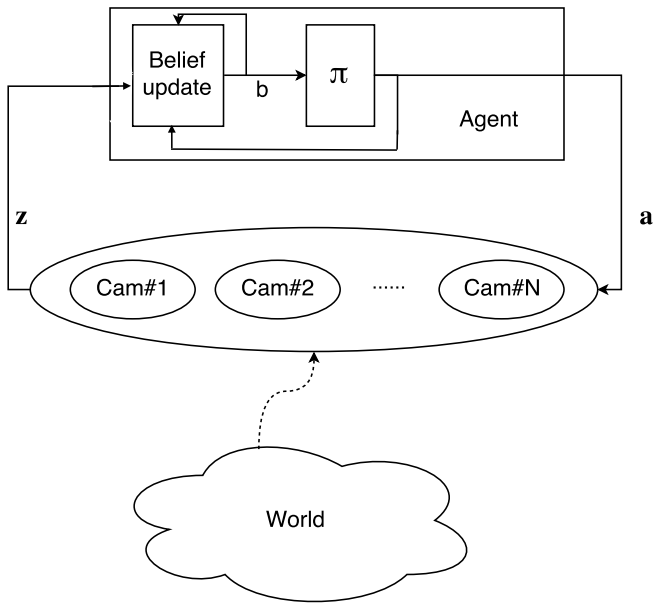}
    \caption{Model for sensor selection problem}
    \label{fig:sensorselection}
\end{figure}

A challenge in these settings is properly formalizing the reward function.  Because the goal is to reduce the uncertainty, reward is a direct function of the belief, not the state, i.e., the agent has no preference for one state over another, so long as it knows what that state is.  Hence, there is no meaningful way to define a state-based reward function $R(s,\mathbf{a})$.  Directly defining $\rho(b,\mathbf{a})$ using, e.g., negative \emph{belief entropy}: $- H_{b}(s) = \sum_s b(s) \log(b(s))$ results in a value function that is not piecewise-linear.  Since $\rho(b,\mathbf{a})$ is no longer a convex combination of a state-based reward function, it is no longer guaranteed to be PWLC, a property most POMDP solvers rely on.
In the following subsections, we describe two recently proposed frameworks designed to address this problem.

\subsection{$\rho$POMDPs}

A $\rho$POMDP \citep{Mauricio}, defined by a tuple  $\langle S,A,T,\Omega,O,\Gamma_{\rho},b_{0}, h \rangle$, is a normal POMDP except that the state-based reward function $R(s,\mathbf{a})$ has been omitted and $\Gamma_{\rho}$ has been added. $\Gamma_{\rho}$ is a set of vectors, that defines the immediate reward for $\rho$POMDP. Since we consider only pure active perception tasks, $\rho$ depends only on $b$, not on $\mathbf{a}$ and can be written as $\rho(b)$. Given $\Gamma_{\rho}$, $\rho(b)$ can be computed as: $\rho(b)  = \max_{\alpha \in \Gamma_{\rho}} \sum_{s} b(s)\alpha(s)$. If the true reward function is not PWLC, e.g., negative belief entropy, it can be approximated by defining $\Gamma_{\rho}$ as a set of vectors, each of which is tangent to the true reward function. Figure \ref{fig:tangents} illustrates approximating negative belief entropy with different numbers of tangents. 

Solving a $\rho$POMDP\footnote{Arguably, there is a counter-intuitive relation between the general class of
POMDPs and the sub-class of pure active perception problems: on the one hand, the class of POMDPs is a more general set of problems, and it is intuitive to assume that there might be harder problems in the class. On the other hand, many POMDP problems admit a representation of the value function using a finite set of vectors. In contrast, the use of entropy would require an infinite number of vectors to merely represent the reward function. Therefore, even though we consider a specific sub-class of POMDPs, this class has properties that make it difficult to address using existing methods.} requires a minor change to the existing algorithms.  In particular, since $\Gamma_{\rho}$ is a set of vectors, instead of a single vector, an additional cross-sum is required to compute $\Gamma_{t}^{\mathbf{a}}$: $\Gamma_{t}^{\mathbf{a}} = \Gamma_{\rho} \oplus \Gamma_{t}^{\mathbf{a},\mathbf{z}_{1}} \oplus \Gamma_{t}^{\mathbf{a},\mathbf{z}_{2}} \oplus \dots$. 
\begin{figure}
\begin{center}
\includegraphics[width=0.45\textwidth]{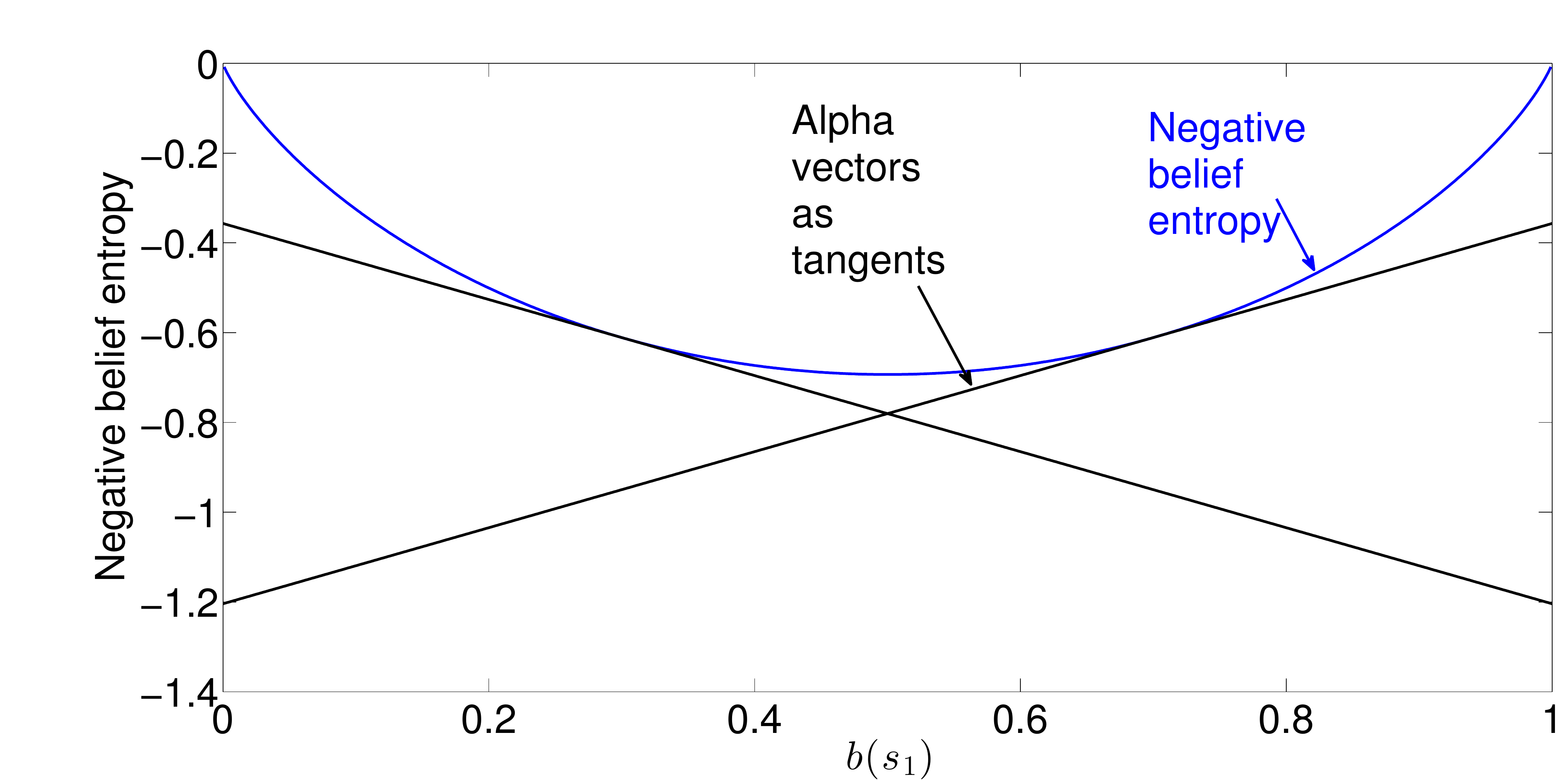} 
\includegraphics[width=0.45\textwidth]{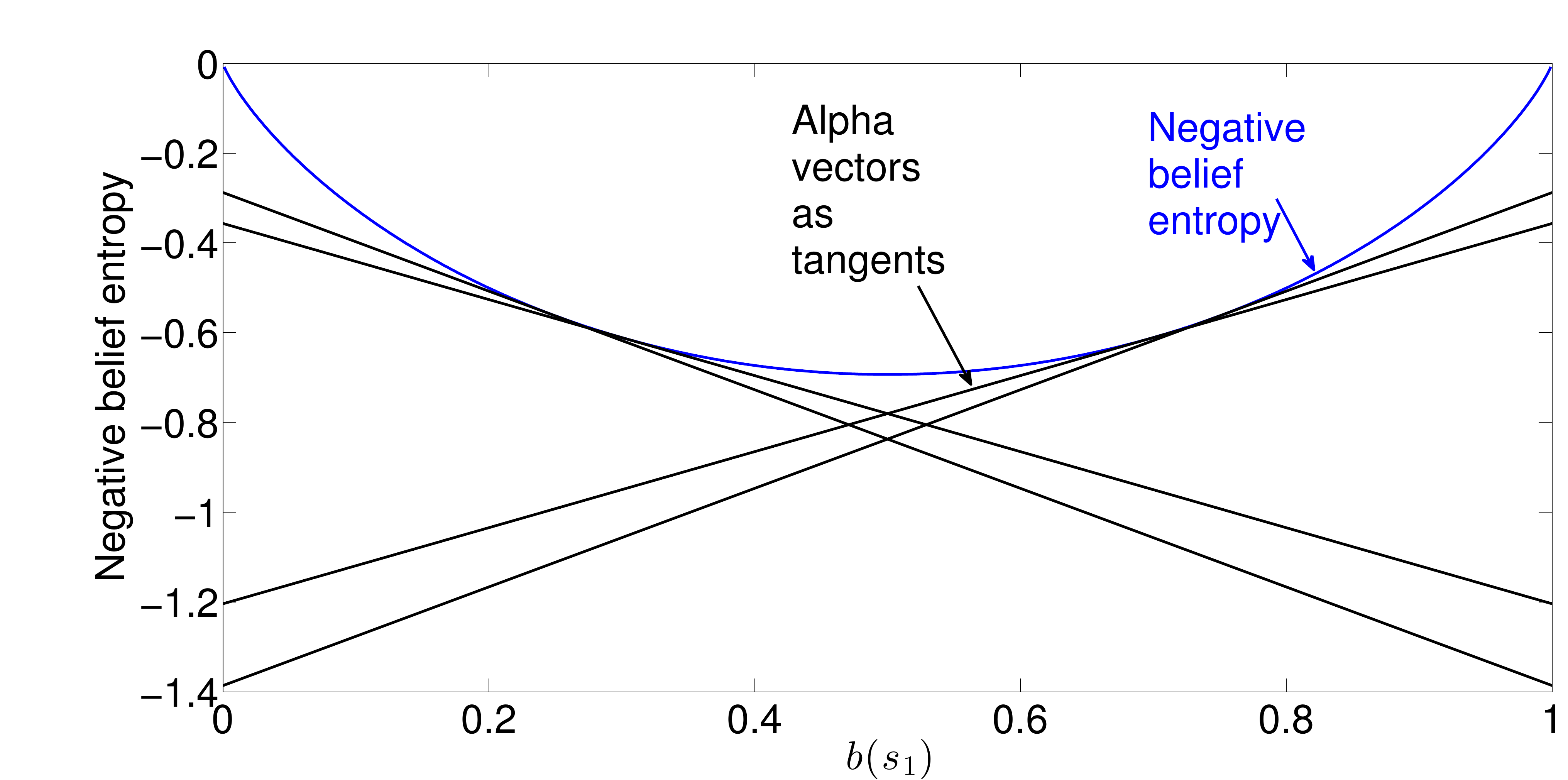}
\includegraphics[width=0.45\textwidth]{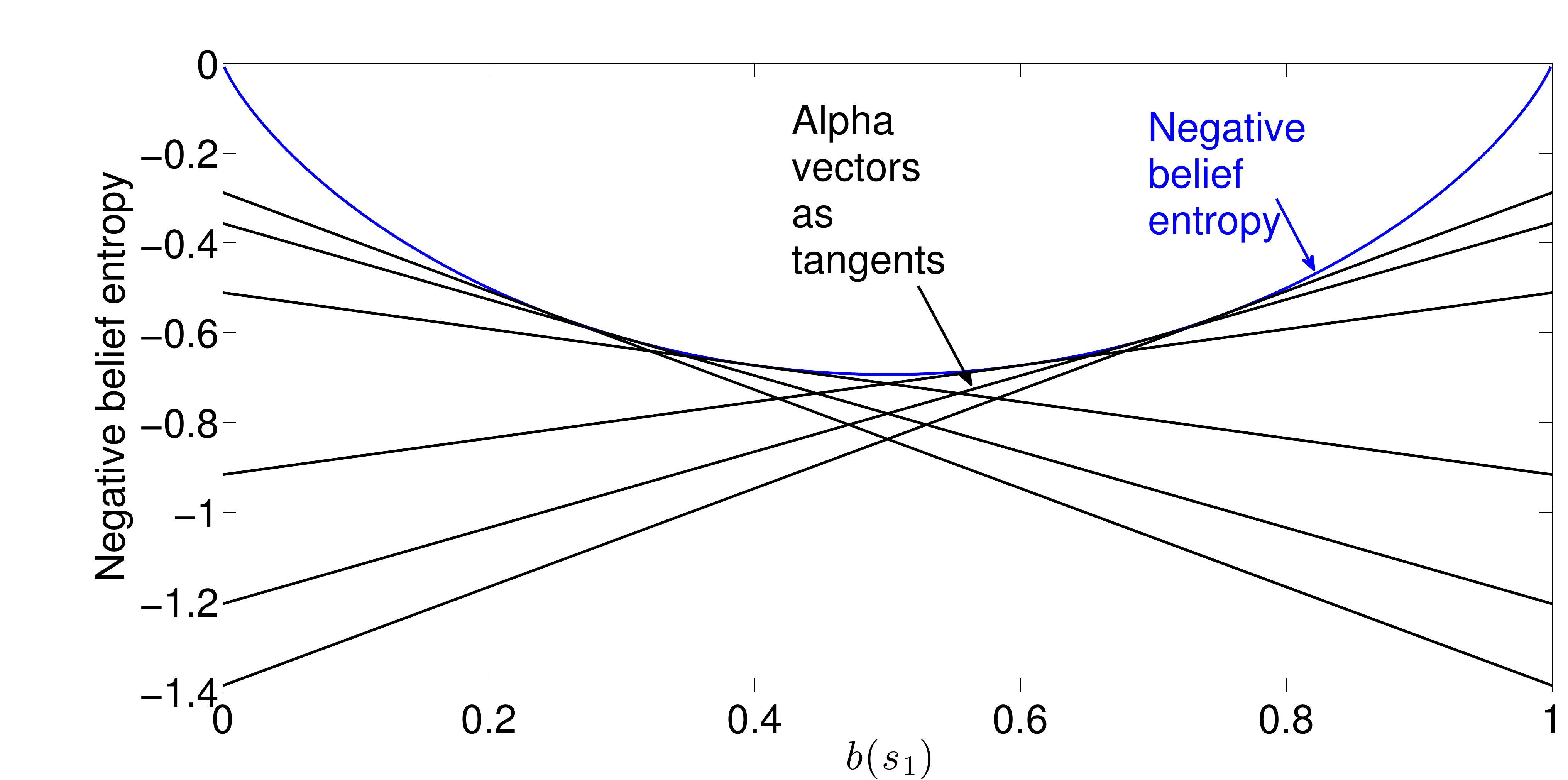}
\end{center}
\caption{Defining $\Gamma_{\rho}^{a}$ with different sets of tangents to the
  negative belief entropy curve in a
  2-state POMDP.
  }
\label{fig:tangents}
\end{figure}
\cite{Mauricio} showed that the error in the value function computed by this approach, relative to the true reward function, whose tangents were used to define $\Gamma_{\rho}$, is bounded. However, the additional cross-sum increases the computational complexity of computing $\Gamma_{t}^{\mathbf{a}}$ to $\mathcal{O}(|S||A||\Gamma_{t-1}||\Omega||B||\Gamma_{\rho}|)$ with point-based methods.

Though $\rho$POMDP do not put any constraints on the definition of $\rho$, we restrict the definition of $\rho$ for an active perception POMDP to be a set of vectors ensuring that $\rho$ is PWLC, which in turn ensures that the value function is PWLC. This is not a severe restriction because solving a $\rho$POMDP using \emph{offline planning} requires a PWLC approximation of $\rho$  anyway. 

\subsection{POMDPs with Information Rewards} \label{sec:pomdp-ir}
Spaan et al.\ proposed \emph{POMDPs with information rewards} (POMDP-IR), an alternative framework for modeling active perception tasks that relies only on the standard POMDP.  Instead of directly rewarding low uncertainty in the belief, the agent is given the chance to make predictions about the hidden state and rewarded, via a standard state-based reward function, for making accurate predictions.  Formally, a POMDP-IR is a POMDP in which each action $\mathrm{a} \in A$ is a tuple $\langle \mathbf{a}_n, a_p \rangle$ where $\mathbf{a}_n \in A_n$ is a \emph{normal action}, e.g., moving a robot or turning on a camera (in our case $\mathbf{a}_{n}$ is $\mathbf{a}$), and $a_p \in A_p$ is a \emph{prediction action}, which expresses predictions about the state.  The joint action space is thus the Cartesian product of $A_n$ and $A_p$, i.e., $A = A_n \times A_p$.

Prediction actions have no effect on states or observations but can trigger rewards via the standard state-based reward function $R(s,\langle \mathbf{a}_n, a_p \rangle)$. While there are many ways to define $A_p$ and $R$, a simple approach is to create one prediction action for each state, i.e., $A_p = S$, and give the agent positive reward if and only if it correctly predicts the true state:
 \begin{equation}\label{eq:example}
     R(s,\langle \mathbf{a}_n, a_p \rangle) =
    \begin{cases}
      1, & \text{if}\ s = a_p \\
      0, & \text{otherwise.}
    \end{cases}
  \end{equation}
  
Thus, POMDP-IR indirectly rewards beliefs with low uncertainty, since these enable more accurate predictions and thus more expected reward.  Furthermore, since a state-based reward function is explicitly defined, $\rho$ can be defined as a convex combination of $R$, as in \eqref{immRew}, guaranteeing a PWLC value function, as in a regular POMDP. Thus, a POMDP-IR can be solved with standard POMDP planners. However, the introduction of prediction actions leads to a blowup in the size of the joint action space $|A| = |A_n||A_p|$ of POMDP-IR. Replacing $|A|$ with $|A_n||A_p|$ in the analysis yields a complexity of computing $\Gamma_{t}^{\mathbf{a}}$ for POMDP-IR of $\mathcal{O}(|S||A_n||\Gamma_{t-1}||\Omega||B||A_p|)$ for point-based methods.

\begin{figure}
\begin{center}
\includegraphics[width=0.25\textwidth]{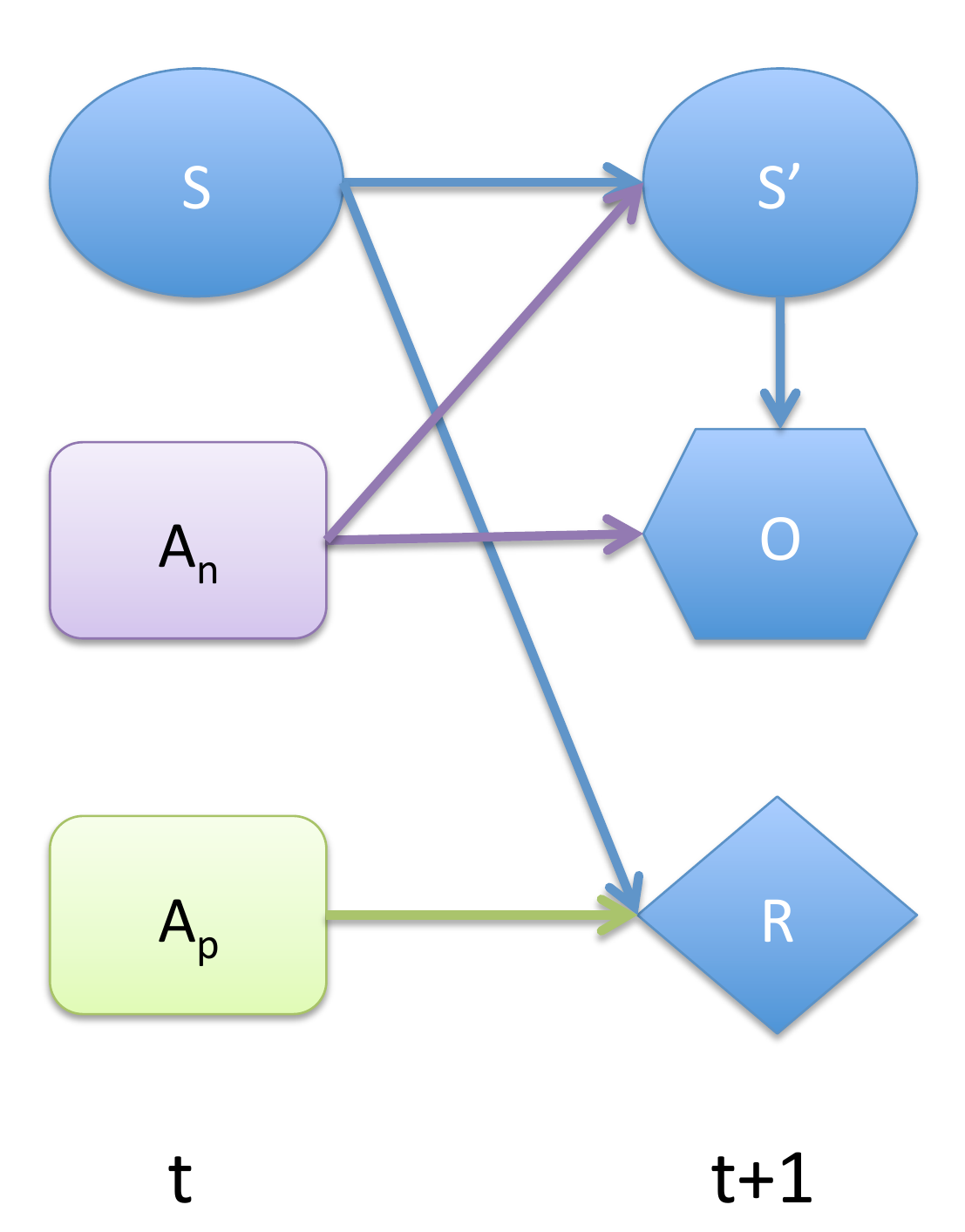}
\caption{Influence diagram for POMDP-IR.}
\label{fig:ID}
\end{center}
\end{figure}

Note that, though not made explicit in \citet{SpaanJaamas}, several independence properties are inherent to the POMDP-IR framework, as shown in Figure \ref{fig:ID}. Specifically, the two important properties are (a) in our setting the reward function is independent of the normal actions; (b) the transition and the observation function are independent of the normal actions. Although POMDP-IR can model \emph{hybrid rewards}, where in addition to prediction actions, normal actions can reward agent as well \citep{SpaanJaamas}, in this article, because we focus on pure active perception, the reward function $R$ is independent of the normal actions. Furthermore, state transitions and observations are independent of the prediction actions. In Section \ref{sec:decompMax}, we introduce a new technique to show that these independence properties can be exploited to solve a POMDP-IR much more efficiently and thus avoid the blowup in the size of the action space caused by the introduction of the prediction actions. Although, the reward function in our setting is independent of the normal actions, the main results we present in this article are not dependent on this property and can be easily extended or applied to cases where the reward is dependent on the normal actions.

\section{$\rho$POMDP and POMDP-IR Equivalence} \label{sec:equiv}
$\rho$POMDP and POMDP-IR offer two perspectives on modeling active perception tasks. $\rho$POMDP starts from a ``true'' belief-based reward function such as the negative entropy and then seeks to find a PWLC approximation via a set of tangents to the curve. By contrast, POMDP-IR starts from the queries that the user of the system will pose, e.g., ``What is the position of everyone in the room?'' or ``How many people are in the room'' and creates prediction actions that reward the agent correctly for answering such queries. In this section we establish the relationship between these two frameworks by proving the \emph{equivalence} of $\rho$POMDP and POMDP-IR. By equivalence of $\rho$POMDP and POMDP-IR, we mean that given a $\rho$POMDP and a policy, we can construct a corresponding POMDP-IR and a policy such that the value function for both the policies is exactly the same. We show this equivalence by starting with a $\rho$POMDP and a policy and introducing a \emph{reduction} procedure for both $\rho$POMDP and the policy (and vice-versa). Using the reduction procedure, we reduce the $\rho$POMDP to a POMDP-IR and the policy for $\rho$POMDP to an equivalent policy for POMDP-IR. We then show that the value function, $V^{\pi}_{t}$ for the $\rho$POMDP we started with and the reduced POMDP-IR is the same for the given and the reduced policy. To complete our proof, we repeat the same process by starting with a POMDP-IR and then reducing it to a $\rho$POMDP. We show that the value function $V^{\pi}_{t}$ for the POMDP-IR and the corresponding $\rho$POMDP is the same.
\begin{definition}
Given a $\rho$POMDP $\mathbf{M}_{\rho} = \langle S,A_{\rho}, \Omega, T_{\rho}, O_{\rho}, \Gamma_{\rho}, b_{0}, h \rangle$ the \textsc{reduce-pomdp-$\rho$-IR}($\mathbf{M}_{\rho})$ produces a POMDP-IR $\mathbf{M}_{\mathit{IR}}$ = $\langle S,A_{\mathit{IR}}, \Omega, T_{\mathit{IR}}, O_{\mathit{IR}}, R_{\mathit{IR}}, b_{0}, h \rangle$ via the following procedure.

\begin{itemize}

\item The set of states, set of observations, initial belief and horizon remain unchanged. Since the set of states remain unchanged, the set of all possible beliefs is also the same for $\mathbf{M}_{\mathit{IR}}$ and $\mathbf{M}_{\rho}$. 

\item The set of normal actions in $\mathbf{M}_{\mathit{IR}}$ is equal to the set of actions in $\mathbf{M}_{\rho}$, i.e., $A_{n,\mathit{IR}} = A_{\rho}$;

\item The set of prediction actions $A_{p,\mathit{IR}}$ in $\mathbf{M}_{\mathit{IR}}$ contains one
  prediction action for each $\alpha_{\rho}^{a_p} \in \Gamma_{\rho}$.

\item The transition and observation functions in $\mathbf{M}_{\mathit{IR}}$ behave the same as in $\mathbf{M}_{\rho}$ for each $\mathbf{a}_n$ and ignore the $a_p$, i.e., for all $\mathbf{a}_n \in A_{n,{IR}}$: $T_{\mathit{IR}}(s, \mathbf{a}_n, s') = T_{\rho}(s,\mathbf{a},s')$ and $O_{\mathit{IR}}(s', \mathbf{a}_n, \mathbf{z}) = O_{\rho}(s',\mathbf{a,z})$, where $\mathbf{a} \in A_{\rho}$ corresponds to $\mathbf{a}_n$.

\item The reward function in $\mathbf{M}_{\mathit{IR}}$ is defined such that
$\forall a_p \in A_p$, $R_{\mathit{IR}}(s,a_p) = \alpha_{\rho}^{a_p}(s)$, where $\alpha_{\rho}^{a_p}$
  is the $\alpha$-vector corresponding to $a_p$. 
\end{itemize}
\end{definition}

For example, consider a $\rho$POMDP with 2 states, if $\rho$ is defined using tangents to belief entropy at $b(s_1) = 0.3$ and $b(s_1) = 0.7$.  When reduced to a POMDP-IR, the resulting reward function gives a small negative reward for correct predictions and a larger one for incorrect predictions, with the magnitudes determined by the value of the tangents when $b(s_1) = 0$ and $b(s_1)=1$:
\begin{equation} \label{eq:ex}
     R_{\mathit{IR}}(s,a_p) =
    \begin{cases}
      -0.35, & \text{if}\ s = a_p \\
      -1.21, & \text{otherwise.}
    \end{cases}
\end{equation}
This is illustrated in Figure \ref{fig:tangents} (top). 

\begin{definition}
Given a policy $\pi_{\rho}$ for a $\rho$POMDP, $\mathbf{M}_{\rho}$, the \textsc{reduce-policy-$\rho$-IR}($\pi_{\rho}$) procedure produces a policy $\pi_{\mathit{IR}}$ for a POMDP-IR as follows.  For all $b$,
\begin{equation}
\pi_{\mathit{IR}}(b) = \langle \pi_{\rho}(b), \argmax_{a_p} \sum_{s} b(s)R(s,a_p) \rangle.
\end{equation}
That is, $\pi_{\mathit{IR}}$ selects the same normal action as $\pi_{\rho}$ and the prediction action that maximizes expected immediate reward.
\end{definition}

Using these definitions, we prove that solving $\mathbf{M}_{\rho}$ is the same as solving $\mathbf{M}_{\mathit{IR}}$.
\begin{theorem} \label{th:equiv1}
Let $\mathbf{M}_{\rho}$ be a $\rho$POMDP and $\pi_{\rho}$ an arbitrary policy for $\mathbf{M}_{\rho}$.  Furthermore let $\mathbf{M}_{\mathit{IR}}$ = \textsc{reduce-pomdp-$\rho$-IR}$(\mathbf{M}_{\rho})$ and $\pi_{\mathit{IR}}$ = \textsc{reduce-policy-$\rho$-IR}$(\pi_{\rho})$. Then, for all $b$,
\begin{equation}
V_{t}^{\mathit{IR}}(b) = V_{t}^{\rho}(b),
\end{equation}
where $V_{t}^{\mathit{IR}}$ is the $t$-step value function for $\pi_{\mathit{IR}}$ and $V_{t}^{\rho}$ is the $t$-step value function for $\pi_{\rho}$.
\end{theorem}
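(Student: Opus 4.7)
The plan is to prove the theorem by induction on the horizon $t$, exploiting the fact that the reduction preserves transitions, observations, and (after the policy reduction) the immediate reward at every belief.

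For the base case $t = 0$, both $V_{0}^{\mathit{IR}}(b)$ and $V_{0}^{\rho}(b)$ equal zero by definition, so the equality holds trivially.

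For the inductive step, I would assume $V_{t-1}^{\mathit{IR}}(b) = V_{t-1}^{\rho}(b)$ for all $b$, and expand both value functions at horizon $t$ using the Bellman equation for a fixed policy. The key decomposition I would use is to write
\begin{equation}
V_{t}^{\rho}(b) = \rho(b) + \sum_{z \in \Omega} \Pr(z \mid \pi_{\rho}(b), b)\, V_{t-1}^{\rho}(b^{\pi_{\rho}(b), z}),
\end{equation}
and similarly for $V_{t}^{\mathit{IR}}(b)$ using $\pi_{\mathit{IR}}(b) = \langle \pi_{\rho}(b), a_p^{*}(b)\rangle$ with $a_p^{*}(b) = \argmax_{a_p} \sum_s b(s) R_{\mathit{IR}}(s, a_p)$. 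Because \textsc{reduce-pomdp-$\rho$-IR} sets $T_{\mathit{IR}}$ and $O_{\mathit{IR}}$ equal to $T_{\rho}$ and $O_{\rho}$ for the corresponding normal action and makes them independent of the prediction action, the observation probabilities $\Pr(z \mid \mathbf{a}_n, b)$ and the updated beliefs $b^{\mathbf{a}_n, z}$ are identical in both models. Thus, by the inductive hypothesis, the discounted future terms agree term-by-term, and it remains to show that the immediate rewards agree.

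The heart of the argument, and the step I expect to be the main obstacle (though it turns out to be a short calculation once the reduction is written down carefully), is showing that the belief-based reward $\rho(b)$ in $\mathbf{M}_{\rho}$ matches the expected immediate reward of the reduced policy in $\mathbf{M}_{\mathit{IR}}$. Using the definitions,
\begin{equation}
\sum_{s} b(s) R_{\mathit{IR}}(s, a_p^{*}(b)) = \max_{a_p \in A_{p,\mathit{IR}}} \sum_{s} b(s) \alpha_{\rho}^{a_p}(s) = \max_{\alpha \in \Gamma_{\rho}} \sum_{s} b(s) \alpha(s) = \rho(b),
\end{equation}
where the first equality uses the definition of $a_p^{*}(b)$ together with $R_{\mathit{IR}}(s, a_p) = \alpha_{\rho}^{a_p}(s)$, the second uses the bijection between $A_{p,\mathit{IR}}$ and $\Gamma_{\rho}$ given by the reduction, and the third is the PWLC representation of $\rho$ in a $\rho$POMDP. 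Combining this equality with the inductive hypothesis and the shared transition/observation structure yields $V_{t}^{\mathit{IR}}(b) = V_{t}^{\rho}(b)$ and closes the induction.

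The only subtleties to be careful about are (i) that prediction actions have no effect on the state or observation dynamics in POMDP-IR, which is why the $a_p^{*}(b)$ choice does not perturb the belief updates; and (ii) that the argmax defining $a_p^{*}(b)$ is well-defined (any tie-breaking rule suffices, since ties leave the value of the max unchanged). With these points observed, the inductive argument goes through cleanly.
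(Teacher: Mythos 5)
Your proof is correct and follows essentially the same route as the paper: induction on $t$, using the preservation of transitions/observations to match the future terms and the bijection between prediction actions and the vectors of $\Gamma_{\rho}$ to show $\max_{a_p}\sum_s b(s)R_{\mathit{IR}}(s,a_p)=\rho(b)$. The only cosmetic difference is the base-case convention (you take $V_0=0$ and place the reward-matching calculation in the inductive step, while the paper takes $V_0=\rho(b)$ and does that calculation as the base case), which does not affect the validity of the argument.
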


\begin{proof}
See Appendix.    
\qed
\end{proof}

\begin{definition}
Given a POMDP-IR $\mathbf{M}_{\mathit{IR}}$ = $\langle S,A_{\mathit{IR}}, \Omega, T_{\mathit{IR}}, O_{\mathit{IR}}, R_{\mathit{IR}}, b_{0}, h \rangle$
 the \textsc{reduce-pomdp-IR-$\rho$}($\mathbf{M}_{\mathit{IR}})$ produces a $\rho$POMDP $\mathbf{M}_{\rho} = \langle S,A_{\rho}, \Omega, T_{\rho}, O_{\rho}, \Gamma_{\rho}, b_{0}, h \rangle$  via the following procedure.

\begin{itemize}

\item The set of states, set of observations, initial belief and horizon remain unchanged. Since the set of states remain unchanged, the set of all possible belief is also the same for $\mathbf{M}_{\mathit{IR}}$ and $\mathbf{M}_{\rho}$. 

\item The set of actions in $\mathbf{M}_{\rho}$ is equal to the set of normal actions in $\mathbf{M}_{\mathit{IR}}$, i.e., $A_{\rho} = A_{n,\mathit{IR}}$.

\item The transition and observation functions in $\mathbf{M}_{\rho}$ behave the same as in $\mathbf{M}_{\mathit{IR}}$ for each $\mathbf{a}_{n}$ and ignore the $a_p$, i.e., for all $\mathbf{a} \in A_{\rho}$: $T_{\rho}(s,\mathbf{a},s') = T_{\mathit{IR}}(s,\mathbf{a}_n,s')$ and $O_{\rho}(s',\mathbf{a},\mathbf{z}) = O_{\mathit{IR}}(s', \mathbf{a}_n, \mathbf{z})$ where $\mathbf{a}_n \in A_{n,\mathit{IR}}$ is the action corresponding to $\mathbf{a} \in A_{\rho}$.

\item The $\Gamma_{\rho}$ in $\mathbf{M}_{\rho}$ is defined such that, for each prediction action in $A_{p,\mathit{IR}}$, there is a corresponding $\alpha$ vector in $\Gamma_{\rho}$, i.e.,
$\Gamma_{\rho} = \{\alpha_{\rho}^{a_p}(s) : \alpha_{\rho}^{a_p}(s) = R(s,a_{p}) \mbox{ for each } a_{p} \in A_{p,\mathit{IR}} \}$. Consequently, by definition, $\rho$ is defined as: $\rho(b) = \max_{\alpha_{\rho}^{a_p}}[\sum_{s}b(s)\alpha_{\rho}^{a_p}(s)]$.

\end{itemize}
\end{definition}
\begin{definition}
Given a policy $\pi_{\mathit{IR}} = \langle \mathbf{a}_{n}, a_{p} \rangle$ for a POMDP-IR, $\mathbf{M}_{\mathit{IR}}$, the \textsc{reduce-policy-IR-$\rho$}($\pi_{\mathit{IR}}$) procedure produces a policy $\pi_{\rho}$ for a POMDP-IR as follows.  For all $b$,
\begin{equation}
\pi_{\rho}(b) = \pi_{\mathit{IR}}^{n}(b),
\end{equation}
\end{definition}

\begin{theorem} \label{th:equiv2}
Let $\mathbf{M}_{\mathit{IR}}$ be a POMDP-IR and $\pi_{\mathit{IR}} = \langle \mathbf{a}_{n}, a_p \rangle $ a policy for $\mathbf{M}_{\mathit{IR}}$, such that $a_{p} = \argmax_{a_{p}'}b(s)R(s,a_{p}')$.  Furthermore let $\mathbf{M}_{\rho}$ = \textsc{reduce-pomdp-IR-$\rho$}($\mathbf{M}_{\mathit{IR}})$ and $\pi_{\rho}$ = \textsc{reduce-policy-IR-$\rho$}($\pi_{\mathit{IR}})$. Then, for all $b$,
\begin{equation}
V_{t}^{\rho}(b) = V_{t}^{IR}(b),
\end{equation}
where $V_{t}^{\mathit{IR}}$ is the value of following $\pi_{\mathit{IR}}$ in $\mathbf{M}_{\mathit{IR}}$ and 
 $V_{t}^{\rho}$ is the value of following $\pi_{\rho}$ in $\mathbf{M}_{\rho}$.
\end{theorem}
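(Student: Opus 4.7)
The plan is to mirror the proof of Theorem~\ref{th:equiv1} and proceed by induction on the horizon $t$, exploiting the fact that the reduction was designed precisely so that $\Gamma_\rho$ encodes, as $\alpha$-vectors, exactly the prediction-action reward vectors of $\mathbf{M}_{\mathit{IR}}$. The base case $t=0$ is immediate since $V_0^\rho(b) = V_0^{\mathit{IR}}(b) = 0$ for every belief $b$ by definition of the finite-horizon value function.

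For the inductive step, assume $V_{t-1}^\rho(b') = V_{t-1}^{\mathit{IR}}(b')$ for all beliefs $b'$. I would then show two things at horizon $t$. First, the immediate rewards under the two policies coincide at every $b$. On the POMDP-IR side, $\pi_{\mathit{IR}}(b) = \langle \mathbf{a}_n, a_p\rangle$ with $a_p = \argmax_{a_p'} \sum_s b(s) R_{\mathit{IR}}(s,a_p')$, so the immediate reward is $\max_{a_p'} \sum_s b(s) R_{\mathit{IR}}(s,a_p')$. On the $\rho$POMDP side, by construction of $\Gamma_\rho = \{\alpha_\rho^{a_p} : \alpha_\rho^{a_p}(s) = R_{\mathit{IR}}(s,a_p)\}$, we have $\rho(b) = \max_{\alpha_\rho^{a_p}} \sum_s b(s)\alpha_\rho^{a_p}(s) = \max_{a_p'} \sum_s b(s) R_{\mathit{IR}}(s,a_p')$. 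Hence the two expressions are identical.

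Second, the future value terms match. Since in POMDP-IR the prediction action affects neither transitions nor observations (cf.\ Figure~\ref{fig:ID}), and since $T_\rho, O_\rho$ were defined to mimic $T_{\mathit{IR}}, O_{\mathit{IR}}$ on the normal action $\mathbf{a}_n$, the induced distribution $\Pr(\mathbf{z}\mid \mathbf{a}, b)$ over next observations and the resulting updated belief $b^{\mathbf{a},\mathbf{z}}$ coincide across the two models (where $\mathbf{a}\in A_\rho$ corresponds to $\mathbf{a}_n$ chosen by $\pi_{\mathit{IR}}$, and $\pi_\rho(b) = \mathbf{a}_n$ by \textsc{reduce-policy-IR-$\rho$}). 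Combining this with the inductive hypothesis gives
\begin{equation}
\sum_{\mathbf{z}} \Pr(\mathbf{z}\mid\mathbf{a},b)\, V_{t-1}^\rho(b^{\mathbf{a},\mathbf{z}}) \;=\; \sum_{\mathbf{z}} \Pr(\mathbf{z}\mid\mathbf{a},b)\, V_{t-1}^{\mathit{IR}}(b^{\mathbf{a},\mathbf{z}}). \nonumber
\end{equation}
Adding the matching immediate rewards yields $V_t^\rho(b) = V_t^{\mathit{IR}}(b)$, completing the induction.

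The only subtle point, and the step I expect to require the most care, is the equality of the immediate rewards: it hinges on the assumption stated in the theorem that $a_p$ is chosen greedily by $\pi_{\mathit{IR}}$, because $\rho(b)$ in the reduced $\rho$POMDP is, by construction, the pointwise maximum over all prediction-reward vectors. If $\pi_{\mathit{IR}}$ selected a non-greedy $a_p$, the two immediate rewards would diverge and the equivalence would fail; the clean argument above depends entirely on this max/argmax alignment between the definition of $\Gamma_\rho$ and the reduced policy. Everything else, namely the independence of transitions/observations from $a_p$ and the matching of $T_\rho, O_\rho$ with $T_{\mathit{IR}}, O_{\mathit{IR}}$, follows directly from the reduction definition and the structural properties of POMDP-IR highlighted in Section~\ref{sec:pomdp-ir}.
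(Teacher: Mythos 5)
Your proposal is correct and follows essentially the same route as the paper's proof: induction on $t$, with the immediate rewards matched via the identity $\rho(b)=\max_{a_p}\sum_s b(s)R_{\mathit{IR}}(s,a_p)$ (which holds precisely because $\Gamma_\rho$ collects the prediction-reward vectors and $\pi_{\mathit{IR}}$ picks $a_p$ greedily), and the future terms matched via the shared transition/observation functions and the inductive hypothesis. The only cosmetic difference is the base case: the paper's appendix takes $V_0(b)=\rho(b)$ rather than $0$, but either convention makes the base case trivial and leaves the inductive step unchanged.
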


\begin{proof}
See Appendix.
\qed
\end{proof}

The main implication of these theorems is that any result that holds for either $\rho$POMDP or POMDP-IR also holds for the other framework. For example, the results presented in Theorem 4.3 in \cite{Mauricio} that bound the error in the value function of $\rho$POMDP also hold for POMDP-IR. Furthermore, with this equivalence, the computational complexity of solving $\rho$POMDP and POMDP-IR comes out to be the same, since POMDP-IR can be converted into $\rho$POMDP (and vice-versa) trivially, without any significant blow-up in representation. Although we have proved the equivalence of $\rho$POMDP and POMDP-IR only for pure active perception task where the reward is solely conditioned on the belief, it is straightforward to extend it to hybrid active perception tasks, where the reward is conditioned both on belief and the state. Although, the resulting active perception POMDP for dynamic sensor selection is such that the action does not affect the state, the results from this section do not use that property at all and thus are valid for active perception POMDPs where an agent might take an action which can affect the state in the next time step.

\section{Decomposed Maximization for POMDP-IR} \label{sec:decompMax}
The POMDP-IR framework enables us to formulate uncertainty as an objective, but it does so at the cost of additional computations, as adding prediction actions enlarges the action space. The computational complexity of performing a point-based backup for solving POMDP-IR is $\mathcal{O}(|S|^2|A_n||A_p||\Omega||\Gamma_{t-1}|) + \mathcal{O}(|S||B||A_n||\Gamma_{t-1}||\Omega||A_p|)$. In this section, we present a new technique that exploits the independence properties of POMDP-IR, mainly that the transition function and the observation function are independent of the prediction actions, to reduce the computational costs. We also show that the same principle is applicable to $\rho$POMDPs.

The increased computational cost of solving POMDP-IR arises from the size of the action space, $|A_n||A_p|$. However, as shown in Figure \ref{fig:ID}, prediction actions only affect the reward function and normal actions only affect the observation and transition function. We exploit this independence to decompose the maximization in the Bellman optimality equation:
\begin{equation}
\begin{split}
V_{t}^{*}(b) &= \max_{\langle \mathbf{a}_n, a_p \rangle \in A} \Big[\sum_{s}b(s)R(s,a_p) \\ & \hspace{30mm }+ \sum_{\mathbf{z} \in \Omega}\Pr(\mathbf {z}|\mathbf{a}_n,b)V^{*}_{t-1}(b^{\mathbf{a}_n,\mathbf{z}})\Big]  \\
 &= \max_{a_p \in A_p} \sum_{s}b(s)R(s,a_p) \\ & \hspace{20mm}+ \max_{\mathbf{a}_n \in A_n} \sum_{\mathbf{z} \in \Omega}\Pr(\mathbf {z}|\mathbf{a}_n,b)V^{*}_{t-1}(b^{\mathbf{a}_n,\mathbf{z}})  \nonumber
\end{split}
\end{equation}

These decomposition can be exploited by point-based methods by computing $\Gamma_{t}^{a,z}$ only for normal actions, $\mathbf{a}_n$ and $\alpha^{a_p}$ only for prediction actions. That is, \eqref{eq:gamma-az} can be changed to: 
\begin{equation}
\Gamma_{t}^{\mathbf{a}_n,\mathbf{z}} = \{ \alpha_{i}^{\mathbf{a}_n,\mathbf{z}} : \alpha_{i} \in \Gamma_{t-1}\}.
\end{equation}
For each prediction action, we compute the vector specifying the immediate reward for performing the prediction action in each state: $\Gamma^{A_p} = \{\alpha^{a_p}\}$,
where $\alpha^{a_p}(s) = R(s,a_p)$ $\forall \ a_p \in A_p$.
The next step is to modify \eqref{eq:pbvistep} to separately compute the vectors maximizing expected reward induced by prediction actions and the expected return induced by the normal action: 

\begin{equation}
\begin{split}
\alpha^{\mathbf{a}_n}_{b} &= \argmax_{\alpha^{a_p} \in \Gamma^{A_p}} \sum_{s} b(s) \alpha^{a_p}(s) \\ & \hspace{20mm}+ \sum_{\mathbf{z}} \argmax_{\alpha^{\mathbf{a}_n,\mathbf{z}} \in \Gamma_{t}^{\mathbf{a}_n,\mathbf{z}}} \sum_{s} \alpha^{\mathbf{a}_n, \mathbf{z}}(s) b(s). \nonumber
\end{split}
\end{equation}

By decomposing the maximization, this approach avoids iterating over all $|A_n||A_p|$ joint actions. At each timestep $t$, this approach generates $|A_n||\Omega||\Gamma_{t-1}| + |A_p|$ backprojections in $\mathcal{O}(|S|^{2}|A_n||\Omega||\Gamma_{t-1}| + |S||A_p|)$ time and then prunes them to $|B|$ vectors, with a computational complexity of $\mathcal{O}(|S||B|(|A_p| + |A_n||\Gamma_{t-1}||\Omega|))$.

The same principle can be applied to $\rho$POMDP by changing \eqref{eq:pbvistep} such that it maximizes over immediate reward independently from the future return:
\begin{equation}
\begin{split}
\alpha^{\mathbf{a}}_{b} &= \argmax_{\alpha^{\rho} \in \Gamma_{\rho}}\sum_{s}b(s)\alpha_{\rho}^{a_p}(s)  \\ & \hspace{20mm}+ \sum_{\textbf{z}} \argmax_{\alpha^{\mathbf{a},\textbf{z}} \in \Gamma_{t}^{\mathbf{a},\textbf{z}}} \sum_{s} \alpha^{\mathbf{a},\textbf{z}}(s) b(s). \nonumber
\end{split}
\end{equation}
The computational complexity of solving $\rho$POMDP with this approach is $\mathcal{O}(|S|^{2}|A||\Omega||\Gamma_{t-1}| + |S||\Gamma_{\rho}|) + \mathcal{O}(|S||B|(|\Gamma_{\rho}| + |A||\Gamma_{t-1}||\Omega|)$. Thus, even though both POMDP-IR and $\rho$POMDP use extra actions or vectors to formulate belief-based rewards, they can both be solved at only minimal additional computational cost.

\section{Greedy PBVI} \label{sec:gpbvi}
The previous sections allow us to model the active perception task efficiently, such that the PWLC property of the value function is maintained. Thus, we can now directly employ traditional POMDP solvers that exploit this property to compute the optimal value function $V^{*}_{t}$.While point-based methods scale better in the size of the state space, they are still not practical for our needs as they do not scale in the size of the normal action space of active perception POMDPs. 

While the computational complexity of one iteration of PBVI is linear in the size of the action space $|A|$ of a POMDP, for an active perception POMDP, the action space is modeled as selecting $K$ out of the $N$ available sensors, yielding $|A|$ = $\binom{N}{K}$. For fixed $K$, as the number of sensors $N$ grows, the size of the action space and the computational cost of PBVI grows exponentially with it, making use of traditional POMDP solvers infeasible for solving active perception POMDPs. 

In this section, we propose \emph{greedy PBVI}, a new point-based planner for solving active perception POMDPs which scales much better in the size of the action space. To facilitate the explication of greedy PBVI, we now present the final step of PBVI, described earlier in \eqref{eq:pbvi2ndstep} and \eqref{eq:pbvifinstep}, in a different way. For each $b \in B$, and $\mathfrak{a} \in A$, we must find the best $\alpha^{\mathfrak{a}}_{b} \in \Gamma^{\mathfrak{a}}_{t}$.
\begin{equation} 
\alpha^{\mathfrak{a},*}_{b} = \argmax_{\alpha^{\mathfrak{a}}_{b} \in \Gamma^{\mathfrak{a}}_{t}} \sum_{s} \alpha^{\mathfrak{a}}_{b}(s)b(s),
\end{equation}
and simultaneously record its value $Q(b,\mathfrak{a}) = \sum_{s}\alpha^{\mathfrak{a},*}_{b}b(s)$. Then, for each $b$ we find the best vector across all actions: $\alpha_{b} = \alpha^{\mathfrak{a}^{*}}_{b}$, where
\begin{equation} \label{eq:pbvistep2}
\mathfrak{a}^{*} = \argmax_{\mathfrak{a} \in A} Q(b,\mathfrak{a}).
\end{equation}

The main idea of greedy PBVI is to exploit \emph{greedy maximization} \citep{Nemhauser}, an algorithm that operates on a set function $Q : 2^{X} \rightarrow \mathbb{R}$. Greedy maximization is much faster than full maximization as it avoids going over the $\binom{N}{K}$ choices and instead constructs a subset of $K$ elements iteratively. Thus, we replace the maximization operator in the Bellman optimality equation with greedy maximization. Algorithm~\ref{GreedyAlg} shows the argmax variant, which constructs a subset $Y \subseteq X$ of size $K$ by iteratively adding elements of $X$ to $Y$.  At each iteration, it adds the element that maximally increases \emph{marginal gain} $\Delta_{Q}(e|\mathfrak{a})$ of adding a sensor $e$ to a subset of sensors $\mathfrak{a}$:
\begin{equation}
\Delta_{Q}(e|\mathfrak{a}) = Q(b,e \cup \mathfrak{a}) - Q(b,\mathfrak{a}).
\end{equation}

\begin{algorithm}
\caption{$\mathtt{greedy}\hbox{-}\mathtt{argmax}(Q,X,K)$}\label{GreedyAlg}
\begin{algorithmic}
\State $Y \gets \emptyset$
\For{$m = 1 \ to \ K$}
\State $Y \gets Y \cup \{\argmax_{e \in X \setminus Y} \Delta_{Q}(e|Y) \}$
\EndFor
\State return $Y$
\end{algorithmic}
\end{algorithm}

To exploit greedy maximization in PBVI, we need to replace an argmax over $A$ with $\mathtt{greedy}\hbox{-}\mathtt{argmax}$. Our alternative description of PBVI above makes this straightforward: (\ref{eq:pbvistep2}) contains such an argmax and $Q(b,.)$ has been intentionally formulated to be a set function over $A^{+}$. Thus, implementing greedy PBVI requires only replacing (\ref{eq:pbvistep2}) with: 
\begin{equation}
\mathfrak{a}^{G} = \mathtt{greedy}\hbox{-}\mathtt{argmax}(Q(b,\cdot),A^{+},K).
\end{equation}

Since the complexity of $\mathtt{greedy}\hbox{-}\mathtt{argmax}$ is only $\mathcal{O}(|N||K|)$, the complexity of greedy PBVI is only $\mathcal{O}(|S||B||N||K||\Gamma_{t-1}|)$ (as compared to $\mathcal{O}(|S||B|\binom{n}{k})$ for traditional PBVI for computing $\Gamma_{t}^{\mathbf{a}}$).

Using point-based methods as a starting point is essential to our approach. Algorithms like Monahan's enumeration algorithm \citep{monahan1982} that rely on pruning operations to compute $V^{*}$ instead of performing an explicit argmax, cannot directly use $\mathtt{greedy}\hbox{-}\mathtt{argmax}$. Thus, it is precisely because PBVI operates on a finite set of beliefs that an explicit argmax is performed, opening the door to using $\mathtt{greedy}\hbox{-}\mathtt{argmax}$ instead. 

\subsection{Bounds given submodular value function} \label{sec:BoundSubmodVal}

In the following subsections, we present the highlights of the theoretical guarantees associated with greedy PBVI. The detailed analysis can be found in the appendix. Specifically, we show that a value function computed by greedy PBVI is guaranteed to have bounded error with respect to the optimal value function under \emph{submodularity}, a property of set functions that formalizes the notion of diminishing returns. Then, we establish the conditions under which the value function of a POMDP is guaranteed to be submodular. We define $\rho(b)$ as negative belief entropy, $\rho(b) = -H_{b}(s)$ to establish the submodularity of value function. Both $\rho$POMDP and POMDP-IR approximate $\rho(b)$ with tangents. Thus, in the last subsection, we show that even if belief entropy is approximated using tangents, the value function computed by greedy PBVI is guaranteed to have bounded error with respect to the optimal value function.

Submodularity is a property of set functions that corresponds to diminishing returns, i.e., adding an element to a set increases the value of the set function by a smaller or equal amount than adding that same element to a subset. In our notation, this is formalized as follows. Given a policy $\pi$, the set function $Q^{\pi}_{t}(b,\mathfrak{a})$ is submodular in $\mathfrak{a}$, if for every $\mathfrak{a}_{M} \subseteq \mathfrak{a}_{N} \subseteq A^+$  and $a_{e} \in A^+ \setminus \mathfrak{a}_{N}$,
\begin{equation}
\Delta_{Q_{b}}(a_{e}|\mathfrak{a}_{M}) \geq \Delta_{Q_{b}}(a_{e}|\mathfrak{a}_{N}),
\end{equation}

Equivalently, $Q^{\pi}_{t}(b,\mathfrak{a})$ is submodular if for every $\mathfrak{a}_{M}, \mathfrak{a}_{N} \subseteq A^+$,
\begin{equation}
Q^{\pi}_{t}(b,\mathfrak{a}_{M} \cap \mathfrak{a}_{N}) + Q^{\pi}_{t}(b,\mathfrak{a}_{M} \cup \mathfrak{a}_{N}) \leq Q^{\pi}_{t}(b,\mathfrak{a}_{M}) + Q^{\pi}_{t}(b,\mathfrak{a}_{N}).  \nonumber
\end{equation}

Submodularity is an important property because of the following result:
\begin{theorem} \label{theorem1}
\citep{Nemhauser} If $Q^{\pi}_{t}(b,\mathfrak{a})$ is non-negative, monotone and  submodular in $\mathfrak{a}$, then for all $b$,
\begin{equation}\label{greedyGuarantee}
Q^{\pi}_{t}(b, \mathfrak{a}^{G}) \geq (1-e^{-1})Q^{\pi}_{t}(b, \mathfrak{a}^{*}),
\end{equation}
where $\mathfrak{a}^{G} = \mathtt{greedy}\hbox{-}\mathtt{argmax}(Q^\pi_t(b,\cdot),A^+,K)$ and $\mathfrak{a}^{*} = \argmax_{\mathfrak{a} \in A} Q^\pi_t(b,\mathfrak{a})$.
\end{theorem}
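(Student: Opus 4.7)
The plan is to follow the classical Nemhauser--Wolsey--Fisher argument: bound the gap between greedy and optimal by analyzing marginal gains at each greedy step, then derive a one-step recursion that contracts the gap by a factor of $(1-1/K)$, and finally unroll the recursion over the $K$ greedy steps. All three hypotheses (non-negativity, monotonicity, submodularity) will be invoked, each at a specific point.

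First I would introduce notation: let $\mathfrak{a}^{*} = \{a_{1}^{*}, \ldots, a_{K}^{*}\}$ be the optimal subset, and let $\mathfrak{a}^{G}_{i}$ denote the greedy set after $i$ iterations of Algorithm~\ref{GreedyAlg}, so that $\mathfrak{a}^{G}_{0} = \emptyset$ and $\mathfrak{a}^{G}_{K} = \mathfrak{a}^{G}$. The key chain of inequalities is
\begin{align*}
Q^{\pi}_{t}(b, \mathfrak{a}^{*}) &\leq Q^{\pi}_{t}(b, \mathfrak{a}^{G}_{i} \cup \mathfrak{a}^{*}) \\
&= Q^{\pi}_{t}(b, \mathfrak{a}^{G}_{i}) + \sum_{j=1}^{K} \Delta_{Q_{b}}\!\bigl(a_{j}^{*} \,\big|\, \mathfrak{a}^{G}_{i} \cup \{a_{1}^{*}, \ldots, a_{j-1}^{*}\}\bigr),
\end{align*}
where the first inequality uses monotonicity (since $\mathfrak{a}^{*} \subseteq \mathfrak{a}^{G}_{i} \cup \mathfrak{a}^{*}$) and the second line is a telescoping identity for the marginal gains.

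Next I would apply submodularity to drop the extra conditioning, obtaining $\Delta_{Q_{b}}(a_{j}^{*} \mid \mathfrak{a}^{G}_{i} \cup \{a_{1}^{*}, \ldots, a_{j-1}^{*}\}) \leq \Delta_{Q_{b}}(a_{j}^{*} \mid \mathfrak{a}^{G}_{i})$, and then use the greedy selection rule: at iteration $i+1$ greedy picks the element of maximum marginal gain, so each term is bounded by $Q^{\pi}_{t}(b, \mathfrak{a}^{G}_{i+1}) - Q^{\pi}_{t}(b, \mathfrak{a}^{G}_{i})$. Summing over $j$ yields
\begin{equation*}
Q^{\pi}_{t}(b, \mathfrak{a}^{*}) - Q^{\pi}_{t}(b, \mathfrak{a}^{G}_{i}) \;\leq\; K\bigl[Q^{\pi}_{t}(b, \mathfrak{a}^{G}_{i+1}) - Q^{\pi}_{t}(b, \mathfrak{a}^{G}_{i})\bigr].
\end{equation*}

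Finally, setting $\delta_{i} := Q^{\pi}_{t}(b, \mathfrak{a}^{*}) - Q^{\pi}_{t}(b, \mathfrak{a}^{G}_{i})$, the above rearranges to $\delta_{i+1} \leq (1 - 1/K)\,\delta_{i}$. Iterating gives $\delta_{K} \leq (1-1/K)^{K}\,\delta_{0} \leq e^{-1}\delta_{0}$, and non-negativity of $Q^{\pi}_{t}$ at $\emptyset$ lets me replace $\delta_{0} = Q^{\pi}_{t}(b,\mathfrak{a}^{*}) - Q^{\pi}_{t}(b,\emptyset)$ by $Q^{\pi}_{t}(b,\mathfrak{a}^{*})$, delivering \eqref{greedyGuarantee}. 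Since this is a classical result, there is no deep obstacle; the only thing to be careful about is tracking which hypothesis is used at which step, and in particular that the $(1-1/K)^{K} \leq e^{-1}$ step is uniform in $K$, so the bound holds for every $K \geq 1$.
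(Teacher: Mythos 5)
Your proof is correct and is exactly the classical Nemhauser--Wolsey--Fisher argument: the paper itself does not reprove this result (it cites \citealp{Nemhauser}), but the same telescoping-sum/contraction argument appears, generalized to $\epsilon$-submodularity, in the paper's Lemma~\ref{oneappG}, and your version is that argument with $\epsilon = 0$. The only cosmetic point is that since $A = \{\mathfrak{a} : |\mathfrak{a}| \leq K\}$ the optimal set may have fewer than $K$ elements, but the bound $\delta_i \leq K(\delta_i - \delta_{i+1})$ still holds because each greedy increment is non-negative by monotonicity.
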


Theorem \ref{theorem1} gives a bound only for a single application of $\mathtt{greedy}\hbox{-}\mathtt{argmax}$, not for applying it within each backup, as greedy PBVI does.  
In this subsection, we establish such a bound. Let the \emph{greedy Bellman operator} $\mathfrak{B}^{G}$ be:
\begin{equation*}
(\mathfrak{B}^{G}V_{t-1}^{\pi})(b)  =  \max^{G}_{\mathfrak{a}}[ \rho(b,\mathfrak{a}) +  \gamma \sum_{\mathbf{z}\in\Omega} \Pr(\mathbf{z}|\mathfrak{a},b)V_{t-1}^{\pi}(b^{\mathfrak{a},\mathbf{z}})],
\end{equation*}
where $\max^{G}_{\mathfrak{a}}$ refers to greedy maximization.  This immediately implies the following corollary to Theorem~\ref{theorem1}:

\begin{corollary}
\label{Lemma1}
Given any policy $\pi$, if $Q^{\pi}_{t}(b,\mathfrak{a})$ is non-negative, monotone, and submodular in $\mathfrak{a}$, then for all~$b$,
\begin{equation}
(\mathfrak{B}^{G}V^{\pi}_{t-1})(b) \geq (1 - e^{-1}) (\mathfrak{B}^{*}V^{\pi}_{t-1})(b).
\end{equation}
\end{corollary}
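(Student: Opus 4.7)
The plan is to recognize this corollary as a direct pointwise application of Theorem~\ref{theorem1}. Fix an arbitrary belief $b$. First I would unfold both operators in terms of the action-value function from equation~\eqref{eq:q-def}, obtaining
\begin{equation*}
(\mathfrak{B}^{*}V^{\pi}_{t-1})(b) = \max_{\mathfrak{a} \in A} Q^{\pi}_{t}(b,\mathfrak{a}), \qquad (\mathfrak{B}^{G}V^{\pi}_{t-1})(b) = Q^{\pi}_{t}(b,\mathfrak{a}^{G}),
\end{equation*}
where $\mathfrak{a}^{G} = \mathtt{greedy}\hbox{-}\mathtt{argmax}(Q^{\pi}_{t}(b,\cdot),\,A^{+},\,K)$ and the standard maximum ranges over all $\mathfrak{a} \subseteq A^{+}$ of size at most $K$, exactly as in Theorem~\ref{theorem1}.

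Next, the hypothesis states that for this fixed $b$ the set function $\mathfrak{a} \mapsto Q^{\pi}_{t}(b,\mathfrak{a})$ is non-negative, monotone, and submodular, so Theorem~\ref{theorem1} applies directly and gives $Q^{\pi}_{t}(b,\mathfrak{a}^{G}) \geq (1-e^{-1})\,Q^{\pi}_{t}(b,\mathfrak{a}^{*})$, where $\mathfrak{a}^{*} = \argmax_{\mathfrak{a} \in A} Q^{\pi}_{t}(b,\mathfrak{a})$. Substituting the two operator identities from the previous step then yields $(\mathfrak{B}^{G}V^{\pi}_{t-1})(b) \geq (1-e^{-1})(\mathfrak{B}^{*}V^{\pi}_{t-1})(b)$. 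Since $b$ was arbitrary, the inequality holds pointwise on the belief simplex, which is the claim.

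There is essentially no hard step here: the entire content of the corollary is the observation that the bracketed expression defining each Bellman operator is, by definition, $Q^{\pi}_{t}(b,\mathfrak{a})$, so the greedy-versus-optimal comparison for the operators reduces immediately to the Nemhauser guarantee for the induced set function. The only minor points to verify are notational — the action symbols $a$ and $\mathfrak{a}$ refer to the same subset-valued objects in this active-perception setting, and the discount factor $\gamma$ that appears inside $\mathfrak{B}^{G}$ (and is implicitly present in $\mathfrak{B}^{*}$ and in the definition of $Q^{\pi}_{t}$) is common to both sides and so does not affect the multiplicative bound.
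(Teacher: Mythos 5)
Your proposal is correct and matches the paper's own proof, which likewise just observes that $(\mathfrak{B}^{G}V^{\pi}_{t-1})(b) = Q^{\pi}_{t}(b,\mathfrak{a}^{G})$ and $(\mathfrak{B}^{*}V^{\pi}_{t-1})(b) = Q^{\pi}_{t}(b,\mathfrak{a}^{*})$ and then invokes Theorem~\ref{theorem1}. The extra notational remarks you add are harmless but not needed.
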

\begin{proof} 
From Theorem \ref{theorem1} since $(\mathfrak{B}^{G}V^{\pi}_{t-1})(b) = Q^\pi_t(b,\mathfrak{a}^G)$ and $(\mathfrak{B}^{*}V^{\pi}_{t-1})(b) = Q^\pi_t(b,\mathfrak{a}^*)$. 
\qed
\end{proof}

Next, we define the \emph{greedy Bellman equation}: $V^{G}_{t}(b) = (\mathfrak{B}^{G}V^{G}_{t-1})(b)$, where $V^{G}_{0} = \rho(b)$. Note that $V^{G}_{t}$ is the true value function obtained by greedy maximization, without any point-based approximations. 
Using Corollary \ref{Lemma1}, we can bound the error of $V^G$ with respect to $V^*$.
\begin{theorem} \label{th:bound}
If for all policies $\pi$, $Q^{\pi}_{t}(b,\mathfrak{a})$ is non-negative, monotone and submodular in $\mathfrak{a}$, then for all~$b$,
\begin{equation} \label{eq:bound}
V^{G}_{t}(b) \geq (1 - e^{-1})^{2t}V^{*}_{t}(b).
\end{equation}
\end{theorem}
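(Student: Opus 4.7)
The plan is to proceed by induction on the horizon $t$. The base case is essentially vacuous: for $t=0$ both $V^G_0(b)$ and $V^*_0(b)$ reduce to the boundary value, so the desired inequality holds with $(1-e^{-1})^0=1$. The inductive hypothesis I will carry is that $V^G_{t-1}(b) \geq (1-e^{-1})^{2(t-1)} V^*_{t-1}(b)$ for every $b$, and the goal at the inductive step is to obtain the analogous bound at horizon $t$.

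For the inductive step I will chain two inequalities. First, I apply Corollary \ref{Lemma1} to the (non-stationary) greedy policy $\pi^G$ whose $(t-1)$-step value function is exactly $V^G_{t-1}$; the theorem's blanket hypothesis that $Q^\pi_t$ is non-negative, monotone, and submodular for every $\pi$ supplies the corollary's precondition and yields $V^G_t(b) = (\mathfrak{B}^G V^G_{t-1})(b) \geq (1-e^{-1})(\mathfrak{B}^* V^G_{t-1})(b)$. Second, because $\mathfrak{B}^*$ is monotone in its value-function argument (pointwise domination is preserved under the expectation over $\mathbf{z}$, the max over $\mathfrak{a}$, and the addition of $\rho(b,\mathfrak{a})$), the inductive hypothesis gives $(\mathfrak{B}^* V^G_{t-1})(b) \geq (\mathfrak{B}^*((1-e^{-1})^{2(t-1)} V^*_{t-1}))(b)$.

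The step I expect to be the main obstacle is closing the gap between $\mathfrak{B}^*(cV^*_{t-1})$ and $c\,\mathfrak{B}^*V^*_{t-1}$ with $c=(1-e^{-1})^{2(t-1)}$: the operator $\mathfrak{B}^*$ is not homogeneous, since the immediate reward $\rho(b,\mathfrak{a})$ does not scale with $c$. To handle this I will invoke the non-negativity assumption on $Q^\pi$, which in particular forces $\rho \geq 0$ and hence, for any $c \in [0,1]$, $\rho \geq c\rho$; this implies $\mathfrak{B}^*(cV) \geq c\,\mathfrak{B}^*V$ pointwise. Chaining the two inequalities then yields $V^G_t(b) \geq (1-e^{-1})^{2t-1}V^*_t(b) \geq (1-e^{-1})^{2t}V^*_t(b)$, where the final slack uses $V^*_t \geq 0$ and $(1-e^{-1}) \in (0,1)$. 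A minor bookkeeping point I should not gloss over is verifying that $V^G_{t-1}$ really is the value function of a policy (the greedy non-stationary $\pi^G$ defined recursively through $\mathfrak{B}^G$), which is what licenses applying Corollary \ref{Lemma1} at step $t$ under the stated hypothesis.
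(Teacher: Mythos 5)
Your proof is correct, but it takes a genuinely different route through the inductive step than the paper does. The paper chains $\mathfrak{B}^{G}V^{G}_{t-1} \to \mathfrak{B}^{G}V^{*}_{t-1} \to \mathfrak{B}^{*}V^{*}_{t-1}$: its Lemma~\ref{eoptimal2} first compares the greedy backup of $V^{G}_{t-1}$ to the greedy backup of $V^{*}_{t-1}$ (which costs one Nemhauser factor, because the greedy argmax of $Q^{\pi}$ need not coincide with that of $Q^{*}$, forcing a detour through the full argmax), and then Corollary~\ref{Lemma1} applied to $V^{*}_{t-1}$ costs a second factor. You instead chain $\mathfrak{B}^{G}V^{G}_{t-1} \to \mathfrak{B}^{*}V^{G}_{t-1} \to \mathfrak{B}^{*}V^{*}_{t-1}$: Corollary~\ref{Lemma1} applied to the greedy policy itself costs the single Nemhauser factor, and the second link is free by monotonicity of $\mathfrak{B}^{*}$ together with $\mathfrak{B}^{*}(cV) \geq c\,\mathfrak{B}^{*}V$ for $c\in[0,1]$ (which does need $\rho \geq 0$, as you note). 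Your bookkeeping concerns are both handled correctly: $V^{G}_{t-1}$ is the value function of the non-stationary greedy policy, so the blanket hypothesis covers it, and the super-homogeneity step is exactly where non-negativity of the reward enters (the paper's Lemma~\ref{eoptimal2} uses it at the analogous point). The notable payoff of your route is that it loses only one factor of $(1-e^{-1})$ per timestep rather than two: carried through the induction with the tighter exponent, your argument actually proves $V^{G}_{t}(b) \geq (1-e^{-1})^{t}V^{*}_{t}(b)$, strictly stronger than the stated bound, which you then deliberately relax to $(1-e^{-1})^{2t}$ using $V^{*}_{t}\geq 0$. So this is not merely an alternative proof; it is a sharper one that also dispenses with the paper's Lemma~\ref{eoptimal2} in favor of elementary properties of the Bellman optimality operator.
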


\begin{proof}
See Appendix. 
\end{proof}

Theorem \ref{th:bound} extends Nemhauser's result to a full sequential decision making setting where multiple application of greedy maximization are employed over multiple time steps. This theorem gives a theoretical guarantee on the performance of greedy PBVI. Given a POMDP with a submodular value function, greedy PBVI is guaranteed to have bounded error with respect to the optimal value function. Moreover, this performance comes at a computational cost that is much less than that of solving the same POMDP with traditional solvers. Thus, greedy PBVI scales much better in the size of the action space of active perception POMDPs, while still retaining bounded error.

The results presented in this subsection are applicable only if the value function for a POMDP is submodular. In the following subsections, we establish the submodularity of value function for active perception POMDP under certain conditions.

\subsection{Submodularity of value functions}

The previous subsection showed that the value function computed by greedy PBVI is guaranteed to have bounded error as long as it is non-negative, monotone and submodular. In this subsection, we establish sufficient conditions for these properties to hold. Specifically, we show that, if the belief-based reward is negative entropy, i.e., $\rho(b) = -H_{b}(s) + \log(\frac{1}{|S|})$ then under certain conditions $Q^{\pi}_{t}(b,\mathfrak{a})$ is submodular, non-negative and monotone as required by Theorem \ref{th:bound}. We point out that the second part, $\log(\frac{1}{|S|})$ is only required (and sufficient) to guarantee non-negativity, but is independent of the actual beliefs or actions. For the sake of conciseness, in the remainder of this paper we will omit this term.

We start by observing that
$Q^{\pi}_{t}(b,\mathfrak{a}) = \rho(b) + \sum_{k = 1}^{t-1} G^{\pi}_{k}(b^{t},\mathfrak{a}^{t})$, 
where $G^{\pi}_{k}(b^{t},\mathfrak{a}^{t})$ is the expected immediate reward with $k$ steps to go, conditioned on the belief and action with $t$ steps to go and assuming policy $\pi$ is followed after timestep $t$:
\begin{equation*}
G^{\pi}_{k}(b^{t},\mathfrak{a}^{t}) = \gamma^{t - k}\sum_{\mathbf{z}^{t:k}}Pr(\mathbf{z}^{t:k}|b^{t},\mathfrak{a}^{t},\pi)(-H_{b^{k}}(s^{k})),
\end{equation*}
where $\mathbf{z}^{t:k}$ is a vector of observations received in the interval from $t$ steps to go to $k$ steps to go, $b^{t}$ is the belief at $t$ steps to go, $\mathfrak{a}^{t}$ is the action taken at $t$ steps to go, and $\rho(b^{k}) = -H_{b^k}(s^{k})$, where $s^{k}$ is the state at $k$ steps to go.
To show that $Q^{\pi}_{t}(b,\mathfrak{a})$ is submodular the main condition is \emph{conditional independence} as defined below:
\begin{definition}
The observation set $\mathfrak{z}$ is conditionally independent given $s$ if any pair of observation features are conditionally independent given the state, i.e.,
\begin{equation} \label{CondEntSub}
\Pr(z_i,z_j|s) = \Pr(z_i|s)\Pr(z_j|s), \quad \forall z_i,z_j \in \mathfrak{z}.
\end{equation}
\end{definition}

Using above definition, the submodularity of $Q(b,\mathfrak{a})$ can be established as: 

\begin{theorem} \label{th:submod}
If $\mathfrak{z}^{t:k}$ is conditionally independent given ${s}^{k}$ and $\rho(b) = - H_b(s)$, then $Q^{\pi}_{t}(b,\mathfrak{a})$ is submodular in $\mathfrak{a}$, for all $\pi$.
\end{theorem}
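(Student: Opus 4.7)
The plan is to build on the decomposition already set up earlier in the paper, namely
\begin{equation*}
Q^{\pi}_t(b,\mathfrak{a}) = \rho(b) + \sum_{k=1}^{t-1} G^{\pi}_{k}(b,\mathfrak{a}).
\end{equation*}
Since $\rho(b)$ does not depend on $\mathfrak{a}$, it is trivially submodular in $\mathfrak{a}$ (it is, in fact, constant), and a non-negative sum of submodular functions is submodular. Hence it suffices to prove that, for every $k \in \{1,\dots,t-1\}$, the function $\mathfrak{a} \mapsto G^{\pi}_{k}(b,\mathfrak{a})$ is submodular.

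My next step would be to rewrite $G^{\pi}_{k}$ in information-theoretic form. Using $b^k(s^k) = \Pr(s^k \mid \mathbf{z}^{t:k}, b, \mathfrak{a}, \pi)$ together with the identity $\rho(b^k) = -H_{b^k}(s^k)$ and the definition of expected conditional entropy, a direct calculation gives
\begin{equation*}
G^{\pi}_{k}(b,\mathfrak{a}) = -\gamma^{\,t-k}\, H\!\left(S^{k} \,\big|\, \mathbf{Z}^{t:k};\, b,\mathfrak{a},\pi\right),
\end{equation*}
so $G^{\pi}_{k}$ equals, up to a positive multiplicative constant and an additive term constant in $\mathfrak{a}$, the information gain $I(S^k;\mathbf{Z}^{t:k})$ about the state at step $k$ induced by playing $\mathfrak{a}$ at time $t$ and following $\pi$ thereafter. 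Submodularity of $G^{\pi}_{k}$ in $\mathfrak{a}$ is therefore equivalent to submodularity of this information gain in $\mathfrak{a}$.

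I would then invoke the classical information-theoretic result (see, e.g., Krause and Guestrin) that whenever a collection $\{Z_i\}$ is conditionally independent given $S$, the set function $A \mapsto I(S; \{Z_i\}_{i\in A})$ is monotone and submodular. The theorem's hypothesis that $\mathfrak{z}^{t:k}$ is conditionally independent given $s^{k}$ is precisely the condition needed to apply this result to the bundle $\mathbf{Z}^{t:k}$ with respect to $S^k$, yielding submodularity in the index set of selected sensors.

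The main obstacle, in my view, is the indirect dependence of $\mathbf{Z}^{t:k}$ on $\mathfrak{a}$ through the policy $\pi$: enlarging $\mathfrak{a}^{t}$ changes $\mathbf{z}^{t}$ directly, but through the belief update it also shifts $\pi(b^{t-1})$ and therefore which sensors are active at every subsequent step down to $k$. I would handle this by exploiting the conditional-independence assumption: given $S^{k}$, the sensor readings selected at time $t$ are independent of those collected later, so the marginal gain of adding a sensor $e$ to $\mathfrak{a}^{t}$ reduces to a statement about $e$'s own observation channel about $S^{k}$, independent of the other selections at time $t$ and of the policy-driven future observations. Formalising this reduction carefully, and verifying that policy-induced reselection downstream does not spoil the submodular structure, is where the technical bulk of the proof will sit.
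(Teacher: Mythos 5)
Your proposal follows essentially the same route as the paper: decompose $Q^{\pi}_t(b,\mathfrak{a})$ into $\rho(b)$ plus the sum of the $G^{\pi}_{k}$ terms, identify each $G^{\pi}_{k}$ with a discounted negative conditional entropy of $s^{k}$ given $\mathbf{z}^{t:k}$, and conclude via submodularity of conditional entropy (equivalently, information gain) under conditional independence, closing with the fact that a positively weighted sum of submodular functions is submodular --- the only cosmetic difference being that the paper proves the entropy lemma directly from Bayes' rule for conditional entropy rather than citing the Krause--Guestrin result. The policy-induced dependence of the downstream observations on $\mathfrak{a}^{t}$ that you flag as the main obstacle is a genuine subtlety, but the paper's proof does not treat it any more carefully than you propose: it simply applies the set-level submodularity lemma to the observation bundles $\mathfrak{z}^{t:k}_{M}$ and $\mathfrak{z}^{t:k}_{N}$ induced by $\mathfrak{a}^{t}_{M}$ and $\mathfrak{a}^{t}_{N}$.
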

\begin{proof}
See Appendix.
\end{proof}

\begin{theorem} \label{th:core}
If $\mathfrak{z}^{t:k}$ is conditionally independent given $s^{k}$, $V^{\pi}_{t}$ is convex over the belief space for all $t, \pi$, and $\rho(b) = - H_b(s) + log(\frac{1}{|S|})$, then for all $b$,
\begin{equation}
V^{G}_{t}(b) \geq (1 - e^{-1})^{2t}V^{*}_{t}(b).
\end{equation}
\end{theorem}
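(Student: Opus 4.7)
The plan is to verify the three hypotheses of Theorem~\ref{th:bound} --- non-negativity, monotonicity, and submodularity of $Q^{\pi}_{t}(b,\mathfrak{a})$ in $\mathfrak{a}$ for every policy $\pi$ --- and then invoke Theorem~\ref{th:bound} directly to obtain the stated bound. Submodularity is immediate: the conditional independence of $\mathfrak{z}^{t:k}$ given $s^{k}$ together with $\rho(b) = -H_b(s)$ are precisely the hypotheses of Theorem~\ref{th:submod}, and adding the constant $\log(1/|S|)$ shifts every immediate reward by the same amount and therefore does not affect submodularity in $\mathfrak{a}$. Non-negativity follows from the same constant: the shift is chosen so that $\rho(b) \geq 0$ for every belief $b$, hence every immediate reward collected along any trajectory under any policy is non-negative, and therefore so is the sum $Q^{\pi}_{t}(b,\mathfrak{a})$.

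The main step is monotonicity, and this is where the assumed convexity of $V^{\pi}_{t}$ enters. Using the one-step Bellman decomposition
\begin{equation*}
Q^{\pi}_{t}(b,\mathfrak{a}) \;=\; \rho(b) \;+\; \sum_{\mathbf{z}} \Pr(\mathbf{z}\mid \mathfrak{a}, b)\, V^{\pi}_{t-1}(b^{\mathfrak{a},\mathbf{z}}),
\end{equation*}
the first term is constant in $\mathfrak{a}$, so it suffices to show that the expected future value is monotone. Given $\mathfrak{a} \subseteq \mathfrak{a}'$, split the observation from $\mathfrak{a}'$ as $(\mathbf{z}, \mathbf{z}^{+})$ where $\mathbf{z}$ corresponds to sensors in $\mathfrak{a}$. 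The martingale property of Bayesian updating yields $\sum_{\mathbf{z}^{+}} \Pr(\mathbf{z}^{+}\mid \mathbf{z}, \mathfrak{a}', b)\, b^{\mathfrak{a}',(\mathbf{z},\mathbf{z}^{+})} = b^{\mathfrak{a},\mathbf{z}}$, so by convexity of $V^{\pi}_{t-1}$ and Jensen's inequality,
\begin{equation*}
\sum_{\mathbf{z}^{+}} \Pr(\mathbf{z}^{+}\mid \mathbf{z}, \mathfrak{a}', b)\, V^{\pi}_{t-1}(b^{\mathfrak{a}',(\mathbf{z},\mathbf{z}^{+})}) \;\geq\; V^{\pi}_{t-1}(b^{\mathfrak{a},\mathbf{z}}).
\end{equation*}
Averaging over $\mathbf{z}$ with weights $\Pr(\mathbf{z}\mid \mathfrak{a}, b)$ yields the desired monotonicity in $\mathfrak{a}$.

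With all three hypotheses verified, Theorem~\ref{th:bound} applies and gives $V^{G}_{t}(b) \geq (1 - e^{-1})^{2t}\,V^{*}_{t}(b)$. The principal obstacle I expect is the monotonicity argument: the convexity assumption is a hypothesis rather than a derivable fact (value functions are convex for the \emph{optimal} policy, but not automatically for an arbitrary $\pi$), so one must be careful to use it cleanly via the martingale rewriting of the belief update, rather than appealing to properties specific to entropy. The remaining chaining to $V^{G}_{t}$ then reduces to the contraction-style argument already carried out in the proof of Theorem~\ref{th:bound} and requires no further work.
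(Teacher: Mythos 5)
Your proposal is correct and follows essentially the same route as the paper: the paper's proof of this theorem likewise just verifies non-negativity (via the $\log(1/|S|)$ shift), submodularity (via Theorem~\ref{th:submod}), and monotonicity (via a separate lemma whose proof is exactly your martingale-plus-Jensen argument using the assumed convexity of $V^{\pi}_{t}$), and then invokes Theorem~\ref{th:bound}. Your reconstruction of the monotonicity lemma matches the paper's Lemma on monotonicity step for step.
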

\begin{proof}
See Appendix.
\end{proof}

In this subsection we showed that if the immediate belief-based reward $\rho(b)$ is defined as negative belief entropy, then the value function of an active perception POMDP is guaranteed to be submodular under certain conditions. However, as mentioned earlier, to solve active perception POMDP, we approximate the belief entropy with vector tangents. This might interfere with the submodularity of the value function. In the next subsection, we show that, even though the PWLC approximation of belief entropy might interfere with the submodularity of the value function, the value function computed by greedy PBVI is still guaranteed to have bounded error.

\subsection{Bounds given approximated belief entropy}

While Theorem \ref{th:core} bounds the error in $V^{G}_{t}(b)$, it does so only on the condition that $\rho(b) = - H_b(s)$.  However, as discussed earlier, our definition of active perception POMDPs instead defines $\rho$ using a set of vectors $\Gamma^{\rho} = \{\alpha^{\rho}_1,\ldots,\alpha^\rho_m\}$, each of which is a tangent to $- H_{b}({s})$, as suggested by \cite{Mauricio}, in order to preserve the PWLC property. While this can interfere with the submodularity of  $Q^{\pi}_{t}(b,\mathfrak{a})$, here we show that the error generated by this approximation is still bounded in this case.

Let $\tilde{\rho}(b)$ denote the PWLC approximated entropy and $\tilde{V}_{t}^{*}$ denote the optimal value function when using a PWLC approximation to negative entropy for the belief-based reward, as in an active perception POMDP, i.e.,
\begin{equation}
\tilde{V}_{t}^{*}(b) = \max_{\mathfrak{a}}[\tilde{\rho}(b) + \sum_{\mathbf{z} \in \Omega} \Pr(\mathbf{z}|b,\mathfrak{a})\tilde{V}_{t-1}^{*}(b^{\mathfrak{a},\mathbf{z}})].
\end{equation} 
\citet{Mauricio} showed that, if $\rho(b)$ verifies the $\alpha$-H\"{o}lder condition \citep{alphaHolder}, a generalization of the Lipschitz condition, then the following relation holds between $V^{*}_{t}$ and $\tilde{V}^{*}_{t}$:  
\begin{equation}\label{nipseq}
||V_{t}^{*} - \tilde{V}_{t}^{*}||_{\infty} \leq \frac{C \delta^{\alpha}}{1 - \gamma},
\end{equation}
where $V_{t}^{*}$ is the optimal value function with $\rho(b) = - H_b({s})$, $\delta$ is the \emph{density} of the set of belief points at which tangent are drawn to the belief entropy, and $C$ is a constant.

Let $\tilde{V}^{G}_{t}(b)$ be the value function computed by greedy PBVI when immediate belief-based reward is $\tilde{\rho}(b)$:
\begin{equation}
\tilde{V}^{G}_{t}(b) = \max_{\mathfrak{a}}^{G}[\tilde{\rho}(b) + \sum_{\mathbf{z} \in \Omega} \Pr(\mathbf{z}|b,\mathfrak{a})\tilde{V}_{t-1}^{G}(b^{\mathfrak{a},\mathbf{z}})], 
\end{equation}
then the error between $\tilde{V}^{G}_{t}(b)$ and $V^{*}_{t}(b)$ is bounded as stated in the following theorem.
\begin{theorem} \label{th:approx-bound}
For all beliefs, the error between $\tilde{V}^{G}_{t}(b)$ and $\tilde{V}^{*}_{t}(b)$ is bounded, if $\rho(b) = -H_{b}(s)$, $V^{\pi}_{t}$ is convex in the belief space for all $\pi, t$, and if $\mathfrak{z}^{t:k}$ is conditionally independent given $s^{k}$.
\end{theorem}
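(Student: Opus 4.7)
My plan is to bound the error by a triangle-inequality decomposition that separates the two distinct sources of error, namely the PWLC approximation of $\rho$ and the use of greedy maximization in place of true maximization. For every belief $b$, I would write
\begin{equation*}
|\tilde V^G_t(b) - \tilde V^*_t(b)| \leq |\tilde V^G_t(b) - V^G_t(b)| + |V^G_t(b) - V^*_t(b)| + |V^*_t(b) - \tilde V^*_t(b)|,
\end{equation*}
where $V^G_t$ and $V^*_t$ are the greedy and optimal value functions computed with the exact entropy reward $\rho(b) = -H_b(s)$. These are the objects for which Theorem~\ref{th:core} and the Araya-L\'opez bound \eqref{nipseq} are directly applicable.

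The third term is bounded immediately by $\tfrac{C\delta^\alpha}{1-\gamma}$ via \eqref{nipseq}, since $-H_b(s)$ satisfies the $\alpha$-H\"older condition. The middle term is finite because Theorem~\ref{th:core} provides the lower bound $V^G_t(b) \geq (1-e^{-1})^{2t} V^*_t(b)$ while $V^G_t(b) \leq V^*_t(b)$ holds trivially (the greedy value function is the value of the greedy policy, which is suboptimal for the exact POMDP); consequently $|V^G_t(b)-V^*_t(b)| \leq (1-(1-e^{-1})^{2t})\,\|V^*_t\|_\infty$, and $\|V^*_t\|_\infty$ is finite because $\rho$ is bounded on a finite horizon with discount $\gamma$.

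The remaining and main obstacle is to control the first term $\|V^G_t - \tilde V^G_t\|_\infty$, i.e.\ to show that the greedy Bellman iteration is stable under the perturbation $\rho \mapsto \tilde\rho$. My plan is induction on the horizon, establishing the recurrence
\begin{equation*}
\|V^G_t - \tilde V^G_t\|_\infty \leq \|\rho-\tilde\rho\|_\infty + \gamma\,\|V^G_{t-1} - \tilde V^G_{t-1}\|_\infty,
\end{equation*}
which, unrolled with $\|\rho-\tilde\rho\|_\infty \leq C\delta^\alpha$, yields a bound of the same form as \eqref{nipseq}. The subtlety is that $\mathfrak B^G$ is not automatically a $\gamma$-contraction: the subset returned by $\mathtt{greedy}\hbox{-}\mathtt{argmax}$ depends on the value function used to compute marginal gains, so the two iterations may select different actions, and a naive max-over-fixed-action argument fails. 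I would close this gap by noting that for any fixed subset $\mathfrak{a}$ the two $Q$-values induced by $V^G_{t-1}$ and $\tilde V^G_{t-1}$ differ by at most the right-hand side of the recurrence, and then invoking the Nemhauser-style $(1-e^{-1})$ guarantee from Theorem~\ref{theorem1}, which applies simultaneously to both $V^G_{t-1}$ and $\tilde V^G_{t-1}$ under the stated conditional-independence and convexity hypotheses, to pin both greedy subsets to the same underlying optimum and rule out a change of greedy subset amplifying the gap beyond this tolerance. Assembling the three bounds then delivers a uniform finite bound on $|\tilde V^G_t(b) - \tilde V^*_t(b)|$, as required.
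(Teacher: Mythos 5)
Your decomposition is sensible and the second and third terms are handled correctly, but the argument for the first term, $\|V^G_t - \tilde V^G_t\|_\infty$, has a genuine gap --- and it is exactly the gap the paper's proof is built to avoid. First, Theorem~\ref{theorem1} cannot be ``invoked simultaneously for both'' iterations: its hypothesis is exact submodularity of the set function being greedily maximized, and the $Q$-function built from $\tilde\rho$ and $\tilde V^G_{t-1}$ is \emph{not} submodular in general; the tangent approximation destroys the property. The paper flags this explicitly and introduces $\epsilon$-submodularity (Lemma~\ref{etasubmod}) precisely because Theorem~\ref{theorem1} does not apply to $\tilde Q$. Second, even granting submodularity of both $Q$-functions, the Nemhauser guarantee is a one-sided, multiplicative bound relative to each function's \emph{own} optimum; it does not pin the two greedy subsets to a common optimum, nor does it bound $|Q^G_t(b,\mathfrak{a}^G)-\tilde Q^G_t(b,\tilde{\mathfrak{a}}^G)|$ additively by $\|Q-\tilde Q\|_\infty$. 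An arbitrarily small perturbation can flip the first choice made by $\mathtt{greedy}\hbox{-}\mathtt{argmax}$, after which the two runs diverge completely; the guarantee then only places each greedy value in the interval between $(1-e^{-1})$ times its own optimum and that optimum, so the two greedy values may differ by on the order of $e^{-1}\|Q\|_\infty$ rather than by $\|\rho-\tilde\rho\|_\infty + \gamma\|V^G_{t-1}-\tilde V^G_{t-1}\|_\infty$. Your contraction-style recurrence is therefore not established, and the triangle-inequality route stalls at this point.

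The paper avoids ever having to compare the two greedy iterations to each other in this additive way. It shows that if $Q^\pi_t$ is submodular and $\|V^\pi_{t-1}-\tilde V^\pi_{t-1}\|_\infty\le\eta$, then $\tilde Q^\pi_t$ is $4(\gamma+1)\eta$-submodular (Lemma~\ref{etasubmod}), and it then reruns the Nemhauser telescoping argument with this extra slack to obtain the perturbed guarantee $\tilde Q^\pi_t(b,\mathfrak{a}^G)\ge(1-e^{-1})\tilde Q^\pi_t(b,\mathfrak{a}^*)-4\chi_K\epsilon$ (Lemma~\ref{oneappG}), which compares $\tilde V^G_t$ to $\tilde V^*_t$ directly. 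To salvage your decomposition you would need to replace the appeal to Theorem~\ref{theorem1} with a perturbed-greedy lemma of this kind; as written, the key step does not go through.
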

\begin{proof}
See Appendix.
\end{proof}

In this subsection we showed that if the negative entropy is approximated using tangent vectors, greedy PBVI still computes a value function that has bounded error. In the next subsection we outline how greedy PBVI can be extended to general active perception tasks.

\subsection{General Active Perception POMDPs}
The results presented in this section apply to the active perception POMDP in which the evolution of the state over time is independent of the actions of the agent. Here, we outline how these results can be extended to general active perception POMDPs without many changes. The main application for such an extension is in tasks involving a mobile robot coordinating with sensors to intelligently take actions to perceive its environment. In such cases, the robot's actions, by causing it to move, can change the state of the world.

The algorithms we proposed can be extended to such settings by making small modifications to the greedy maximization operator. The greedy algorithm can be run for $K+1$ iterations and in each iteration the algorithm would choose to add either a sensor (only if fewer than $K$ sensors have been selected), or a movement action (if none has been selected so far). Formally, using the work of \cite{fisher}, which extends that of \cite{Nemhauser} on submodularity to combinatorial structures such as \emph{matroids}, the action space of a POMDP involving a mobile robot can be modeled as a \emph{partition matroid} and  greedy maximization subject to matroid constraints \citep{fisher} can be used to maximize the value function approximately. 

The guarantees associated with greedy maximization subject to matroid constraints \citep{fisher} can then be used to bound the error of greedy PBVI. However, deriving exact theoretical guarantees for greedy PBVI for such tasks is beyond the scope of this article. Assuming that the reward function is still defined as the negative belief entropy, the submodularity of such POMDPs still holds under the conditions mentioned in Section 6.2.

In this subsection, we presented greedy PBVI, which uses greedy maximization to improve the scalability in the action space of an active perception POMDP. We also showed that, if the value function of an active perception POMDP is submodular, then greedy PBVI computes a value function that is guaranteed to have bounded error. We established that if the belief-based reward is defined as the negative belief entropy, then the value function of an active perception POMDP is guaranteed to be submodular. We showed that if the negative belief entropy is approximated by tangent vectors, as is required to solve active perception POMDPs efficiently, greedy PBVI still computes a value function that has bounded error. Finally, we outlined how greedy PBVI and the associated theoretical bounds can be extended to general active perception POMDPs.

\section{Experiments}

\begin{figure}
\centering
\hspace{5mm} \includegraphics[scale=0.5]{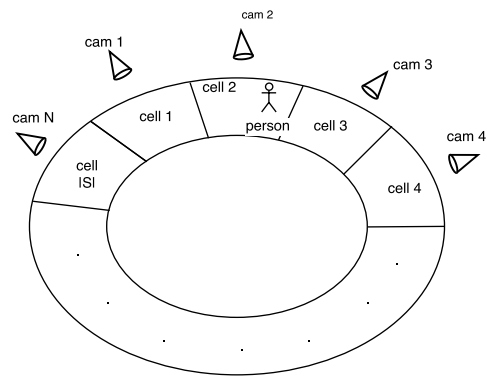}
\caption{ Problem setup for the task of tracking one person. We model this task as a POMDP with one state for each cell. Thus the person can move among $|S|$ cells. Each cell is adjacent to two other cells and each cell is monitored by a single camera. Thus, in this case there are $N = |S|$ cameras. At each time step, the person can stay in the same cell as she was in the previous time step with probability $p$ or she can move to one of the neighboring cells with equal probability. The agent must select $K$ out of $N$ cameras and the task is to predict the state of the person correctly using noisy observations from the $K$ cameras. There is one prediction action for each state and the agent gets a reward of $+1$ if it correctly predicts the state and $0$ otherwise. An observation is a vector of $N$ \emph{observation features}, each of which specifies the person's position as estimated by the given camera. If a camera is not selected, then the corresponding observation feature has a value of null.} \label{fig:setting}
\end{figure}

In this section, we present an analysis of the behavior and performance of belief-based rewards for active perception tasks, which is the main motivation of our work. We present the results of experiments designed to study the effect on the performance of the choice of prediction actions/tangents, and compare the costs and benefits of myopic versus non-myopic planning. We consider the task of tracking people in a surveillance area with a multi-camera tracking system. The goal of the system is to select a subset of cameras, to correctly predict the position of people in the surveillance area, based on the observations received from the selected cameras. In the following subsections, we present results on real-data collected from a multi-camera system in a shopping mall and we present the experiments comparing performance of greedy PBVI to PBVI.

\begin{figure*}
\begin{center}
  \subfigure[]{\includegraphics[width=0.45 \textwidth ]{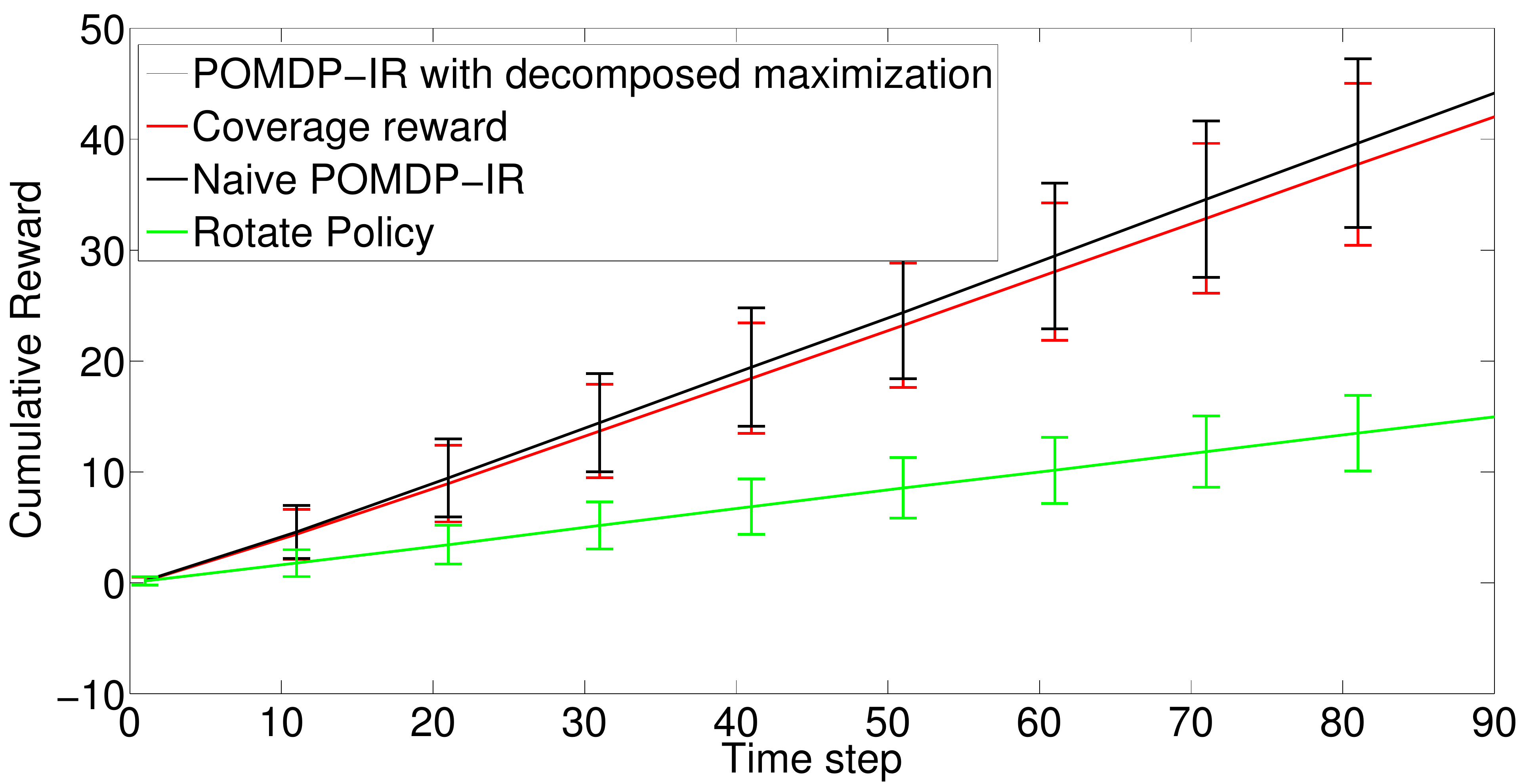}} \hfill
  \subfigure[]  {\includegraphics[width=0.45 \textwidth]{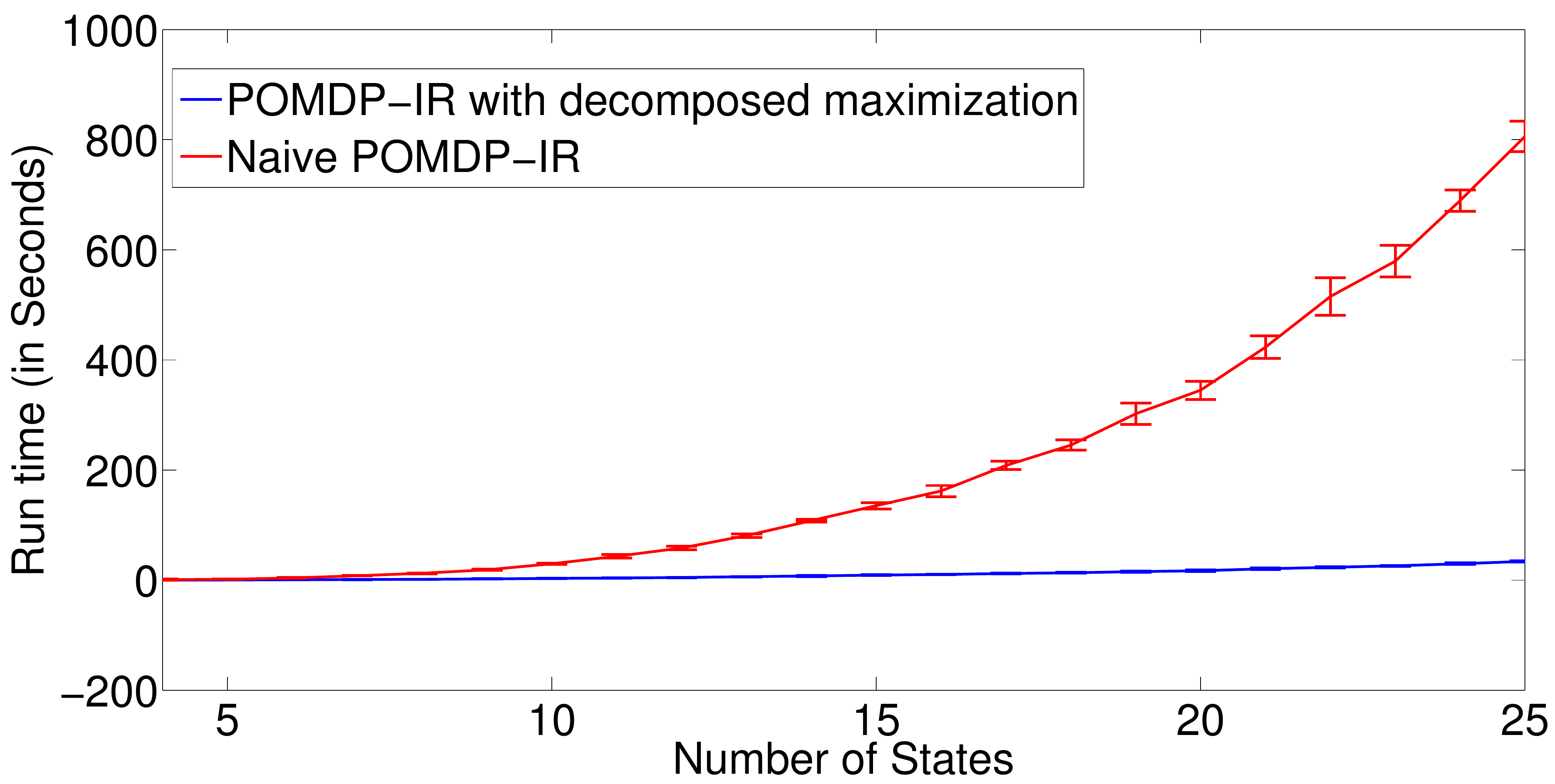} \hfill}  \\
  \subfigure[]  {\includegraphics[width=0.45 \textwidth]{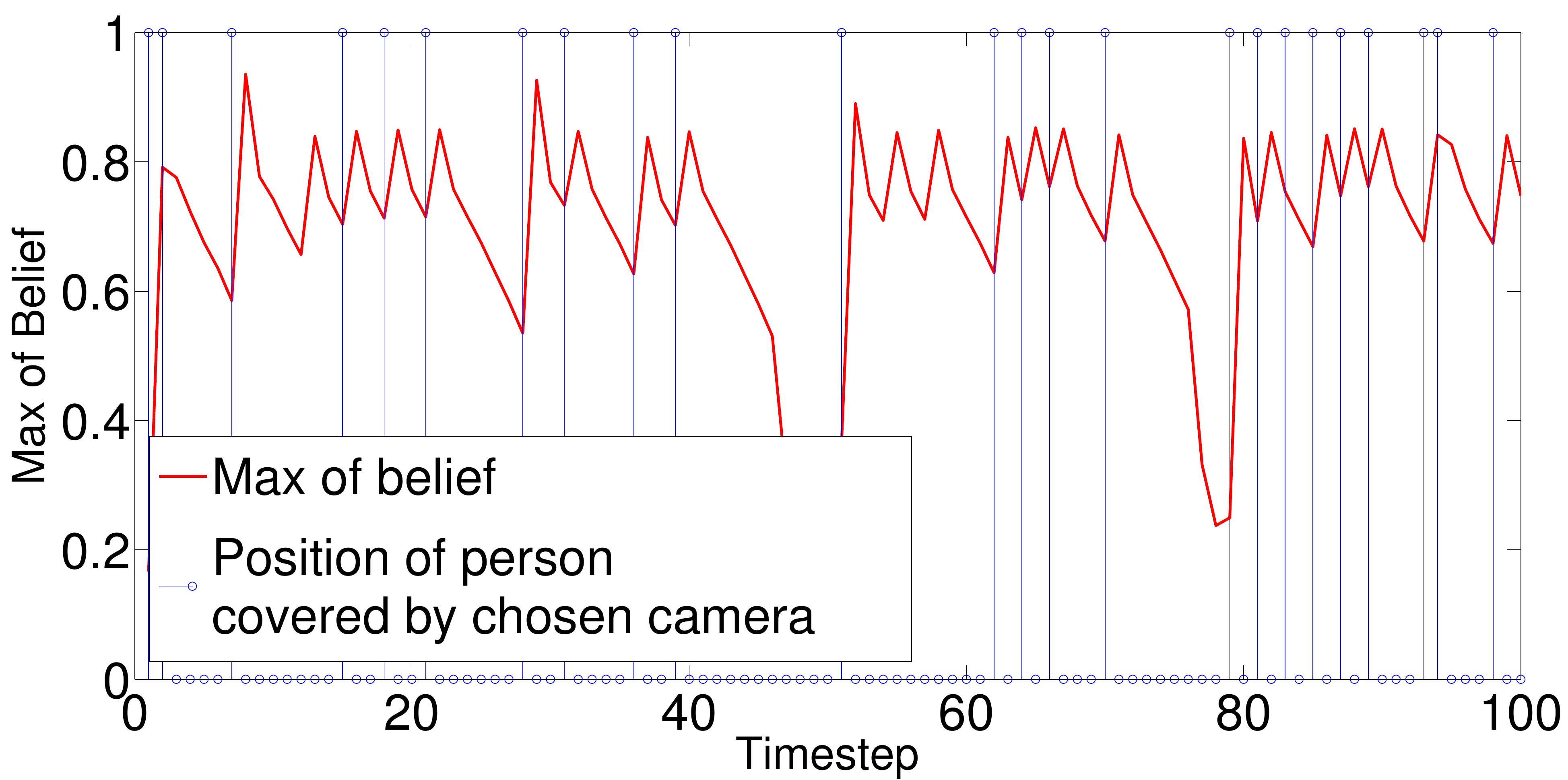}} \hfill
  \subfigure[]  {\includegraphics[width=0.45 \textwidth]{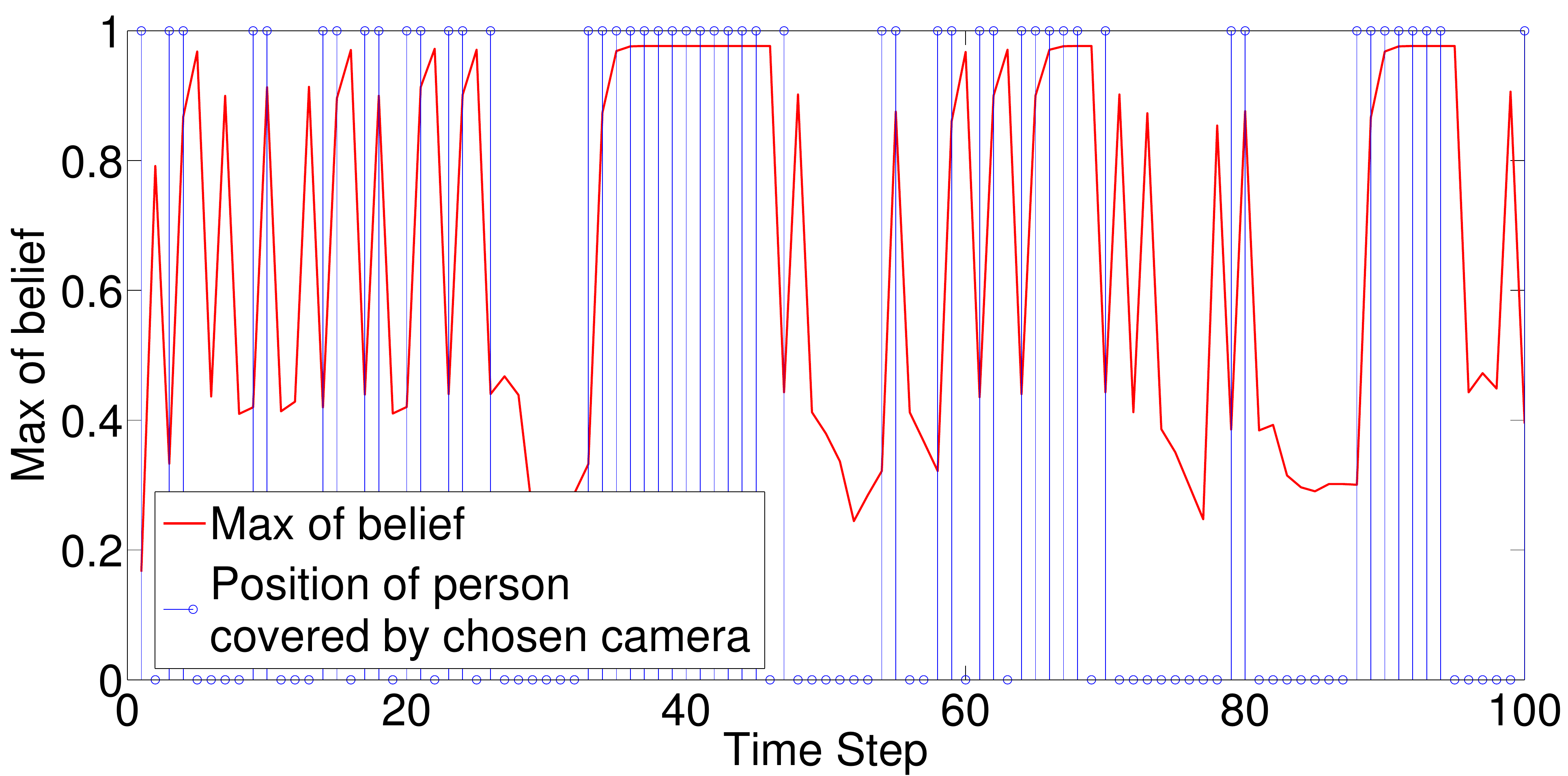} \hfill}
\end{center}
  \caption{(a) Performance comparison between POMDP-IR with decomposed maximization, naive POMDP-IR, coverage policy, and rotate policy; (b) Runtime comparison between POMDP-IR with decomposed maximization and naive POMDP-IR; (c) Behavior of POMDP-IR policy; (d) Behavior of coverage  policy.}
  \label{fig:perfComp}
\end{figure*}

We compare the performance of POMDP-IR with decomposed maximization to a naive POMDP-IR that does not decompose the maximization.  Thanks to Theorems \ref{th:equiv1} and \ref{th:equiv2}, these approaches have performance equivalent to their $\rho$POMDP counterparts.  We also compare against two baselines. The first is a weak baseline we call the \emph{rotate policy} in which the agent simply keeps switching between cameras on a turn-by-turn basis. The second is a stronger baseline we call the \emph{coverage policy}, which was developed in earlier work on active perception \citep{Spaan08pomdp,Spaan09icaps}.  The coverage policy is obtained after solving a POMDP that rewards the agent for observing the person, i.e., the agent is encouraged to select the cameras that are most likely to generate positive observations. Thanks to the decomposed maximization, the computational cost of solving for the coverage policy and belief-based rewards is the same.

\subsection{Simulated Setting}

We start with experiments conducted in a simulated setting, first considering the task of tracking a single person with a multi-camera system and then considering the more challenging task of tracking multiple people.

\subsubsection{Single-Person Tracking}
We start by considering the task of tracking one person walking in a grid-world composed of $|S|$ cells and $N$ cameras as shown in Figure \ref{fig:setting}. At each timestep, the agent can select only $K$ cameras, where $K \leq N$.  Each selected camera generates a noisy observation of the person's location.  The agent's goal is to minimize its uncertainty about the person's state. In the experiments in this section, we fixed $K = 1$ and $N=10$. The problem setup and the POMDP model is shown and described in Figure \ref{fig:setting}.

To compare the performance of POMDP-IR to the baselines, 100 trajectories were simulated from the POMDP. The agent was asked to guess the person's position at each time step. Figure \ref{fig:perfComp}(a) shows the cumulative reward collected by all four methods. POMDP-IR with decomposed maximization and naive POMDP-IR perform identically as the lines indicating their respective performance lie on top of each other in figure \ref{fig:perfComp}(a).  However, Figure \ref{fig:perfComp}(b), which compares the runtimes of POMDP-IR with decomposed maximization and naive POMDP-IR, shows that decomposed maximization yields a large computational savings. Figure \ref{fig:perfComp}(a) also shows that POMDP-IR greatly outperforms the rotate policy and modestly outperforms the coverage policy.

Figures \ref{fig:perfComp}(c) and \ref{fig:perfComp}(d) illustrate the qualitative difference between POMDP-IR and the coverage policy. The blue lines mark the points in trajectory when the agent selected the camera that observes the person's location. If the agent selected a camera such that the person's location is not covered then the blue vertical line is not there at that point in the trajectory in the figure. The agent has to select one out of $N$ cameras and does not have an option of not selecting any camera. The red line plots the max of the agent's belief. The main difference between the two policies is that once POMDP-IR gets a good estimate of the state, it proactively observes neighboring cells to which the person might transition. This helps it to more quickly find the person when she moves. By contrast, the coverage policy always looks at the cell where it believes her to be. Hence, it takes longer to find her again when she moves.  This is evidenced by the fluctuations in the max of the belief, which often drops below 0.5 for the coverage policy but rarely does so for POMDP-IR. The presence of false positives and negatives can also be seen in the figure, when max of the belief goes down even though the agent selected the camera which can observe the person's location and in some cases even though the agent did not select the camera which can observe the person's location but still the max of belief shoots up.

\begin{figure}
\begin{center}
	\includegraphics[width=0.46 \textwidth]{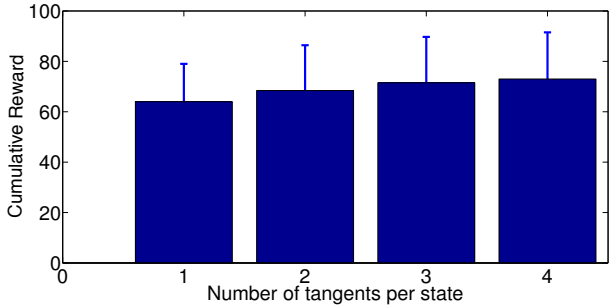} \quad
\end{center}
	\caption{Performance comparison as negative belief entropy is better approximated.}
	\label{fig:moreTans}
\end{figure}

Next, we examine the effect of approximating a true reward function
like belief entropy with more and more tangents.  Figure
\ref{fig:tangents} illustrates how adding more tangents can better
approximate negative belief entropy.  To test the effects of this, we
measured the cumulative reward when using between one
and four tangents per state.  Figure \ref{fig:moreTans} shows the
results and demonstrates that, as more tangents are added, the performance improves. However, performance also quickly saturates, as four tangents perform no better than three.

\begin{figure}
\begin{center}
	 	\includegraphics[width=0.46 \textwidth]{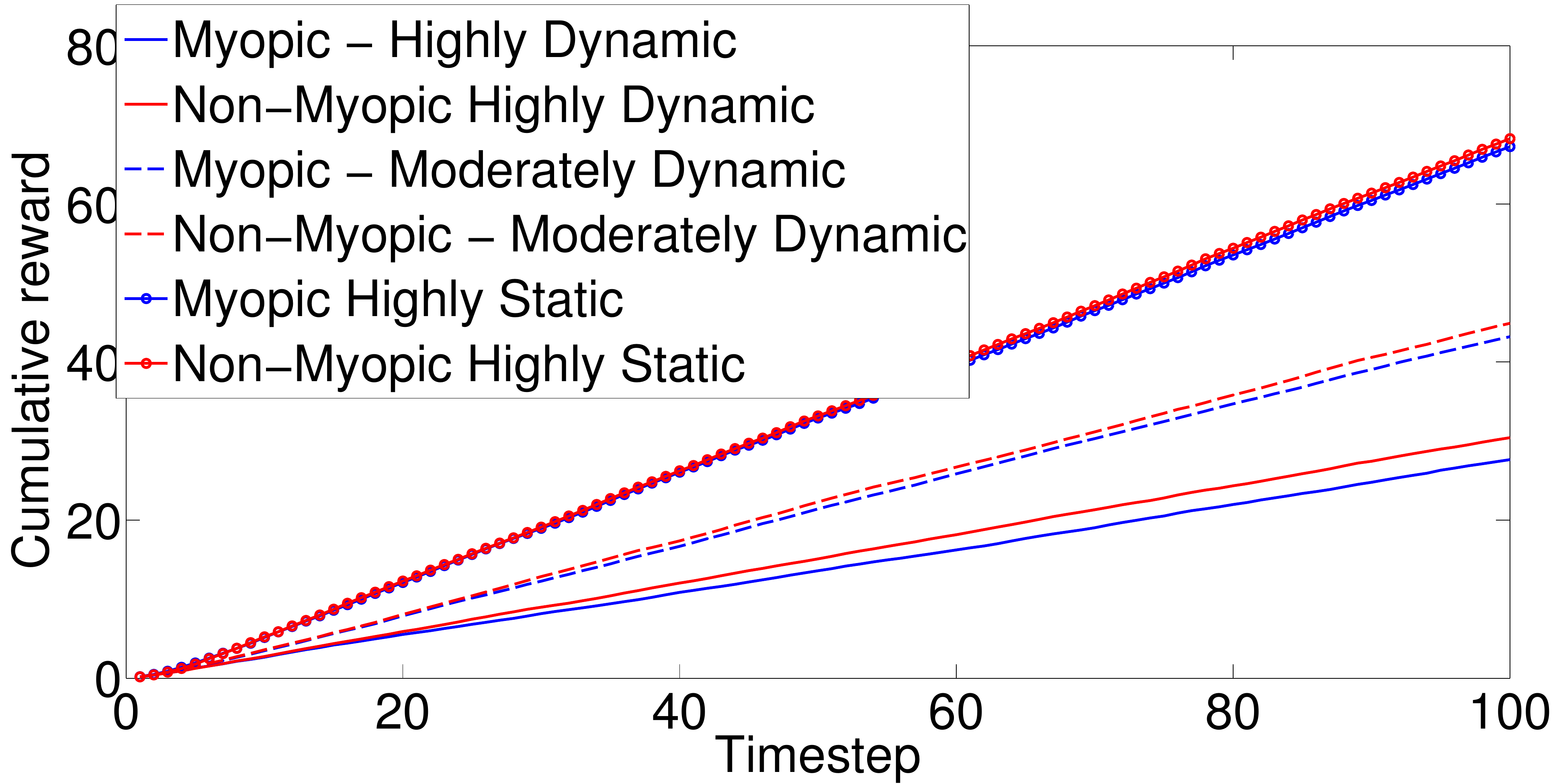}  \\
\hspace{-2mm}    \includegraphics[width=0.45 \textwidth]{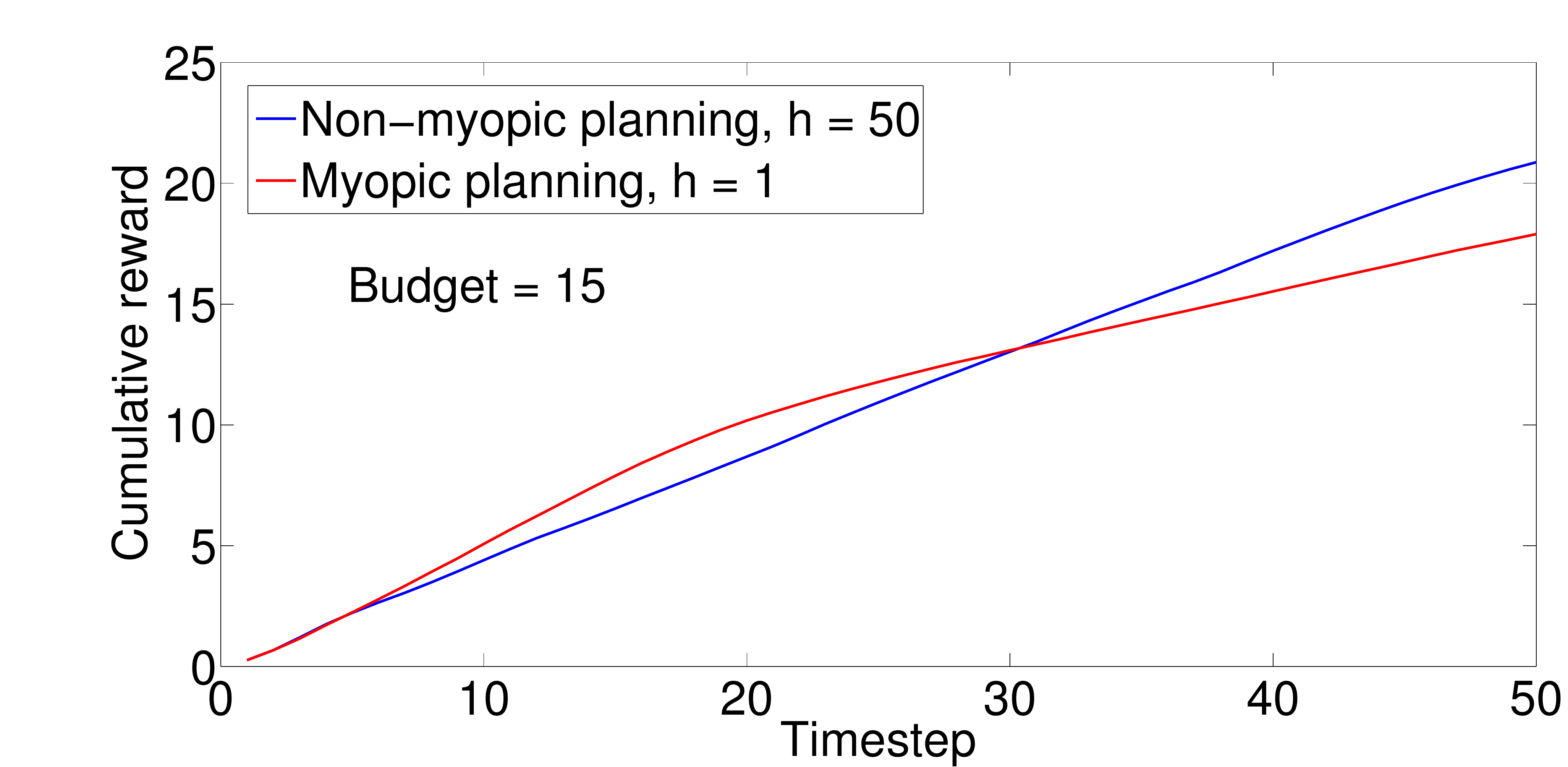}
\end{center}
	\caption{(top) Performance comparison for myopic vs.\ non myopic policies; (bottom) Performance comparison for myopic vs non myopic policies in budget-based setting.}
	\label{fig:myoVsNonMyo}
\end{figure}

Next, we compare the performance of POMDP-IR to a myopic variant that seeks only to maximize immediate reward, i.e., $h=1$.  We perform this comparison in three variants of the task. In the \emph{highly static} variant, the state changes very slowly: the probability of staying is the same state is 0.9. In the \emph{moderately dynamic} variant, the state changes more frequently, with a same-state transition probability of 0.7. In the \emph{highly dynamic} variant, the state changes rapidly (with a same-state transition probability of 0.5). Figure \ref{fig:myoVsNonMyo} (top) shows the results of these comparisons.  In each setting, non-myopic POMDP-IR outperforms myopic POMDP-IR. In the highly static variant, the difference is marginal.  However, as the task becomes more dynamic, the importance of look-ahead planning grows. Because the myopic planner focuses only on immediate reward, it ignores what might happen to its belief when the state changes, which happens more often in dynamic settings. 

We also compare the performance of myopic and non-myopic planning in a \emph{budget-constrained} environment. This specifically corresponds to an energy constrained environment, where cameras can be employed only a few times over the entire trajectory. This is augmented with resource constraints, so that the agent has to plan not only when to use the cameras, but also decide which camera to select. Specifically, the agent can only employ the multi-camera system a total of 15 times across all 50 timesteps and the agent can select which camera (out of the multi-camera system) to employ at each of the 15 instances.  On the other timesteps, it must select an action that generates only a null observation. Figure \ref{fig:myoVsNonMyo} (bottom) shows that non-myopic planning is of critical importance in this setting.  Whereas myopic planning greedily consumes the budget as quickly as possible, thus earning more reward in the beginning, non-myopic planning saves the budget for situations in which it is highly uncertain about the state.

Finally, we compare the performance of myopic and non-myopic planning when the multi-camera system can communicate with a mobile robot that also has sensors. This setting is typical of a networked robot system \citep{SpaanNRS} in which a robot coordinates with a multi-camera system to perform surveillance of a building, detect any emergency situations like fire, or help people navigate to their destination. Here, the task is to minimize uncertainty about the location of one person who is moving in the space monitored by the robot and the cameras. The robot's sensors are assumed to be more accurate than the stationary cameras. Specifically, the sensors attached to the robot can detect if a person is in the current cell with 90\% accuracy compared to the stationary cameras, each of which has an accuracy of 75\% of detecting a person in the cell it observes. The robot's sensor can observe the presence or absence of a person only for the cell that the robot occupies. In addition to using its sensors to generate observations about its current cell, the robot can also move forward or backward to an adjacent cell or choose to stay at the current cell. To model this task, the action vector introduced earlier is augmented with another action feature that indicates the direction of the robot's motion, which can take three values: forward, backward or stay. 

\begin{figure}
\begin{center}
	\includegraphics[width=0.45 \textwidth]{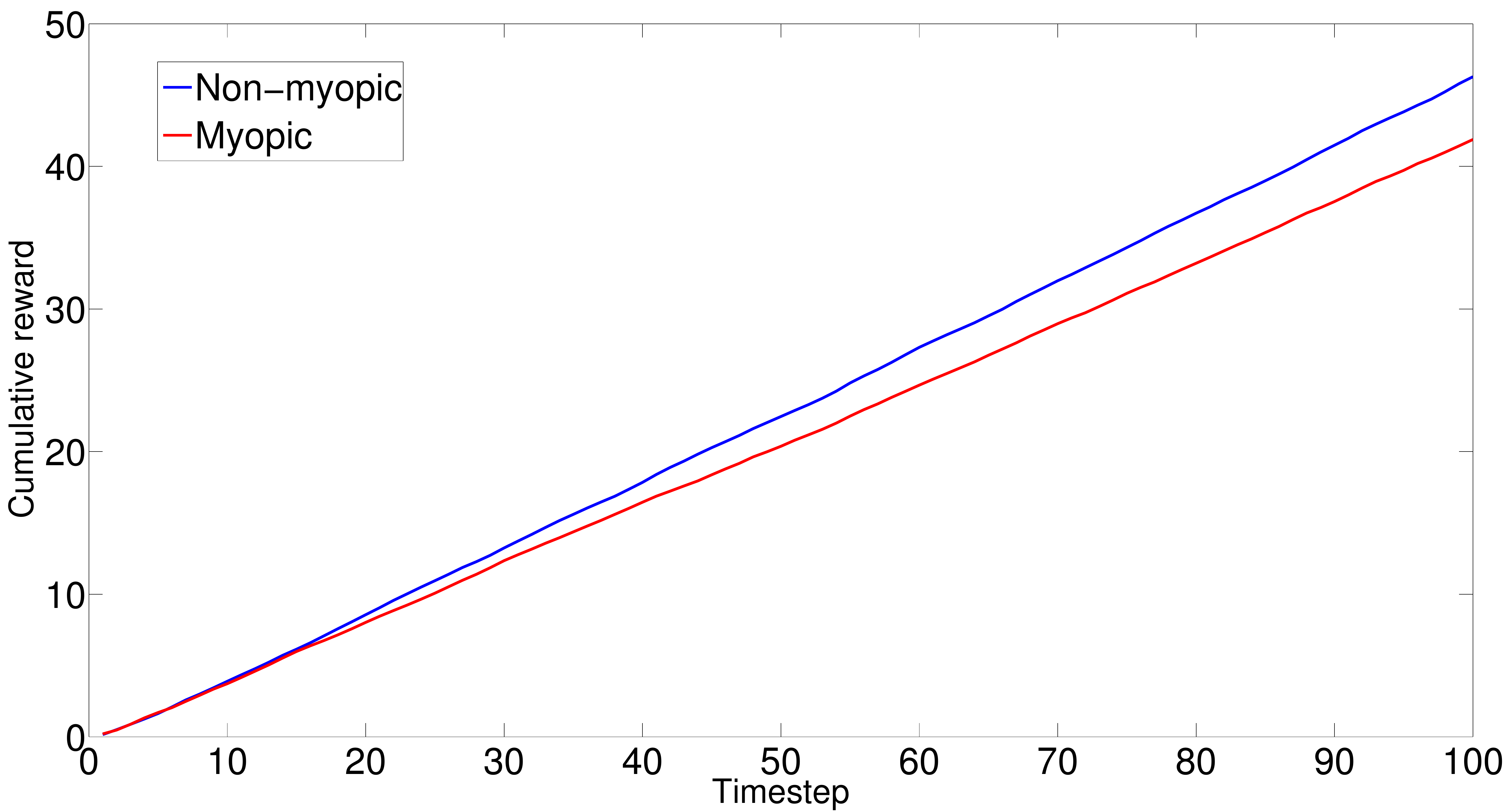}
\end{center}	
	\caption{Performance comparison for myopic vs.\ non myopic policies when camera system is assisting a moving robot.}
\label{fig:nwrresults}
\end{figure}

Performance is quantified as the total number of times the correct location of the person is predicted by the system. Figure \ref{fig:nwrresults}, which shows the performance of myopic and non-myopic policies for this task, demonstrates that when planning non-myopically the agent is able to utilize the accurate sensors more effectively as to compared to when planning myopically.

\subsubsection{Multi-Person Tracking}
To extend our analysis to a more challenging problem, we consider a simulated setting in which multiple people must be tracked simultaneously.  Since $|S|$ grows exponentially in the number of people, the resulting POMDP quickly becomes intractable.  Therefore, we compute instead a factored value function
\begin{equation}
V_{t}(b) = \sum_i V_{t}^{i}(b^{i}),
\end{equation} 
where $V_{t}^{i}(b^{i})$ is the value of the agent's current belief $b^{i}$ about the $i$-th person. Thus, $V_{t}^{i}(b^{i})$ needs to be computed only once, by solving a POMDP of the same size as that in the single-person setting. 
During action selection, $V_{t}(b)$ is computed using the current $b^{i}$ for each person. This kind of factorization corresponds to the assumption that each person's movement and observations is independent of that of other people. Although violated in practice, such an assumption can nonetheless yield good approximations.

\begin{figure}
\begin{center}
  \includegraphics[width=0.45 \textwidth]{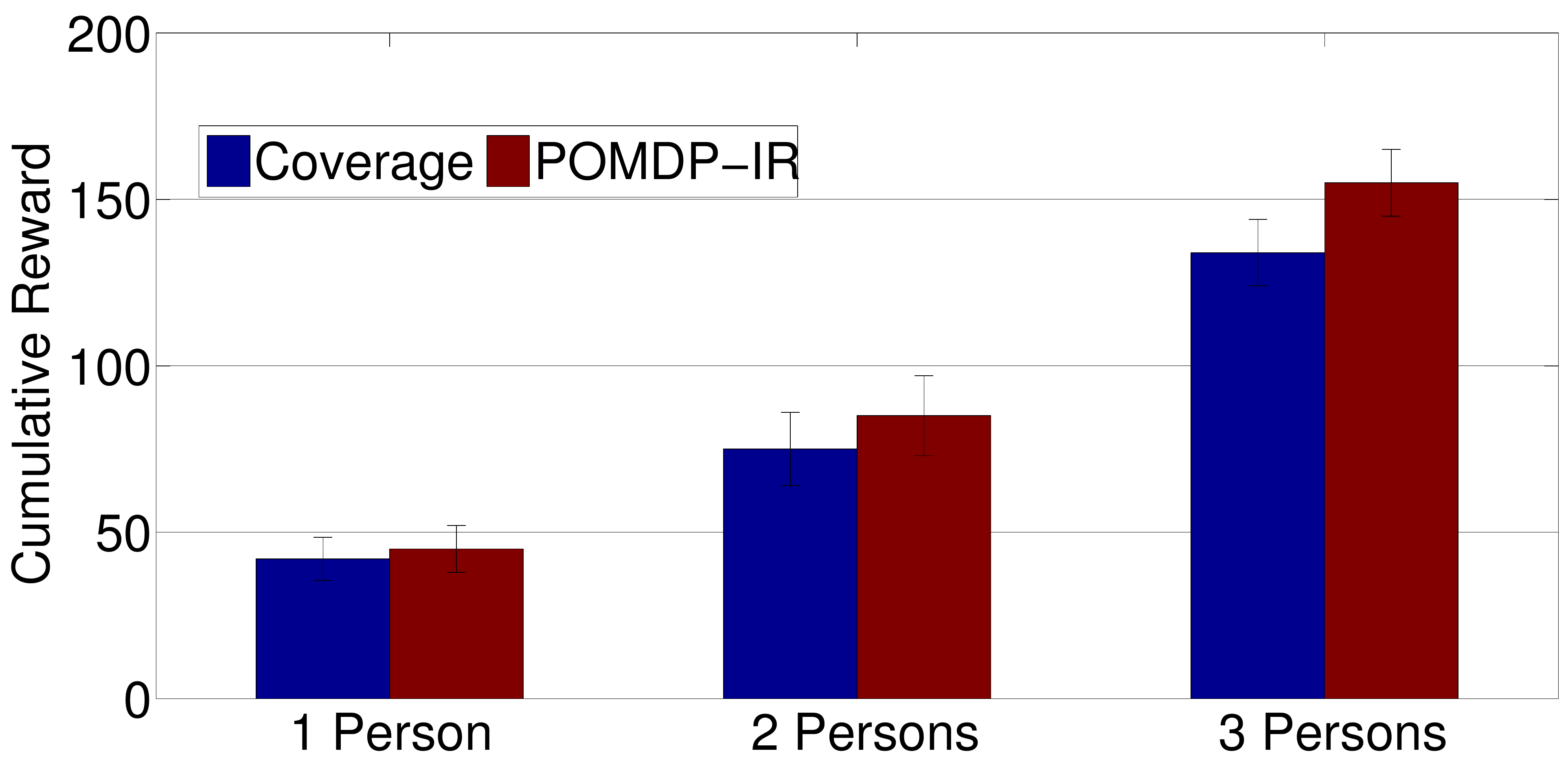}   \hfill
  \includegraphics[width=0.45 \textwidth]{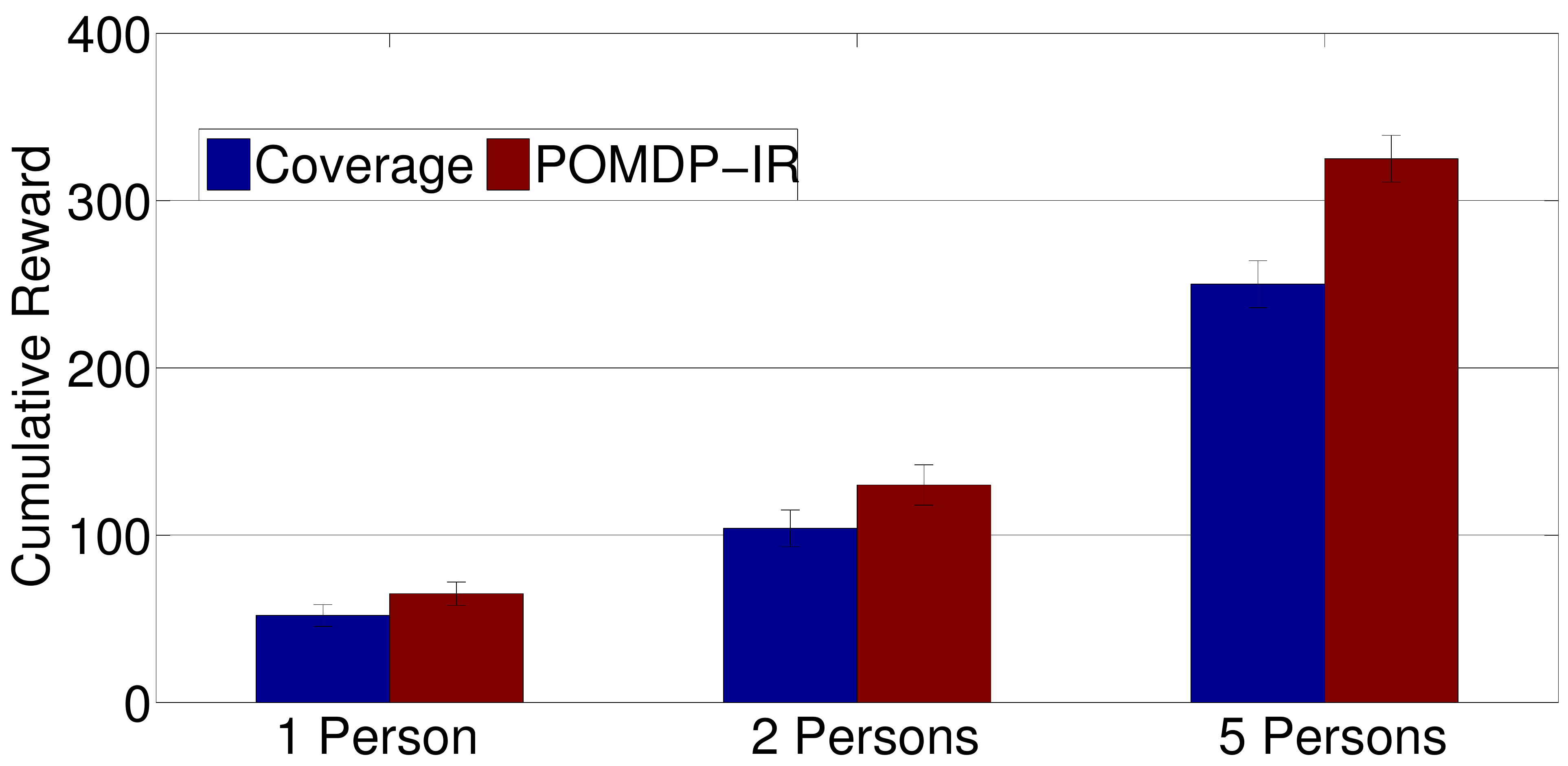} 
\end{center}
  \caption{(top) Multi-person tracking performance for POMDP-IR and coverage policy; (bottom) Performance of POMDP-IR and coverage policy when only important cells must be tracked. }
\label{fig:MPPerf}
\end{figure}

Figure \ref{fig:MPPerf} (top), which compares POMDP-IR to the coverage policy with one, two, and three people, shows that the advantage of POMDP-IR grows substantially as the number of people increases.  Whereas POMDP-IR tries to maintain a good estimate of everyone's position, the coverage policy just tries to look at the cells where the maximum number of people might be present, ignoring other cells completely. 

Finally, we compare POMDP-IR and the coverage policy in a setting in which the goal is only to reduce uncertainty about a set of ``important cells'' that are a subset of the whole state space.  For POMDP-IR, we prune the set of prediction actions to allow predictions only about important cells.  For the coverage policy, we reward the agent only for observing people in important cells.  The results, shown in Figure \ref{fig:MPPerf} (bottom), demonstrate that the advantage of POMDP-IR over the coverage policy is even larger in this variant of the task. POMDP-IR makes use of information coming from cells that neighbor the important cells (which is of critical importance if the important cells do not have good observability), while the coverage policy does not. As before, the difference gets larger as the number of people increases.

\subsection{Real Data}
\begin{figure}
  \center \includegraphics[width=0.38\textwidth]{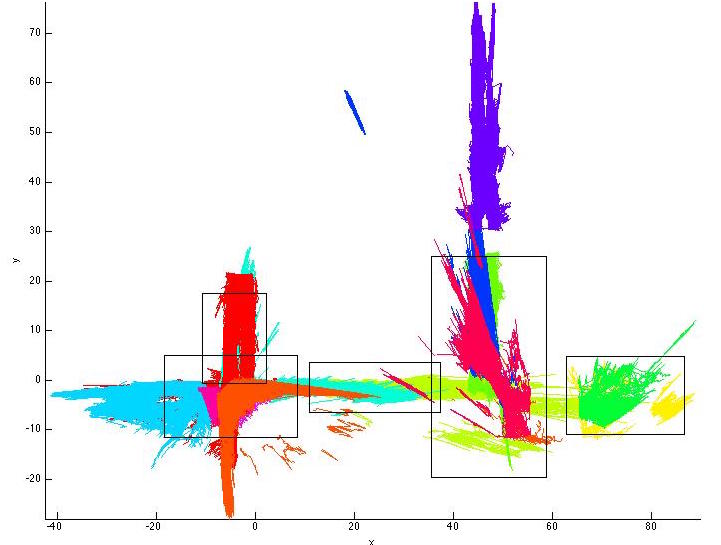}  
  \caption{Sample tracks for all the cameras.  Each color represents all the tracks observed by a given camera. The boxes denote regions of high overlap between cameras.}
  \label{fig:camtracks}
\end{figure}

Finally, we extended our analysis to a real-life dataset collected in a shopping mall. This dataset was gathered over 4 hours using 13 CCTV cameras located in a shopping mall \citep{henri}. Each camera uses a FPDW \citep{FPDW} pedestrian detector to detect people in each camera image and in-camera tracking \citep{henri} to generate tracks of the detected people's movements over time. 

The dataset consists of 9915 tracks each specifying one person's $x$-$y$ position over time. Figure \ref{fig:camtracks} shows the sample tracks from all of the cameras.

To learn a POMDP model from the dataset, we divided the continuous space into 20 cells ($|S| = 21$: 20 cells plus an external state indicating the person has left the shopping mall). Using the data, we learned a maximum-likelihood tabular transition function. However, we did not have access to the ground truth of the observed tracks so we constructed them using the overlapping regions of the camera.
\begin{figure}
\begin{center} 
	\includegraphics[width=0.45\textwidth]{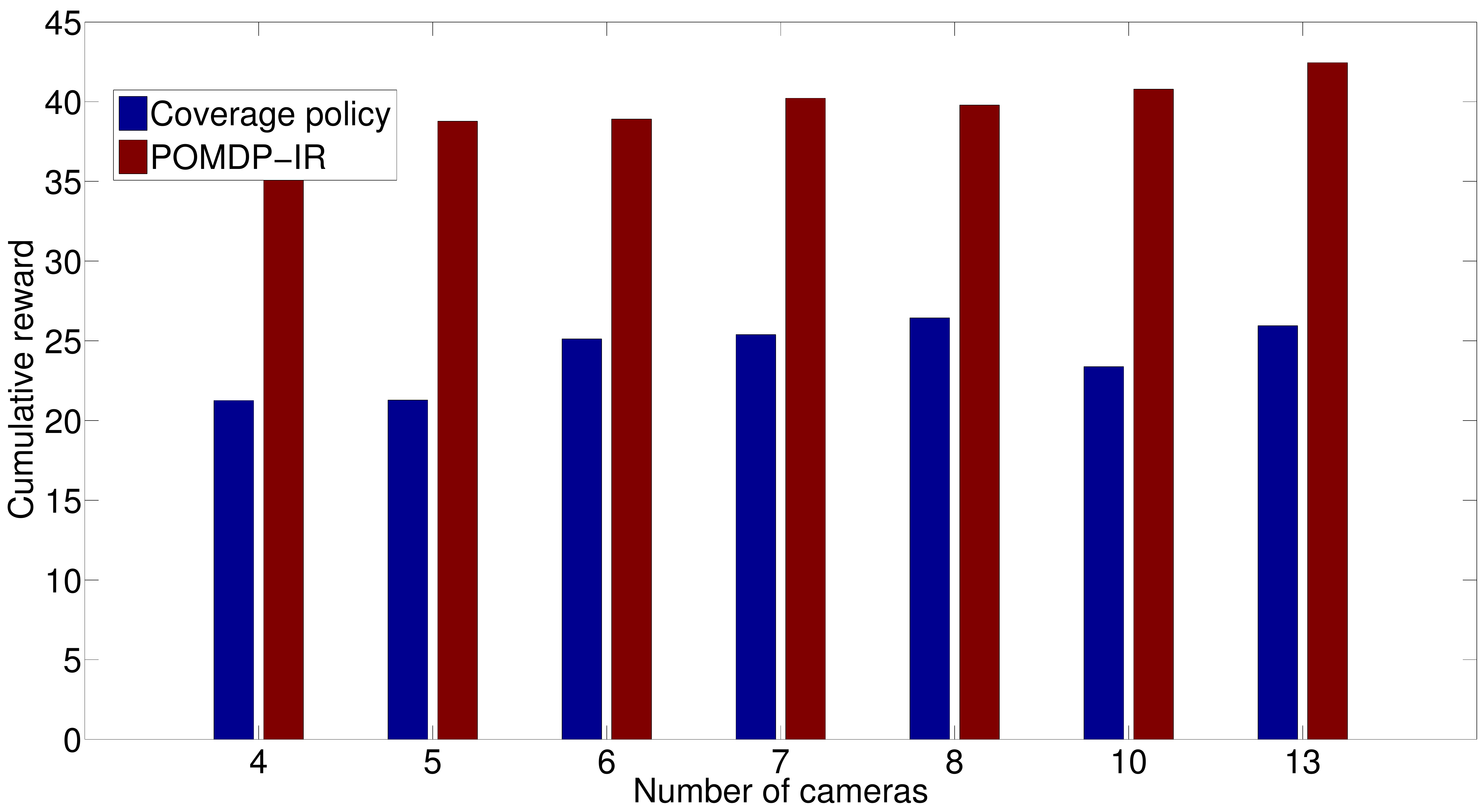} \quad
	\caption{Performance of POMDP-IR and the coverage policy on the shopping mall dataset.}
\label{fig:TNORes}
\end{center}
\end{figure} 

Because the cameras have many overlapping regions (see Figure \ref{fig:camtracks}), we were able to manually match tracks of the same person recorded individually by each camera. The ``ground truth'' was then constructed by taking a weighted mean of the matched tracks. Finally, this ground truth was used to estimate noise parameters for each cell (assuming zero-mean Gaussian noise), which was used as the observation function. Figure \ref{fig:TNORes} shows that, as before,  POMDP-IR substantially outperforms the coverage policy for various numbers of cameras. In addition to the reasons mentioned before, the high overlap between the cameras contributes to POMDP-IR's superior performance.  The coverage policy has difficulty ascertaining people's exact locations because it is rewarded only for observing them somewhere in a camera's large overlapping region, whereas POMDP-IR is rewarded for deducing their exact locations.

\subsection{Greedy PBVI}
To empirically evaluate greedy PBVI, we tested it on the problem of tracking either one or multiple people using a multi-camera system. 
\begin{figure}
\centering
    \includegraphics[width=0.45\textwidth]{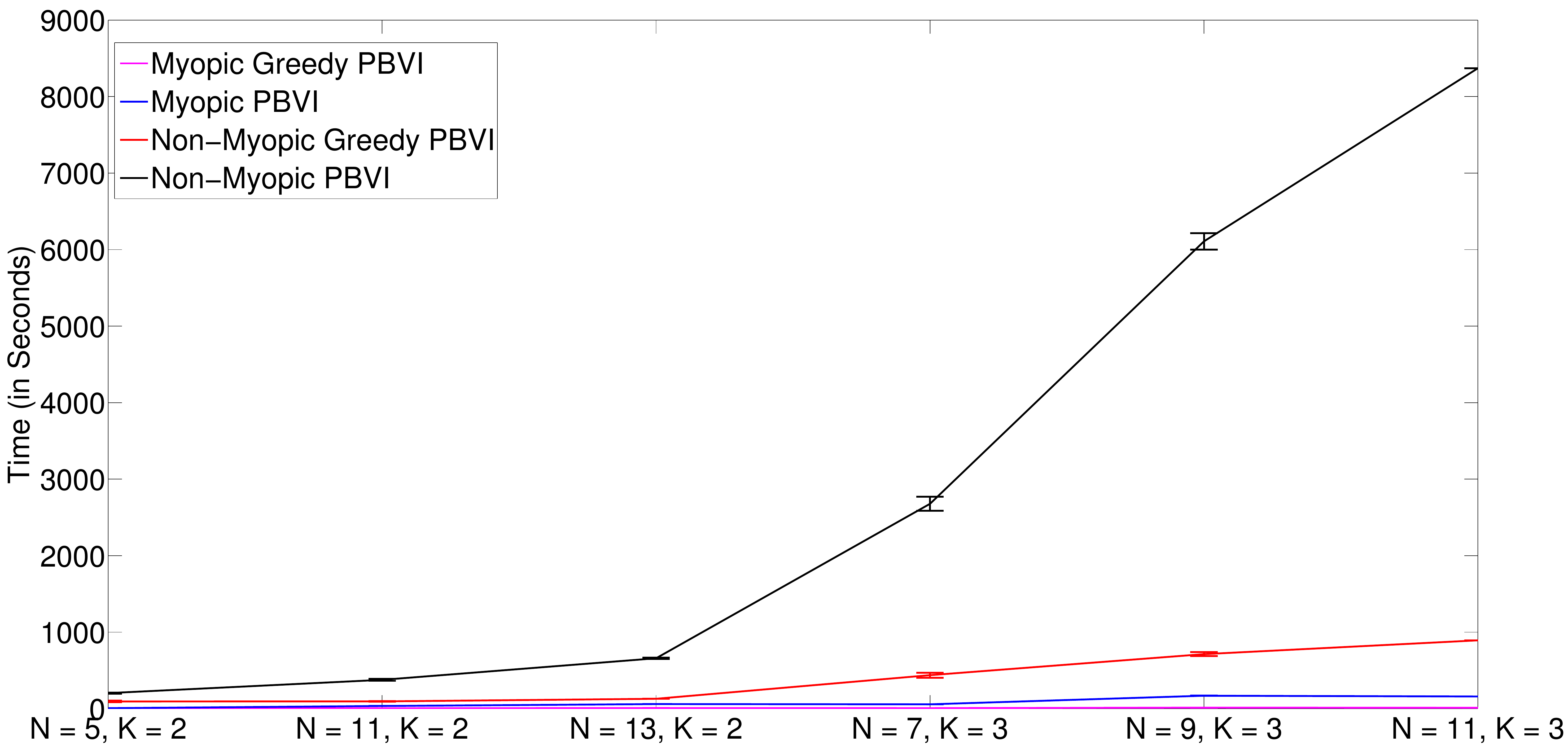}
    \caption{Runtimes for the different methods.}
    \label{fig:rt}
\end{figure}
The reward function is described as a set of $|S|$ vectors, $\Gamma^{\rho} = \{\alpha_{1} \dots \alpha_{|S|} \}$, with $\alpha_{i}(s) = 1$ if $s=i$ and $\alpha_{i}(s) = 0$ otherwise.
The initial belief is uniform across all states. We planned for horizon $h = 10$ with $\gamma = 0.99$.

As baselines, we tested against regular PBVI and \emph{myopic} versions of both greedy and regular PBVI that compute a policy assuming $h=1$ and use it at each timestep. Figure \ref{fig:rt} shows runtimes under different values of $N$ and $K$. Since multi-person tracking uses the value function obtained by solving a single-person POMDP, single and multi-person tracking have the same runtimes. These results demonstrate that greedy PBVI requires only a fraction of the computational cost of regular PBVI. In addition, the difference in the runtime grows quickly as the action space gets larger: for $N=5$ and $K=2$ greedy PBVI is twice as fast, while for $N=11, K=3$ it is approximately nine times as fast.  Thus, greedy PBVI enables much better scalability in the action space.
\begin{figure}
\centering
    \includegraphics[width=0.45\textwidth]{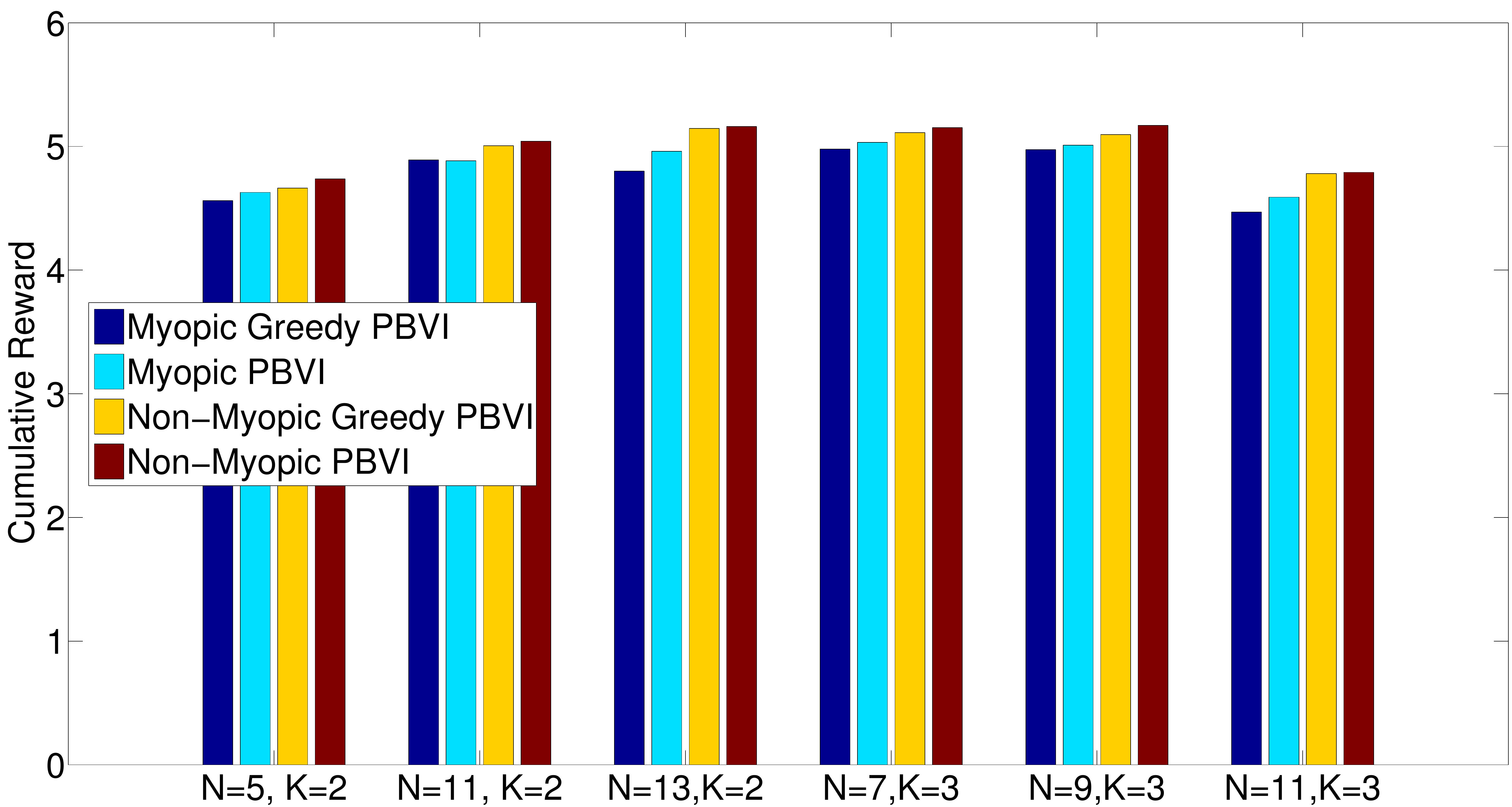} \\
    \hspace{-10mm} \includegraphics[width=0.5\textwidth]{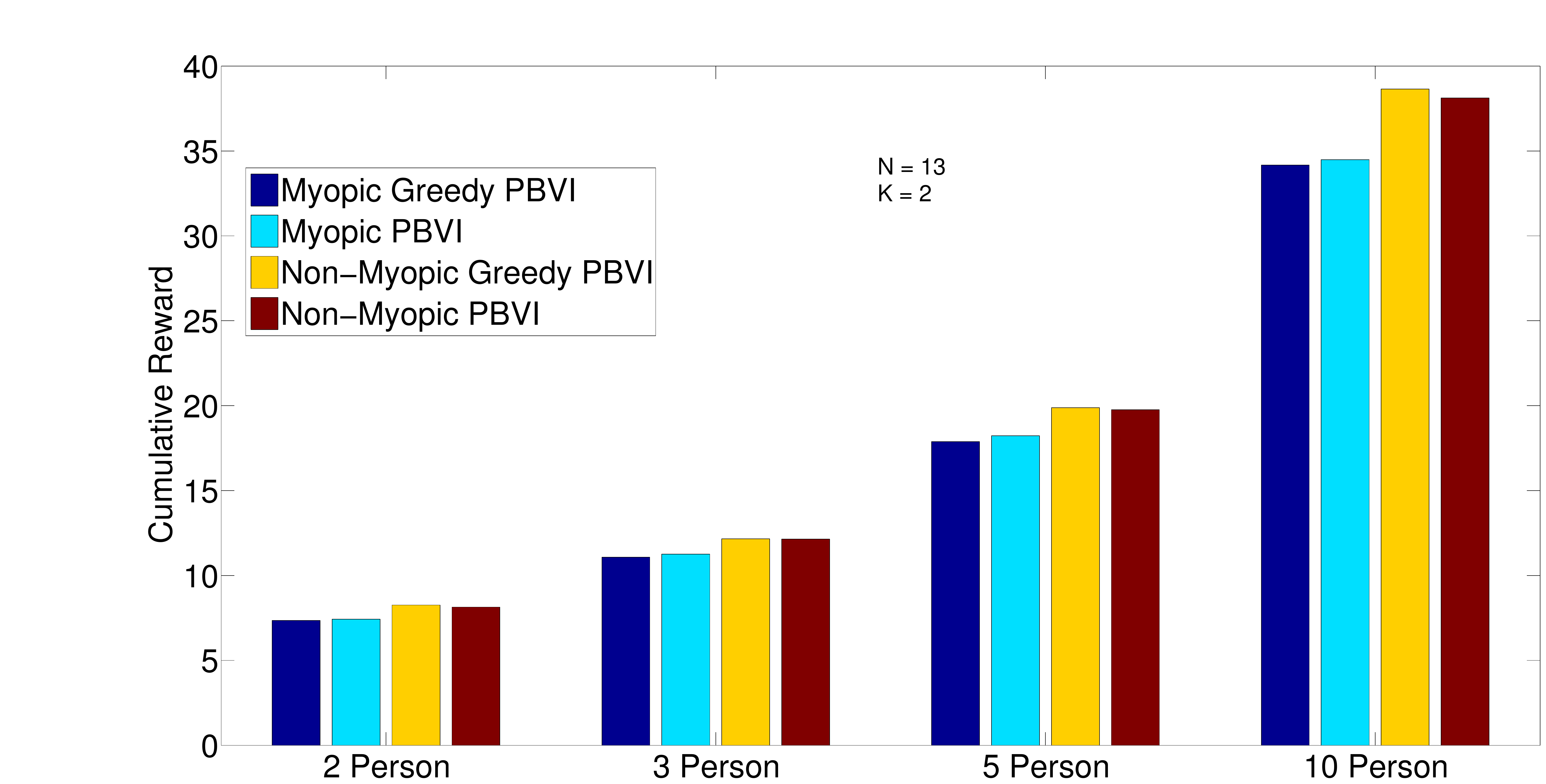} 
    \caption{Cumulative reward for single-person (top) and multi-person (bottom) tracking.}
  \label{fig:cum-rew}
\end{figure}
Figure \ref{fig:cum-rew}, which shows the cumulative reward under different values of $N$ and $K$ for single-person (top) and multi-person (bottom) tracking, verifies that greedy PBVI's speedup does not come at the expense of performance, as greedy PBVI accumulates nearly as much reward as regular PBVI. They also show that both PBVI and greedy PBVI benefit from non-myopic planning.  While the performance advantage of non-myopic planning is relatively modest, it increases with the number of cameras and people, which suggests that non-myopic planning is important to making active perception scalable.

Furthermore, an analysis of the resulting policies showed that myopic and non-myopic
policies differ qualitatively. A myopic policy, in order to minimize uncertainty in the
next step, tends to look where it believes the person to be. By contrast, a non-myopic
policy tends to proactively look where the person might go next, so as to more quickly
detect her new location when she moves. Consequently, non-myopic policies exhibit less
fluctuation in belief and accumulate more reward, as illustrated in Figure
\ref{fig:belFluct}. The blue lines mark when the agent chooses
the camera that can observe the cell occupied by the person. The red line plots the max of the agent's belief. The difference in fluctuation in belief is evident, as the max of the belief often drops below 0.5 for the myopic policy but rarely does so for the non-myopic policy.

\begin{figure}
\begin{center}
    \includegraphics[width=0.47\textwidth]{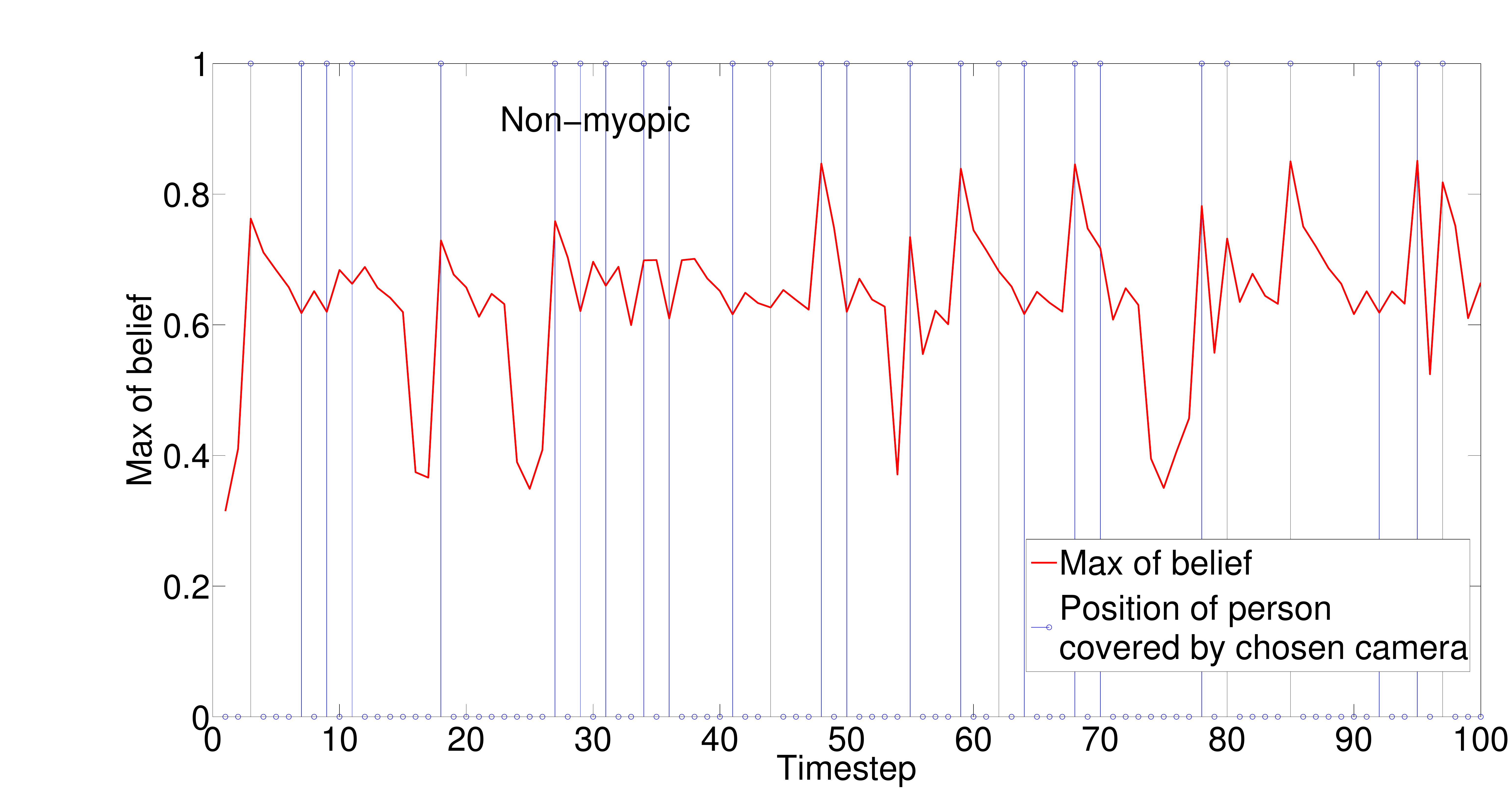} \hfill
    \includegraphics[width=0.47\textwidth]{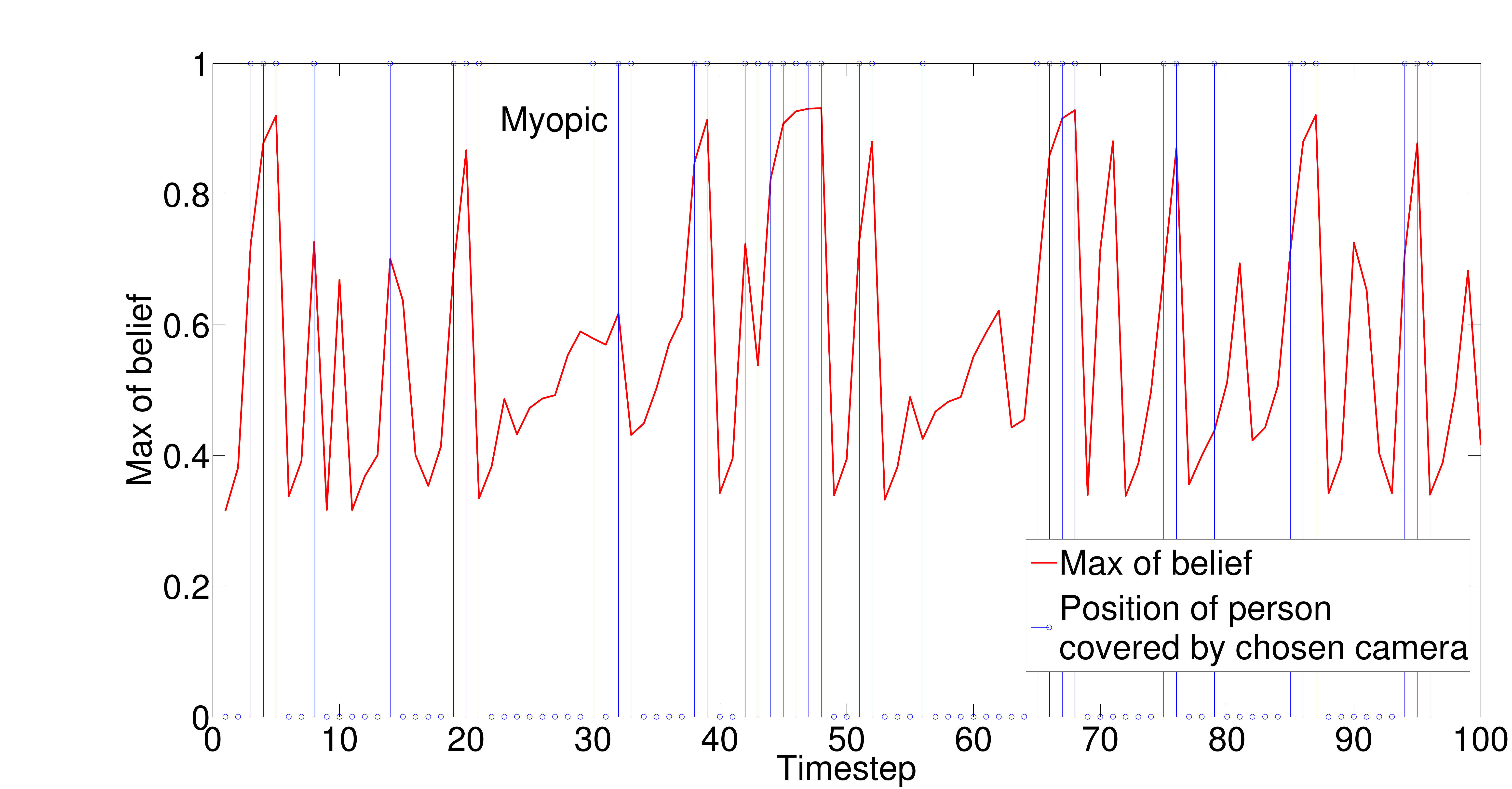} 
  \caption{Behavior of myopic vs.\ non-myopic policy.}
  \label{fig:belFluct}
\end{center}
\end{figure}

\section{Discussion \& Conclusions}

In this article, we addressed the problem of active perception, in which an agent must take actions to reduce uncertainty about a hidden variable while reasoning about various constraints. Specifically, we modeled the task of surveillance with multi-camera tracking systems in large urban spaces as an active perception task. Since the state of the environment is dynamic, we model this task as a POMDP to compute closed-loop non-myopic policies that can reason about the long-term consequences of selecting a subset of sensors. 

Formulating uncertainty reduction as an end in itself is a challenging task, as it breaks the PWLC property of the value function, which is imperative for solving POMDPs efficiently. $\rho$POMDP and POMDP-IR are two frameworks that allow formulating uncertainty reduction as an end in itself and does not break the PWLC property.

We showed that $\rho$POMDP and POMDP-IR are two equivalent frameworks for modeling active perception task. Thus, results that apply to one framework are also applicable to the other. While $\rho$POMDP does not restrict the definition of $\rho$ to a PWLC function, in this work we restrict the definition of $\rho$POMDP to a case where $\rho$ is approximated with a PWLC function, as it is not feasible to efficiently solve a $\rho$POMDP where the $\rho$ is not a PWLC function. 

We model the action space of the active perception POMDP as selecting $K$ out of $N$ sensors, where $K$ is the maximum number of sensors allowed by the resource constraints. Recent POMDP solvers enable scalability in the state space. However, for active perception, as the number of sensors grow, the action space grows exponentially. We proposed greedy PBVI, a POMDP planning method, that improves scalability in the action space of a POMDP. While we do not directly address the scaling in the observation space, we believe recent ideas on factorization of observation space \citep{tiagoFO} can be combined with our approach to improve scalability in state, action and observation space to solve active perception POMDPs.

By leveraging the theory of submodularity, we showed that the value function computed by greedy PBVI is guaranteed to have bounded error. Specifically, we extend Nemhauser's result on greedy maximization of submodular functions to long-term planning.
To apply these results to the active perception task, we showed that under certain conditions the value function of an active perception POMDP is submodular. One  such condition requires that the series future of observations be independent of each other given the state. While this is a strong condition, it is only a sufficient condition and not may not be a  necessary one. Thus, one line of future work is to attempt to relax this condition for proving the submodularity of the value function. Finally, we showed that, even with a PWLC approximation to the true value function, which is submodular, the error in the value function computed by greedy PBVI remains bounded, thus enabling us to compute efficiently value functions for active perception POMDP. 

Greedy PBVI is ideally suited for active perception POMDPs for which the value function is submodular. However, in real-life situations submodularity of value function might not always hold. For example, in our setting when there is occlusion, it is possible for combinations of sensors that when selected together yield higher utility than the sum of their utilities when selected individually. Similar case can arise when a mobile robots is trying to sense the best point of view to observe a scene that is occluded. Thus in cases like this, greedy PBVI might not return the best solution. 

Our empirical analysis established the critical factors involved in the performance active perception tasks. We showed that a belief-based formulation of uncertainty reduction beats a corresponding popular state-based reward baseline as well as other simple policies. While, the non-myopic policy beats the myopic one, the gain in certain cases the gain is marginal. However, in cases involving mobile sensors and budgeted constraints, non-myopic policies become critically important. Finally, experiments on a real-world dataset showed that the performance of greedy PBVI is similar to the existing methods but requires only a fraction of the computational cost, leading to much better scalability for solving active perception tasks.

\section{Appendix} 

\subsection{Results from Section 4}
\noindent{\textbf{Theorem 1}} 
\emph{Let $\mathbf{M}_{\rho}$ be a $\rho$POMDP and $\pi_{\rho}$ an arbitrary policy for $\mathbf{M}_{\rho}$.  Furthermore let $\mathbf{M}_{\mathit{IR}}$ = \textsc{reduce-pomdp-$\rho$-IR}$(\mathbf{M}_{\rho})$ and $\pi_{\mathit{IR}}$ = \textsc{reduce-policy-$\rho$-IR}$(\pi_{\rho})$. Then, for all $b$,
\begin{equation}
V_{t}^{\mathit{IR}}(b) = V_{t}^{\rho}(b),
\end{equation}
where $V_{t}^{\mathit{IR}}$ is the $t$-step value function for $\pi_{\mathit{IR}}$ and $V_{t}^{\rho}$ is the $t$-step value function for $\pi_{\rho}$.}
\begin{proof}
By induction on $t$.  To prove the base case, we observe that, from the definition of $\rho(b)$, 
\begin{equation}
V^{\rho}_{0}(b) = \rho(b) = \max_{\alpha_{\rho}^{a_p} \in \Gamma_{\rho}} \sum_{s} b(s)\alpha_{\rho}^{a_p}(s). \nonumber
\end{equation}
Since $\mathbf{M}_{\mathit{IR}}$ has a prediction action corresponding to each $\alpha_{\rho}^{a_p}$, thus the $a_p$ corresponding to $\alpha = \argmax_{\alpha_{\rho}^{a_p} \in \Gamma_{\rho}} \sum_{s} b(s)\alpha_{\rho}^{a_p}(s)$, must also maximize $\sum_{s}b(s)R(s,a_p)$.
Then, 
\begin{equation} \label{eq:base-case1}
\begin{split}
V^{\rho}_{0}(b) &= \max_{a_p} \sum_{s} b(s)R_{\mathit{IR}}(s,a_p) \\
			&= V^{\mathit{IR}}_{0}(b).   
			\end{split}
\end{equation}

For the inductive step, we assume that $V_{t-1}^{\mathit{IR}}(b) = V_{t-1}^{\rho}(b)$ and must show that $V_{t}^{\mathit{IR}}(b) = V_{t}^{\rho}(b)$.  Starting with $V_{t}^{\mathit{IR}}(b)$,
\begin{equation} \label{eq:ind-step1}
\begin{split}
V_{t}^{\mathit{IR}}(b) = & \max_{a_{p}}\sum_s b(s)R(s,a_p) \\ & \hspace{6mm}+ \sum_{z}  \Pr(\mathbf{z}|b,\pi^{n}_{\mathit{IR}}(b)) V_{t-1}^{\mathit{IR}}(b^{\pi^{n}_{\mathit{IR}}(b),\mathbf{z}}),  
\end{split}
\end{equation}
where $\pi^{n}_{\mathit{IR}}(b)$ denotes the normal action of the tuple specified by $\pi_{\mathit{IR}}(b)$ 
and: 
\begin{equation}
\resizebox{0.48\textwidth}{!}{$\Pr(\mathbf{z}|b,\pi^{n}_{\mathit{IR}}(b) ) = \sum_{s}\sum_{s''} O_{\mathit{IR}}(s'',\pi^{n}_{\mathit{IR}}(b),\mathbf{z})T_{\mathit{IR}}(s,\pi^{n}_{\mathit{IR}}(b),s'')b(s)$}.  \nonumber
\end{equation}

Using the reduction procedure, we can replace $T_{\mathit{IR}}$ and $O_{\mathit{IR}}$ and $\pi^{n}_{\mathit{IR}}(b)$ with their $\rho$POMDP counterparts on right hand side of the above equation:
\begin{equation}
\begin{split}
\resizebox{0.48\textwidth}{!}{$\Pr(\mathbf{z}|b,\pi^{n}_{\mathit{IR}}(b)) = \sum_{s}\sum_{s''} O_{\rho}(s'',\pi_{\rho}(b),z) T_{\rho}(s,{\pi_{\rho}}(b),s'')b(s)$}  \nonumber \\
\hspace{-40mm} = \Pr(\mathbf{z}|b,{\pi_{\rho}}(b)) \hspace{38mm}.
\end{split}
\end{equation}
Similarly, for the belief update equation,
\begin{equation}
\begin{split}
b^{\pi^{n}_{\mathit{IR}}(b),\mathbf{z}} &= \frac{O_{\mathit{IR}}(s',\pi^{n}_{\mathit{IR}}(b),\mathbf{z})}{\Pr(\mathbf{z}|\pi^{n}_{\mathit{IR}}(b),b)} \sum_{s} b(s)T_{\mathit{IR}}(s,\pi^{n}_{\mathit{IR}}(b),s') \\
		 &=  \frac{O_{\rho}(s',{\pi_{\rho}}(b),\mathbf{z})}{\Pr(\mathbf{z}|{\pi_{\rho}}(b),b)} \sum_{s} b(s)T_{\rho}(s,{\pi_{\rho}}(b),s') \\
		 &= b^{{\pi_{\rho}}(b),\mathbf{z}}. 
\end{split}
\end{equation}			
Substituting the above result in \eqref{eq:ind-step1} yields:
\begin{equation}
\resizebox{0.48\textwidth}{!}{$V_{t}^{\mathit{IR}}(b) = \max_{a_p}\sum_s b(s)R(s,a_p) + \sum_{\mathbf{z}}  Pr(\mathbf{z}|b,{\pi}_{\rho}(b)) V_{t-1}^{\mathit{IR}}(b^{{\pi_{\rho}}(b),\mathbf{z}})$}.
\end{equation}
 
Since the inductive assumption tells us that $V_{t-1}^{\mathit{IR}}(b) = V_{t-1}^{\rho}(b)$ and \eqref{eq:base-case1} shows that $\rho(b) = \max_{a_p} \sum_s b(s)R(s,a_p)$:
\begin{equation}
\begin{split}
V_{t}^{\mathit{IR}}(b) = &~ [\rho(b) +  \sum_{\mathbf{z}}  Pr(\mathbf{z}|b,{\pi_{\rho}}(b)) V_{t-1}^{\rho}(b^{{\pi_{\rho}}(b),\mathbf{z}}) ] \\ 
 = &~ V_{t}^{\rho}(b).
 \end{split}
\end{equation}
\qed
\end{proof}

\noindent{\textbf{Theorem 2}} \emph{Let $\mathbf{M}_{\mathit{IR}}$ be a POMDP-IR and $\pi_{\mathit{IR}} = \langle \mathbf{a}_{n}, a_p \rangle $ an policy for $\mathbf{M}_{\mathit{IR}}$, such that $a_{p} = \max_{a_{p}'}b(s)R(s,a_{p}')$.  Furthermore let $\mathbf{M}_{\rho}$ = \textsc{reduce-pomdp-IR-$\rho$}($\mathbf{M}_{\mathit{IR}})$ and $\pi_{\rho}$ = \textsc{reduce-policy-IR-$\rho$}($\pi_{\mathit{IR}})$. Then, for all $b$,
\begin{equation}
V_{t}^{\rho}(b) = V_{t}^{IR}(b),
\end{equation}
where $V_{t}^{\mathit{IR}}$ is the value of following $\pi_{\mathit{IR}}$ in $\mathbf{M}_{\mathit{IR}}$ and 
 $V_{t}^{\rho}$ is the value of following $\pi_{\rho}$ in $\mathbf{M}_{\rho}$.}
 
\begin{proof}
By induction on $t$.  To prove the base case, we observe that, from the definition of $\rho(b)$, 

\begin{eqnarray} \label{eq:base-case2}
\begin{split}
V_{0}^{IR}(b) &= \max_{a_{p}} \sum_{s} b(s)R(s,a_{p})  \\
&= \sum_s b(s) \alpha(s) \ \{\mbox{where $\alpha(s)$ is the $\alpha(s)$} \\  &  \hspace{-10mm}  \mbox{ corresponding to $a_p = \argmax_{a_p'}\sum_s b(s)R(s,a_{p}')$}. \} \\
&= \rho(b)  \\
&= V_{0}^{\rho}(b)
\end{split}
\end{eqnarray}

For the inductive step, we assume that $V_{t-1}^{\rho}(b) = V_{t-1}^{IR}(b)$ and must show that $V_{t}^{\rho}(b) = V_{t}^{IR}(b)$.  Starting with $V_{t}^{\rho}(b)$,
\begin{equation} \label{eq:ind-step2}
V_{t}^{\rho}(b) =  \rho(b) +    \sum_{\mathbf{z}}  Pr(\mathbf{z}|b,\pi_\rho(b)) V_{t-1}^{\rho}(b^{\pi_{\rho}(b),\mathbf{z}}),
\end{equation}
where $\pi^{n}_{\mathit{IR}}(b)$ denotes the normal action of the tuple specified by $\pi_{\mathit{IR}}(b)$ 
and:
\begin{equation}
\resizebox{0.48\textwidth}{!}{$Pr(\mathbf{z}|b,\pi_{\rho}(b) ) =  \sum_{s}\sum_{s''} O_{\rho}(s'',\pi_{\rho}(b),\mathbf{z})T_{\rho}(s,\pi_{\rho}(b),s'')b(s)$}. 
\end{equation}
From the reduction procedure, we can replace $T_{\rho}$ and $O_{\rho}$ and $\pi_{\rho}(b)$ with their POMDP-IR counterparts:
\begin{equation}
\begin{split}
\resizebox{0.48\textwidth}{!}{$Pr(\mathbf{z}|b,\pi_{\rho}(b)) = \sum_{s}\sum_{s''} O_{\mathit{IR}}(s'',\pi^{n}_{\mathit{IR}}(b),\mathbf{z}) T_{\mathit{IR}}(s,\pi^{n}_{\mathit{IR}}(b),s'')b(s)$}  \nonumber \\
\hspace{-40mm} = Pr(\mathbf{z}|b,{\pi_{\mathit{IR}}}(b)). \hspace{38mm}
\end{split}
\end{equation}
Similarly, for the belief update equation,
\begin{equation}
\begin{split}
b^{\pi_{\rho}(b),\mathbf{z}} &= \frac{O_{\rho}(s',{\pi_{\rho}}(b),\mathbf{z})}{Pr(\mathbf{z}|{\pi_{\rho}}(b),b)} \sum_{s} b(s)T_{\rho}(s,{\pi_{\rho}}(b),s') \\
&= \frac{O_{\mathit{IR}}(s',\pi^{n}_{\mathit{IR}}(b),\mathbf{z})}{Pr(\mathbf{z}|\pi^{n}_{\mathit{IR}}(b),b)} \sum_{s} b(s)T_{\mathit{IR}}(s,\pi^{n}_{\mathit{IR}}(b),s') \\
		 &= b^{{\pi_{\mathit{IR}}}(b),\mathbf{z}}. 
\end{split}
\end{equation}			
Substituting the above result in \eqref{eq:ind-step2} yields:
\begin{equation}
\begin{split}
V_{t}^{\rho}(b) = \rho(b) +  \sum_{\mathbf{z}}  Pr(\mathbf{z}|b,{\pi}_{\mathit{IR}}(b)) V_{t-1}^{\mathit{IR}}(b^{{\pi_{\mathit{IR}}}(b),\mathbf{z}}).
\end{split}
\end{equation}
 
Since the inductive assumption tells us that $V_{t-1}^{\rho}(b) = V_{t-1}^{\mathit{IR}}(b) $ and \eqref{eq:base-case2} shows that $\max_{a_p} \sum_{s}b(s)R(s,a_p) = \rho(b)$:
\begin{equation}
\begin{split}
V_{t}^{\rho}(b) &= [\max_{a_p} \sum_{s}b(s)R(s,a_p) \\ & \hspace{20mm} + \sum_{\mathbf{z}}  Pr(\mathbf{z}|b,{\pi_{\mathit{IR}}}(b)) V_{t-1}^{IR}(b^{{\pi_{\mathit{IR}}}(b),\mathbf{z}}) ] \\ 
 = & \ V_{t}^{IR}(b). \nonumber
\end{split}
\end{equation}
\qed
\end{proof}

\subsection{Results from subsection 6.1}

The following Lemma proves that the error in the value function remains bounded after application of $\mathfrak{B}^{G}$.

\begin{lemma}  \label{eoptimal2}
If for all $b$, $\rho(b) \geq 0$, 
\begin{equation}  \label{eq:assum}
V^{\pi}_{t}(b) \geq (1 - \epsilon) V^{*}_{t}(b), 
\end{equation} 
and $Q^{\pi}_{t}(b,\mathfrak{a})$ is non-negative, monotone, and submodular in $\mathfrak{a}$, then, for $\epsilon \in [0,1]$,
\begin{equation}
(\mathfrak{B}^{G}V^{\pi}_{t})(b) \geq (1-e^{-1})(1 - \epsilon) (\mathfrak{B}^{G}  V^{*}_{t})(b).
\end{equation}
\end{lemma}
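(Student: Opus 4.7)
The plan is to sandwich $(\mathfrak{B}^{G} V^{\pi}_{t})(b)$ between two inequalities: the factor $(1-e^{-1})$ comes from applying Corollary~\ref{Lemma1} to $V^{\pi}_{t}$, and the factor $(1-\epsilon)$ comes from propagating the assumed near-optimality of $V^{\pi}_{t}$ through the Bellman operator. In three short steps, I will convert $(\mathfrak{B}^{G} V^{\pi}_{t})(b)$ into $(1-e^{-1})(\mathfrak{B}^{*} V^{\pi}_{t})(b)$, then into $(1-e^{-1})(1-\epsilon)(\mathfrak{B}^{*} V^{*}_{t})(b)$, and finally into the claimed bound by noting that full maximization dominates greedy maximization on the same value function.

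First I would invoke Corollary~\ref{Lemma1} directly on $V^{\pi}_{t}$. Since the hypothesis provides that $Q^{\pi}_{t}(b,\mathfrak{a})$ is non-negative, monotone and submodular in $\mathfrak{a}$, the corollary yields
$$(\mathfrak{B}^{G} V^{\pi}_{t})(b) \;\geq\; (1 - e^{-1})\,(\mathfrak{B}^{*} V^{\pi}_{t})(b).$$
Second, I would expand the definition of $\mathfrak{B}^{*}$ and substitute the pointwise bound $V^{\pi}_{t}(b^{\mathfrak{a},\mathbf{z}}) \geq (1-\epsilon)V^{*}_{t}(b^{\mathfrak{a},\mathbf{z}})$ inside the expectation over observations. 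To pull the factor $(1-\epsilon)$ out of the entire backup, I would use the non-negativity hypothesis $\rho(b,\mathfrak{a}) \geq 0$ together with $\epsilon\in[0,1]$ to get $\rho(b,\mathfrak{a}) \geq (1-\epsilon)\,\rho(b,\mathfrak{a})$. The common factor $(1-\epsilon)$ then pulls out of both terms inside the max, giving $(\mathfrak{B}^{*} V^{\pi}_{t})(b) \geq (1-\epsilon)\,(\mathfrak{B}^{*} V^{*}_{t})(b)$. A final elementary observation, $(\mathfrak{B}^{*} V^{*}_{t})(b) \geq (\mathfrak{B}^{G} V^{*}_{t})(b)$ because a full argmax can only beat a greedy argmax over the same objective, closes the chain.

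The main obstacle is the middle step, because scaling a value function by a constant does not commute with the Bellman operator in general; the argument only works because $\rho \geq 0$ lets me inflate the immediate reward by the same factor $(1-\epsilon)$ that already multiplies the future value, so that the factor can be extracted from the entire maximum. This is precisely why the lemma explicitly lists $\rho(b) \geq 0$ as a hypothesis: without it, an additive error term depending on $\rho$ would intrude and the clean multiplicative bound would be lost. The rest of the proof is then book-keeping with inequalities that follow directly from Corollary~\ref{Lemma1} and the definitions of $\mathfrak{B}^{*}$ and $\mathfrak{B}^{G}$.
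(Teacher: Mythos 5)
Your proposal is correct and is essentially the paper's own argument in a lightly reordered form: both rest on the pointwise inequality $Q^{\pi}_{t+1}(b,\mathfrak{a}) \geq (1-\epsilon)\,Q^{*}_{t+1}(b,\mathfrak{a})$ obtained by adding $\rho(b) \geq (1-\epsilon)\rho(b)$ to the discounted expectation (exactly the role you assign to the $\rho \geq 0$ hypothesis), one application of Corollary~\ref{Lemma1}/Nemhauser, and one use of the fact that a full maximum dominates the value at the greedily selected action set. The paper applies the domination step to $Q^{\pi}$ at $\mathfrak{a}^{G}_{*}$ before invoking the $(1-\epsilon)$ relation, whereas you apply it to $Q^{*}$ after; the chain of inequalities is otherwise identical.
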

\begin{proof}
Starting from \eqref{eq:assum} and, for a given $\mathfrak{a}$, on both sides multiplying $\gamma \geq 0$, taking the expectation over $\mathbf{z}$, and adding $\rho(b)$ (since $\rho(b) \geq 0$ and $\epsilon \leq 1$):
\begin{equation*}
 \rho(b) +  \gamma \mathbb{E}_{\mathbf{z}|b,\mathfrak{a}} [V^{\pi}_{t}(b^{a,\mathbf{z}})] \geq (1-\epsilon)(\rho(b) + \gamma \mathbb{E}_{\mathbf{z}|b,\mathfrak{a}} [V^{*}_{t}(b^{a,\mathbf{z}})]). 
\end{equation*}
From the definition of $Q^\pi_t$ \eqref{eq:q-def}, we thus have:
\begin{equation} \label{eq:toshow}
Q^\pi_{t+1}(b,\mathfrak{a}) \geq (1-\epsilon) Q^*_{t+1}(b,\mathfrak{a}) \ \ \forall \mathfrak{a}. 
\end{equation}
From Theorem \ref{theorem1}, we know
\begin{equation} 
Q^\pi_{t+1}(b,\mathfrak{a}^G_\pi) \geq (1-e^{-1})Q^\pi_{t+1}(b,\mathfrak{a}^*_\pi),
\end{equation} 
where $\mathfrak{a}^G_\pi = \mathtt{greedy}\hbox{-}\mathtt{argmax}(Q^\pi_{t+1}(b,\cdot),A^+,K)$ and $\mathfrak{a}^*_\pi = \argmax_{\mathfrak{a}} Q^\pi_{t+1}(b,\mathfrak{a}$). Since $Q^\pi_{t+1}(b,\mathfrak{a}^*_\pi) \geq Q^\pi_{t+1}(b,\mathfrak{a})$ for any $\mathfrak{a}$,
\begin{equation} \label{eq:almost}
Q^\pi_{t+1}(b,\mathfrak{a}^G_\pi) \geq (1-e^{-1})Q^\pi_{t+1}(b,\mathfrak{a}^G_*),
\end{equation}
where $\mathfrak{a}^{G}_{*} =  \mathtt{greedy}\hbox{-}\mathtt{argmax}(Q^{*}_{t}(b,\cdot),A^+,K)$. Finally, \eqref{eq:toshow} implies that $Q^\pi_{t+1}(b,\mathfrak{a}^G_*) \geq (1 - \epsilon) Q^*_{t+1}(b,\mathfrak{a}^G_*)$, so:
\begin{equation}
\begin{split}
& Q^\pi_{t+1}(b,\mathfrak{a}^G_\pi) \geq (1-e^{-1})(1 - \epsilon) Q^*_{t+1}(b,\mathfrak{a}^G_*)\\
& (\mathfrak{B}^{G}V^{\pi}_{t})(b) \geq (1-e^{-1})(1 - \epsilon) (\mathfrak{B}^{G}V^{*}_{t})(b).
\end{split}
\end{equation}
\qed
\end{proof}

Using Corollary \ref{Lemma1} and Lemma \ref{eoptimal2}, we can prove Theorem \ref{th:bound}.

\noindent{\textbf{Theorem 4}}
If for all policies $\pi$, $Q^{\pi}_{t}(b,\mathfrak{a})$ is non-negative, monotone and submodular in $\mathfrak{a}$, then for all $b$,
\begin{equation} \label{eq:bound1}
V^{G}_{t}(b) \geq (1 - e^{-1})^{2t}V^{*}_{t}(b).
\end{equation}
\begin{proof}
By induction on $t$. The base case, $t=0$, holds because $V^{G}_{0}(b) = \rho(b) = V^{*}_{0}(b)$. 

In the inductive step, for all $b$, we assume that 
\begin{equation} \label{eq:ind-ass}
V^{G}_{t-1}(b) \geq (1 - e^{-1})^{2t-2}V^{*}_{t-1}(b),
\end{equation}
and must show that
\begin{equation}
V^{G}_{t}(b) \geq (1 - e^{-1})^{2t}V^{*}_{t}(b).
\end{equation}
Applying Lemma \ref{eoptimal2} with $V^{\pi}_{t} = V^{G}_{t-1}$ 
and $(1-\epsilon) = (1 - e^{-1})^{2t-2} $ 
to \eqref{eq:ind-ass}:
\begin{equation*}
\begin{split}
 (\mathfrak{B}^{G}V^{G}_{t-1})(b) 
&
 \geq (1 - e^{-1})^{2t-2}(1 - e^{-1})(\mathfrak{B}^{G}V^{*}_{t-1})(b) \\
 V^{G}_{t}(b)
&
 \geq (1-e^{-1})^{2t-1}(\mathfrak{B}^{G}V^{*}_{t-1})(b). 
\end{split}
\end{equation*}
Now applying Corollary \ref{Lemma1} with $V^{\pi}_{t-1} = V^{*}_{t-1}$:
\begin{equation}
\begin{split}
& V^{G}_{t}(b) \geq (1-e^{-1})^{2t-1}(1-e^{-1})(\mathfrak{B}^{*}V^{*}_{t-1})(b)\\
& V^{G}_{t}(b) \geq (1 - e^{-1})^{2t}V^{*}_{t}(b). 
\end{split}
\end{equation}
\qed
\end{proof}

\subsection{Results from subsection 6.2}
Proving that $Q^{\pi}_{t}(b,\mathfrak{a})$ is submodular in $\mathfrak{a}$ requires three steps.
First, we show that $G^{\pi}_{k}(b^{t},\mathfrak{a}^{t})$ equals the \emph{conditional entropy} of $b^{k}$ over $s^{k}$ given $\mathbf{z}^{t:k}$ and $\mathfrak{a}^t$. 
Second, we show that, under certain conditions, conditional entropy is a submodular set function.
Third, we combine these two results to show that $Q^{\pi}_{t}(b,\mathfrak{a})$ is submodular.

\begin{lemma}\label{lem:GcondEnt}
If $\rho(b) = -H_{b}(s)$, then the expected reward at each time step equals the negative discounted conditional entropy of $b^k$ over $s^{k}$ given $\mathbf{z}^{t:k}$:
\begin{equation*}
\begin{split}
G^{\pi}_{k}(b^{t},\mathfrak{a}^{t}) &= - \gamma^{t - k} (H_{b^{k}}(s^{k}|\mathbf{z}^{t:k},\mathfrak{a}^{t})) \\ &= - \gamma^{t - k} (H_{b^{k}}^{\mathfrak{a}^t}(s^{k}|\mathbf{z}^{t:k})) \  \forall \ \pi . 
\end{split}
\end{equation*}
\end{lemma}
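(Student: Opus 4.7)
The claim is essentially a tautology in disguise: once one recognizes that $b^{k}(s^{k})$ is literally the posterior distribution of $s^{k}$ given the observation sequence $\mathbf{z}^{t:k}$, the expectation over $\mathbf{z}^{t:k}$ of $-H_{b^{k}}(s^{k})$ is just the definition of negative conditional entropy. So the plan is to (i) unfold the belief-update recursion into a single posterior, (ii) rewrite $H_{b^{k}}$ as the entropy of that posterior, and (iii) recognize the outer sum as the definition of $H(s^{k}\mid \mathbf{z}^{t:k},\mathfrak{a}^{t})$.

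First I would invoke the standard Bayes-rule derivation of the POMDP belief update recursively between time $t$ and time $k$. Starting from $b^{t}$, executing $\mathfrak{a}^{t}$, and then following $\pi$ (which selects actions $\mathfrak{a}^{t-1},\dots,\mathfrak{a}^{k+1}$ as a function of the observations seen so far), a routine induction on the number of update steps gives
\begin{equation*}
b^{k}(s^{k}) \;=\; \Pr\!\bigl(s^{k}\,\big|\,b^{t},\mathfrak{a}^{t},\pi,\mathbf{z}^{t:k}\bigr).
\end{equation*}
In particular, $b^{k}$ is a deterministic function of the observation vector $\mathbf{z}^{t:k}$ once $b^{t}$, $\mathfrak{a}^{t}$, and $\pi$ are fixed.

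Second, substituting into $H_{b^{k}}(s^{k})=-\sum_{s^{k}} b^{k}(s^{k})\log b^{k}(s^{k})$ and reading the right-hand side as the Shannon entropy of the conditional distribution $\Pr(\,\cdot\,\mid b^{t},\mathfrak{a}^{t},\pi,\mathbf{z}^{t:k}=z)$ evaluated at the particular realization $z$, one obtains $H_{b^{k}}(s^{k}) = H(s^{k}\mid \mathbf{z}^{t:k}=z,\mathfrak{a}^{t},b^{t},\pi)$. Plugging this into the definition
\begin{equation*}
G^{\pi}_{k}(b^{t},\mathfrak{a}^{t}) = \gamma^{t-k}\!\!\sum_{\mathbf{z}^{t:k}}\Pr(\mathbf{z}^{t:k}\mid b^{t},\mathfrak{a}^{t},\pi)\bigl(-H_{b^{k}}(s^{k})\bigr)
\end{equation*}
turns the sum into $-\gamma^{t-k}$ times $\sum_{z}\Pr(z\mid \cdot)\,H(s^{k}\mid \mathbf{z}^{t:k}=z,\cdot)$, which is exactly the standard definition of the conditional entropy $H(s^{k}\mid \mathbf{z}^{t:k},\mathfrak{a}^{t},b^{t},\pi)$. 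Suppressing the $b^{t}$ and $\pi$ from the notation (they are carried implicitly by $b^{k}$ and by the superscript $\mathfrak{a}^{t}$ in $H_{b^{k}}^{\mathfrak{a}^{t}}$) yields the stated identity.

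The main subtlety is bookkeeping rather than mathematics: the intermediate actions $\mathfrak{a}^{t-1},\dots,\mathfrak{a}^{k+1}$ are not free inputs but random variables induced by $\pi$ and the history, so one must be careful that the conditional probabilities $\Pr(\mathbf{z}^{t:k}\mid b^{t},\mathfrak{a}^{t},\pi)$ and $\Pr(s^{k}\mid b^{t},\mathfrak{a}^{t},\pi,\mathbf{z}^{t:k})$ correctly marginalize over these policy-driven choices. This is exactly what the inductive derivation of the belief update handles, so once that step is in place the rest is just matching definitions.
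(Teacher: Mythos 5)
Your proposal is correct and follows essentially the same route as the paper: identify $b^{k}$ as the posterior $\Pr(s^{k}\mid b^{t},\mathfrak{a}^{t},\pi,\mathbf{z}^{t:k})$, note that $H_{b^{k}}(s^{k})$ is then the entropy of that conditional distribution at the realized $\mathbf{z}^{t:k}$, and observe that the outer expectation over $\mathbf{z}^{t:k}$ in the definition of $G^{\pi}_{k}$ is precisely the definition of conditional entropy. The paper simply writes this out explicitly as a cancellation of $\Pr^{\pi}_{b^{t},\mathfrak{a}^{t}}(\mathbf{z}^{t:k})$ against the denominator of the posterior, and your remark about the policy-driven intermediate actions is a reasonable (if implicit in the paper) bookkeeping point.
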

\begin{proof}
To prove the above lemma, we take help of some additional notations and definitions, first we must elaborate on the definition of $b^k$:
\begin{equation}
\resizebox{0.43\textwidth}{!}{$b^{k}(s^{k}) \triangleq Pr(s^{k}|b^{t},\mathfrak{a}^{t},\pi,\mathbf{z}^{t:k}) = \frac{Pr(\mathbf{z}^{t:k},s^{k}|b^{t},\mathfrak{a}^{t},\pi)}{Pr(\mathbf{z}^{t:k}|b^{t},\mathfrak{a}^{t},\pi)}$}. 
\end{equation}
For notational convenience, we also write this as:
\begin{equation}
b^{k}(s^{k}) \triangleq \frac{Pr_{b^{t},\mathfrak{a}^{t}}^{\pi}(\mathbf{z}^{t:k},s^{k})}{Pr_{b^{t},\mathfrak{a}^{t}}^{\pi}(\mathbf{z}^{t:k})}.
\end{equation}
The entropy of $b^k$ is thus:
\begin{equation}
H_{b^{k}}(s^{k}) = \sum_{s^{k}}\frac{Pr_{b^{t},\mathfrak{a}^{t}}^{\pi}(\mathbf{z}^{t:k},s^{k})}{Pr_{b^{t},\mathfrak{a}^{t}}^{\pi}(\mathbf{z}^{t:k})} \log(\frac{Pr_{b^{t},\mathfrak{a}^{t}}^{\pi}(\mathbf{z}^{t:k},s^{k})}{Pr_{b^{t},\mathfrak{a}^{t}}^{\pi}(\mathbf{z}^{t:k})}), \nonumber
\end{equation}
and the conditional entropy of $b^k$ over $s^k$ given $\mathbf{z}^{t:k}$ is:
\begin{equation}
\resizebox{0.48\textwidth}{!} {$H_{b^{k}}^{\mathfrak{a}^{t}}(s^k|\mathbf{z}^{t:k}) = 
\sum_{s^{k}}
\sum_{\mathbf{z}^{t:k}} Pr_{b^{t},\mathfrak{a}^{t}}^{\pi}(\mathbf{z}^{t:k},s^{k}) \log(\frac{Pr_{b^{t},\mathfrak{a}^{t}}^{\pi}(\mathbf{z}^{t:k},s^{k})}{Pr_{b^{t},\mathfrak{a}^{t}}^{\pi}(\mathbf{z}^{t:k})}) $} \nonumber
.
\end{equation}

Then, by definition of $G^{\pi}_{k}(b^{t},\mathfrak{a}^{t})$,
\begin{equation}
\begin{split}
& G^{\pi}_{k}(b^{t},\mathfrak{a}^{t}) = \gamma^{(t-k)}(- \sum_{\mathbf{z}^{t:k}} Pr_{b^{t},\mathfrak{a}^{t}}^{\pi}(\mathbf{z}^{t:k}) H_{b^{k}}(s^{k})) \\
& \mbox{By definition of entropy,} \\
& \resizebox{0.48\textwidth}{!}{$ =\gamma^{t-k} \sum_{\mathbf{z}^{t:k}} Pr_{b^{t},\mathfrak{a}^{t}}^{\pi}(\mathbf{z}^{t:k}) \Bigg[\sum_{s^{k}}\frac{Pr_{b^{t},\mathfrak{a}^{t}}^{\pi}(\mathbf{z}^{t:k},s^{k})}{Pr_{b^{t},\mathfrak{a}^{t}}^{\pi}(\mathbf{z}^{t:k})} \log(\frac{Pr_{b^{t},\mathfrak{a}^{t}}^{\pi}(\mathbf{z}^{t:k},s^{k})}{Pr_{b^{t},\mathfrak{a}^{t}}^{\pi}(\mathbf{z}^{t:k})})\Bigg]$} \\
& = \gamma^{t-k} \sum_{\mathbf{z}^{t:k}} \Bigg[\sum_{s^{k}} Pr_{b^{t},\mathfrak{a}^{t}}^{\pi}(\mathbf{z}^{t:k},s^{k})\log(\frac{Pr_{b^{t},\mathfrak{a}^{t}}^{\pi}(\mathbf{z}^{t:k},s^{k})}{Pr_{b^{t},\mathfrak{a}^{t}}^{\pi}(\mathbf{z}^{t:k})}) \Bigg] \\
& \mbox{By definition of conditional entropy,} \\
& = \gamma^{t-k} (- H_{b^{k}}^{\mathfrak{a}^t}(s^{k}|\mathbf{z}^{t:k})).  \hspace{0.25\textwidth} \qed \\  \nonumber 
\end{split}
\end{equation}

\end{proof}

\begin{lemma}  \label{Lemma4}
If $\mathfrak{z}$ is conditionally independent given $s$ then $ - H(s|\mathfrak{z})$ is submodular in $\mathfrak{z}$, i.e., for any two observations $\mathfrak{z}_{M}$ and $\mathfrak{z}_{N}$,
\begin{equation}\label{toprove}
H(s|\mathfrak{z}_{M} \cup \mathfrak{z}_{N}) + H(s|\mathfrak{z}_{M} \cap \mathfrak{z}_{N}) \geq  H(s|\mathfrak{z}_{M}) + H(s|\mathfrak{z}_{N}). \\
\end{equation}
\end{lemma}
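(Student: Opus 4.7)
The plan is to reduce the claimed inequality to the equivalent ``diminishing returns'' characterization of submodularity, then verify the resulting marginal inequality using standard information-theoretic identities together with the conditional independence hypothesis.

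First, I would invoke the well-known equivalence: for a set function $F$ on a finite ground set, the inequality $F(M \cup N) + F(M \cap N) \leq F(M) + F(N)$ for all subsets $M, N$ is equivalent to the marginal-decrease condition $F(A \cup \{e\}) - F(A) \geq F(B \cup \{e\}) - F(B)$ for all $A \subseteq B$ and all $e \notin B$. Applied to $F(\mathfrak{z}) = -H(s \mid \mathfrak{z})$, this reduces the lemma to showing
$$H(s \mid \mathfrak{z}_A) - H(s \mid \mathfrak{z}_A \cup \{z_e\}) \;\geq\; H(s \mid \mathfrak{z}_B) - H(s \mid \mathfrak{z}_B \cup \{z_e\})$$
for every $\mathfrak{z}_A \subseteq \mathfrak{z}_B \subseteq \mathfrak{z}$ and every $z_e \in \mathfrak{z} \setminus \mathfrak{z}_B$.

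Next, I would rewrite both sides as conditional mutual informations: the left-hand side equals $I(s; z_e \mid \mathfrak{z}_A)$ and the right-hand side equals $I(s; z_e \mid \mathfrak{z}_B)$. Using the identity $I(s; z_e \mid C) = H(z_e \mid C) - H(z_e \mid s, C)$, the conditional independence hypothesis gives $H(z_e \mid s, \mathfrak{z}_A) = H(z_e \mid s) = H(z_e \mid s, \mathfrak{z}_B)$, because given $s$ the feature $z_e$ is independent of every other observation feature. The claim therefore collapses to
$$H(z_e \mid \mathfrak{z}_A) \;\geq\; H(z_e \mid \mathfrak{z}_B),$$
which is the standard ``conditioning reduces entropy'' inequality applied to the nested sets $\mathfrak{z}_A \subseteq \mathfrak{z}_B$.

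The main obstacle I anticipate is the step $H(z_e \mid s, \mathfrak{z}_A) = H(z_e \mid s)$. Strictly speaking, the hypothesis in \eqref{CondEntSub} is only \emph{pairwise} conditional independence, whereas eliminating the whole set $\mathfrak{z}_A$ from the conditioning requires $z_e$ to be jointly independent of $\mathfrak{z}_A$ given $s$. I would address this by appealing to the generative structure of the observation model used throughout the paper: in the multi-sensor POMDP each observation feature $z_i$ is produced by its own sensor conditional on the shared latent state $s$, so the observation likelihood factorizes as $\prod_{i \in \mathfrak{a}} \Pr(z_i \mid s)$ and full mutual conditional independence follows from the pairwise statement. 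With this clarification in place, the remainder of the argument is a routine chain of identities using the definition of conditional mutual information, nonnegativity of entropy, and its monotonicity under additional conditioning.
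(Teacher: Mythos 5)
Your proof is correct, but it takes a genuinely different route from the paper's. The paper works directly with the union/intersection form of submodularity: it expands $H(s|\mathfrak{z}_{M}\cup\mathfrak{z}_{N})$ and $H(s|\mathfrak{z}_{M}\cap\mathfrak{z}_{N})$ via the identity $H(s|\mathfrak{z}) = H(\mathfrak{z}|s) + H(s) - H(\mathfrak{z})$, uses conditional independence to decompose the state-conditioned joint entropies additively, and then closes the argument by invoking submodularity of the unconditional joint entropy $H(\cdot)$ together with nonnegativity of $H(\mathfrak{z}_{M}\cap\mathfrak{z}_{N}|s)$. You instead pass to the equivalent diminishing-returns characterization, identify the marginal gains with the conditional mutual informations $I(s;z_e|\mathfrak{z}_A)$ and $I(s;z_e|\mathfrak{z}_B)$, cancel the $H(z_e|s,\cdot)$ terms using conditional independence, and reduce everything to ``conditioning reduces entropy'' -- essentially the Krause--Guestrin argument for naive-Bayes models. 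Your version is arguably more transparent about \emph{where} conditional independence enters, at the cost of importing the equivalence between the two definitions of submodularity; the paper's version stays in the lattice form but leans on submodularity of joint entropy as a black box (which is itself usually proved from the same monotonicity-of-conditioning fact, so the two proofs bottom out in the same place). Your remark that the hypothesis as stated is only \emph{pairwise} conditional independence, while the argument needs joint factorization of $\Pr(\mathfrak{z}|s)$, is well taken -- and note that the paper's own proof has exactly the same dependence, since its step $H(\mathfrak{z}_{M}\cup\mathfrak{z}_{N}|s) = H(\mathfrak{z}_{M}|s)+H(\mathfrak{z}_{N}|s)$ likewise requires the full product form (and, strictly, disjointness of $\mathfrak{z}_M$ and $\mathfrak{z}_N$, the overlap being absorbed into the ``positive term'' at the end); your appeal to the per-sensor generative model is the right way to discharge it.
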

\begin{proof}
By Bayes' rule for conditional entropy \citep{cover1991entropy}: 
\begin{equation} \label{eq:chaineq1}
\resizebox{0.43\textwidth}{!}{ $H(s|\mathfrak{z}_{M} \cup \mathfrak{z}_{N}) =  H(\mathfrak{z}_{M} \cup \mathfrak{z}_{N}|s) + H(s) -  H(\mathfrak{z}_{M} \cup \mathfrak{z}_{N}).$}
\end{equation}
Using conditional independence, we know $H(\mathfrak{z}_{M} \cup \mathfrak{z}_{N}|s) = H(\mathfrak{z}_{M}|s) + H(\mathfrak{z}_{N}|s)$. Substituting this in \eqref{eq:chaineq1}, we get:
\begin{equation}
\resizebox{0.43\textwidth}{!}{$H(s|\mathfrak{z}_{M} \cup \mathfrak{z}_{N})  =  H(\mathfrak{z}_{M}|s) + H(\mathfrak{z}_{N}|s) + H(s) -  H(\mathfrak{z}_{M} \cup \mathfrak{z}_{N}).$}
\end{equation}

By Bayes' rule for conditional entropy: 
\begin{equation} \label{eq:chaineq2}
H(s|\mathfrak{z}_{M} \cap \mathfrak{z}_{N}) = H(\mathfrak{z}_{M} \cap \mathfrak{z}_{N}| s) + H(s) - H(\mathfrak{z}_{M} \cap \mathfrak{z}_{N}).
\end{equation}

Adding \eqref{eq:chaineq1} and \eqref{eq:chaineq2}: 
\begin{equation}  \label{eq:addche1and2}
\begin{split}
H(s|\mathfrak{z}_{M} \cap \mathfrak{z}_{N}) + H(s|\mathfrak{z}_{M} \cup \mathfrak{z}_{N}) &= H(\mathfrak{z}_{M}|s) + H(\mathfrak{z}_{N}|s) \\ & \hspace{-10mm}+ H(\mathfrak{z}_{M} \cap \mathfrak{z}_{N}| s) +  2H(s) \\ & \hspace{-10mm}- H(\mathfrak{z}_{M} \cup \mathfrak{z}_{N}) - H(\mathfrak{z}_{M} \cap \mathfrak{z}_{N}).
\end{split}
\end{equation}

By Bayes' rule for conditional entropy: 
\begin{equation}
\begin{split}
H(\mathfrak{z}_{M}|s) &= H(s|\mathfrak{z}_{M}) + H(\mathfrak{z}_{M}) - H(s), \mbox{and}  \\
H(\mathfrak{z}_{N}|s) &= H(s|\mathfrak{z}_{N}) + H(\mathfrak{z}_{N}) - H(s)
\end{split}
\end{equation}

Substituting $H(\mathfrak{z}_{M}|s)$ and $H(\mathfrak{z}_{N}|s)$  in \eqref{eq:addche1and2}: 
\begin{equation}
\begin{split}
H(s|\mathfrak{z}_{M} \cap \mathfrak{z}_{N}) + H(s|\mathfrak{z}_{M} \cup \mathfrak{z}_{N}) &= H(s|\mathfrak{z}_{M}) + H(s|\mathfrak{z}_{N}) \\ &\hspace{-10mm}+ H(\mathfrak{z}_{M} \cap \mathfrak{z}_{N}| s) + [H(\mathfrak{z}_{M})  \\ & \hspace{-25mm}+ H(\mathfrak{z}_{N}) - H(\mathfrak{z}_{M} \cup \mathfrak{z}_{N}) - H(\mathfrak{z}_{M} \cap \mathfrak{z}_{N})].  \nonumber
\end{split}
\end{equation}

Since entropy is submodular $[H(\mathfrak{z}_{M}) + H(\mathfrak{z}_{N}) - H(\mathfrak{z}_{M} \cup \mathfrak{z}_{N}) - H(\mathfrak{z}_{M} \cap \mathfrak{z}_{N})]$ is positive and since entropy is positive, $H(\mathfrak{z}_{M} \cap \mathfrak{z}_{N}| s)$ is positive. Thus, 
\begin{equation}
\begin{split}
H(s|\mathfrak{z}_{M} \cap \mathfrak{z}_{N}) + H(s|\mathfrak{z}_{M} \cup \mathfrak{z}_{N}) &= H(s|\mathfrak{z}_{M}) + H(s|\mathfrak{z}_{N}) \\ &+ \mbox{a positive term}. \nonumber
\end{split}
\end{equation}
This implies $H(s|\mathfrak{z}_{M} \cup \mathfrak{z}_{N}) + H(s|\mathfrak{z}_{M} \cap \mathfrak{z}_{N}) \geq  H(s|\mathfrak{z}_{M}) + H(s|\mathfrak{z}_{N})$. \qed
\end{proof}

\begin{lemma} \label{Lemma2}
If $\mathfrak{z}^{t:k}$ is conditionally independent given ${s}^{k}$ and $\rho(b) = - H_b(s)$, then $G^{\pi}_{k}(b^{t},\mathfrak{a}^{t})$ is submodular in $\mathfrak{a}^{t}$ $\forall \ \pi$.
\end{lemma}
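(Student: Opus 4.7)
The plan is to reduce the claim to the submodularity of conditional entropy established in Lemma~\ref{Lemma4}. First, I would invoke Lemma~\ref{lem:GcondEnt} to rewrite
\[
G^{\pi}_{k}(b^{t},\mathfrak{a}^{t}) \;=\; -\gamma^{t-k}\,H^{\mathfrak{a}^{t}}_{b^{k}}(s^{k}\mid \mathbf{z}^{t:k}),
\]
so that, since $\gamma^{t-k}>0$, proving submodularity of $G^{\pi}_{k}$ in $\mathfrak{a}^{t}$ is equivalent to proving submodularity of $-H(s^{k}\mid \mathbf{z}^{t:k})$ viewed as a set function of $\mathfrak{a}^{t}\subseteq A^{+}$, under the joint distribution $\Pr^{\pi}_{b^{t},\mathfrak{a}^{t}}$.

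Second, I would make explicit the correspondence between adding a sensor to $\mathfrak{a}^{t}$ and adding an observation feature to the conditioning set. Because $\mathfrak{a}^{t}$ controls only which entries of $\mathbf{z}^{t}$ are non-null, enlarging $\mathfrak{a}^{t}$ by a single sensor $e$ appends exactly one extra feature $z^{t}_{e}$ to the family of variables that $s^{k}$ is conditioned on, without altering the rest of $\mathbf{z}^{t:k}$ as a collection of random variables. Thus, for $\mathfrak{a}^{t}_{M}\subseteq \mathfrak{a}^{t}_{N}$ with corresponding observation sets $\mathfrak{z}^{t:k}_{M}\subseteq \mathfrak{z}^{t:k}_{N}$, the submodular inequality I need is
\[
H(s^{k}\mid \mathfrak{z}^{t:k}_{M}) - H(s^{k}\mid \mathfrak{z}^{t:k}_{M}\cup\{z^{t}_{e}\}) \;\geq\; H(s^{k}\mid \mathfrak{z}^{t:k}_{N}) - H(s^{k}\mid \mathfrak{z}^{t:k}_{N}\cup\{z^{t}_{e}\}),
\]
which is precisely the diminishing-returns form of Lemma~\ref{Lemma4}. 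Since by hypothesis all features in $\mathbf{z}^{t:k}$ are conditionally independent given $s^{k}$, this independence carries over to every subset, and Lemma~\ref{Lemma4} applies.

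The main obstacle, and the subtlety that needs the most care, is that the joint law $\Pr^{\pi}_{b^{t},\mathfrak{a}^{t}}$ itself depends on $\mathfrak{a}^{t}$: the intermediate beliefs $b^{t-1},\ldots,b^{k+1}$ are updated using $\mathfrak{a}^{t}$ and $\mathbf{z}^{t}$, so the policy $\pi$ may select different later sensor sets $\mathfrak{a}^{t-1:k+1}$ in the $M$ and $N$ cases, changing which later observation features are realized. I would handle this by conditioning on the full trajectory of subsequent actions and observations: for every fixed realization of $(\mathfrak{a}^{t-1:k+1}, \mathbf{z}^{t-1:k+1})$ induced by $\pi$, the conditional independence assumption on $\mathbf{z}^{t:k}$ given $s^{k}$ remains intact, so Lemma~\ref{Lemma4} yields the pointwise submodular inequality in $\mathfrak{a}^{t}$. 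Because non-negative convex combinations preserve submodularity, taking the expectation over the policy-induced distribution of later actions and observations recovers the desired inequality for $G^{\pi}_{k}(b^{t},\mathfrak{a}^{t})$ itself, completing the proof. Because the argument conditions on the entire induced future, the result holds for every policy $\pi$.
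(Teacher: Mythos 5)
Your argument follows essentially the same route as the paper's proof: both reduce the claim to Lemma~\ref{Lemma4} by first invoking Lemma~\ref{lem:GcondEnt} to rewrite $G^{\pi}_{k}(b^{t},\mathfrak{a}^{t})$ as $-\gamma^{t-k}H^{\mathfrak{a}^{t}}_{b^{k}}(s^{k}|\mathbf{z}^{t:k})$, and then transfer the submodularity of the negative conditional entropy to $G^{\pi}_{k}$ by multiplying through by $-\gamma^{t-k}$. The only cosmetic difference is that you use the diminishing-returns form of submodularity whereas the paper applies the union/intersection inequality of Lemma~\ref{Lemma4} directly; these are equivalent for set functions.

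Your third paragraph is the one genuine departure, and it deserves comment. You are right that there is a subtlety the paper passes over in silence: for $k<t$, the collection $\mathbf{z}^{t:k}$ contains observations generated by sensors that $\pi$ selects at times $t-1,\dots,k+1$ as a function of the intermediate beliefs, which in turn depend on $\mathfrak{a}^{t}$ and $\mathbf{z}^{t}$. Hence the ground set of observation variables being conditioned on is not obviously common to $\mathfrak{a}_{M}$ and $\mathfrak{a}_{N}$, which a direct application of Lemma~\ref{Lemma4} requires. However, the patch you sketch does not quite close this gap: if you condition on a realization of the later actions and observations, the probability of that realization is itself a function of $\mathfrak{a}^{t}$ (the policy routes through different beliefs depending on which time-$t$ features were observed), so the terms in the diminishing-returns inequality become mixtures with \emph{different} weights, and averaging pointwise inequalities with mismatched weights does not yield the aggregate inequality. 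A cleaner repair is to express $H(s^{k}|\mathbf{z}^{t:k})$ over a single fixed ground set of candidate observation variables (with null placeholders for unselected sensors) to which the conditional-independence hypothesis applies, or to restrict attention to $k=t$ where the issue vanishes. Since the paper's own proof applies Lemma~\ref{Lemma4} without addressing this point at all, your proposal is no weaker than the published argument, but the conditioning step should not be presented as if it fully resolves the difficulty.
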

\begin{proof}
Let $\mathfrak{a}_{M}^{t}$ and $\mathfrak{a}_{N}^{t}$ be two actions and $\mathfrak{z}_{M}^{t:k}$ and $\mathfrak{z}_{N}^{t:k}$ the observations they induce.
Then, from Lemma \ref{lem:GcondEnt}, 
\begin{equation}
\begin{split}
G^\pi_{k}(b^{t},\mathfrak{a}_{M}^{t}) & = \gamma^{(t-k)}(-H_{b^{k}}^{\mathfrak{a}^t}(s^{k}|\mathfrak{z}^{t:k}_{M})). \qquad   \\ 
\end{split}
\end{equation}
\begin{equation}
\begin{split}
& \mbox{From Lemma \ref{Lemma4}, } \\
& H_{b^{k}}^{\mathfrak{a}^t}(s^{k}|\mathfrak{z}_{M}^{t:k} \cup \mathfrak{z}_{N}^{t:k}) + H_{b^{k}}^{\mathfrak{a}^t}(s^{k}|\mathfrak{z}_{M}^{t:k} \cap \mathfrak{z}_{N}^{t:k}) \\ & \hspace{35mm}\geq  H_{b^{k}}^{\mathfrak{a}^t}(s^{k}|\mathfrak{z}_{M}^{t:k}) + H_{b^{k}}^{\mathfrak{a}^t}(s^{k}|\mathfrak{z}_{N}^{t:k}) \\
& \mbox{Multiplying by} - \gamma^{t-k} \mbox{on both sides and} \\ &\mbox{using definition of $G$}   \\
& G^\pi_{k}(b^{t},\mathfrak{a}_{M}^{t} \cup \mathfrak{a}_{N}^{t}) + G^\pi_{k}(b^{t}, \mathfrak{a}_{N}^{t} \cap \mathfrak{a}_{M}^{t}) \\ & \hspace{35mm}\leq  G^\pi_{k}(b^{t},\mathfrak{a}_{M}^{t}) + G^\pi_{k}(b^{t},\mathfrak{a}_{N}^{t}). \nonumber \\ 
\end{split}
\end{equation}
\qed
\end{proof}

\noindent{\textbf{Theorem 5}}
\emph{If $\mathfrak{z}^{t:k}$ is conditionally independent given ${s}^{k}$ and $\rho(b) = - H_b(s)$, then $Q^{\pi}_{t}(b,\mathfrak{a})$ is submodular in $\mathfrak{a}$, for all $\pi$.}
\begin{proof}
$\rho(b)$ is trivially submodular in $\mathfrak{a}$ because it is independent of $\mathfrak{a}$.  Furthermore, Lemma \ref{Lemma2} shows that $G_{k}^{\pi}(b^t,\mathfrak{a}^{t})$ is submodular in $\mathfrak{a}^{t}$.  Since a positively weighted sum of submodular functions is also submodular \citep{krause14survey}, this implies that $\sum_{k = 1}^{t-1}G^{\pi}_{k}(b^t,\mathfrak{a}^{t})$ and thus $Q^{\pi}_{t}(b,\mathfrak{a})$ are also submodular in $\mathfrak{a}$.
\qed
\end{proof}

\begin{lemma} \label{lem:mono}
If $V^{\pi}_{t}$ is convex over the belief space for all $t$, then ${Q}^{\pi}_{t}(b,\mathfrak{a})$ is monotone in $\mathfrak{a}$, i.e., for all $b$ and $\mathfrak{a}_{M} \subseteq \mathfrak{a}_{N}$, ${Q}^{\pi}_{t}(b,\mathfrak{a}_{M}) \leq {Q}^{\pi}_{t}(b,\mathfrak{a}_{N})$. 
\end{lemma}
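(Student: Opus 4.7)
My plan is to recognize this as a classical ``value of information is nonnegative'' argument. Because we are in the pure active perception setting, the immediate reward $\rho(b)$ is independent of $\mathfrak{a}$, so the $\rho(b)$ terms in $Q^{\pi}_{t}(b,\mathfrak{a}_{M})$ and $Q^{\pi}_{t}(b,\mathfrak{a}_{N})$ cancel. After this cancellation, I would only need to establish
\[
\sum_{\mathbf{z}_M} \Pr(\mathbf{z}_M \mid b,\mathfrak{a}_M)\, V^{\pi}_{t-1}(b^{\mathfrak{a}_M,\mathbf{z}_M})
\;\leq\; \sum_{\mathbf{z}_N} \Pr(\mathbf{z}_N \mid b,\mathfrak{a}_N)\, V^{\pi}_{t-1}(b^{\mathfrak{a}_N,\mathbf{z}_N}).
\]

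The core step is to exploit $\mathfrak{a}_M \subseteq \mathfrak{a}_N$ and write $\mathbf{z}_N = (\mathbf{z}_M,\mathbf{z}_{N\setminus M})$, where $\mathbf{z}_{N\setminus M}$ collects the observations from the extra sensors. By the tower property of Bayesian updates (the ``martingale of beliefs''), for every $\mathbf{z}_M$ the coarser posterior decomposes as
\[
b^{\mathfrak{a}_M,\mathbf{z}_M}(s) \;=\; \sum_{\mathbf{z}_{N\setminus M}} \Pr(\mathbf{z}_{N\setminus M} \mid b,\mathfrak{a}_N,\mathbf{z}_M)\, b^{\mathfrak{a}_N,(\mathbf{z}_M,\mathbf{z}_{N\setminus M})}(s),
\]
so it is a convex combination of finer posteriors. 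Applying Jensen's inequality to the assumed convex $V^{\pi}_{t-1}$, then multiplying through by $\Pr(\mathbf{z}_M \mid b,\mathfrak{a}_M)$ and summing over $\mathbf{z}_M$, will collapse the right-hand side via the identity
$\Pr(\mathbf{z}_M \mid b,\mathfrak{a}_M)\,\Pr(\mathbf{z}_{N\setminus M} \mid b,\mathfrak{a}_N,\mathbf{z}_M) = \Pr(\mathbf{z}_N \mid b,\mathfrak{a}_N)$
and yield the required inequality.

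The main obstacle is justifying this last consistency relation, which hinges on showing that the marginal law of $\mathbf{z}_M$ given $b$ is the same whether we condition on $\mathfrak{a}_M$ or on the larger $\mathfrak{a}_N$. In the active perception POMDP each selected sensor $i$ emits an observation whose law depends only on the hidden state $s'$, and unselected sensors deterministically emit $\emptyset$; consequently the observation function $O(s',\mathfrak{a},\mathbf{z})$ factorises over the selected sensors, and marginalising $\mathbf{z}_N$ over $\mathbf{z}_{N\setminus M}$ exactly reproduces $\Pr(\mathbf{z}_M \mid b,\mathfrak{a}_M)$. I would verify this structural property explicitly from the POMDP definition, after which the rest is a routine application of the tower property and Jensen's inequality. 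The underlying intuition is the standard one: adding sensors can only refine the agent's posterior, and a convex value function rewards refinement.
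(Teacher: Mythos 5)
Your proposal is correct and follows essentially the same route as the paper's proof: drop the action-independent $\rho(b)$ term, decompose the expectation over $\mathbf{z}_N$ into a nested expectation over $\mathbf{z}_M$ and the extra sensors' observations, invoke the martingale property of Bayesian posteriors (the coarser posterior is the conditional expectation of the finer one), and apply Jensen's inequality to the convex $V^{\pi}_{t-1}$. If anything, you are slightly more careful than the paper in flagging the marginal-consistency of $\Pr(\mathbf{z}_M \mid b, \cdot)$ under $\mathfrak{a}_M$ versus $\mathfrak{a}_N$, which the paper's proof uses implicitly.
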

\begin{proof}
 By definition of $Q^{\pi}_{t}(b,\mathfrak{a})$, 
\begin{equation}
{Q}^{\pi}_{t}(b,\mathfrak{a}_{M}) = [\rho(b) + \gamma \mathbb{E}_{\mathfrak{z}_{M}}[{V}^{\pi}_{t-1}(b^{\mathfrak{a}_{M},\mathfrak{z}_{M}})|b,\mathfrak{a}_{M}]]. 
\end{equation}
Since $\rho(b)$ is independent of $\mathfrak{a}_{M}$, we need only show that the second term is monotone in $\mathfrak{a}$. Let $\mathfrak{a}_P = \mathfrak{a}_N \setminus \mathfrak{a}_M$ and
\begin{equation}
F_{b}^{\pi}(\mathfrak{a}_{N}) = \mathbb{E}_{\mathfrak{z}_{N}}[{V}^{\pi}_{t-1}(b^{\mathfrak{a}_{N},\mathfrak{z}_{N}})||b,\mathfrak{a}_{N}]. 
\end{equation}
Since $\mathfrak{a}_{N} = \{\mathfrak{a}_{M} \cup \mathfrak{a}_{P}\}$,
\begin{equation}
F_{b}^{\pi}(\mathfrak{a}_{N}) = \mathbb{E}_{\{\mathfrak{z}_{M},\mathfrak{z}_{P}\}}[V^{\pi}_{t-1}(b^{\{\mathfrak{a}_{M},\mathfrak{a}_{P}\},\{\mathfrak{z}_{M},\mathfrak{z}_{P}\}})|b,\{\mathfrak{a}_{M},\mathfrak{a}_{P}\}]. \nonumber
\end{equation}
Separating expectations, 
\begin{equation}
F_{b}^{\pi}(\mathfrak{a}_{N}) = \mathbb{E}_{\mathfrak{z}_{M}}[\mathbb{E}_{\mathfrak{z}_{P}}[{V}^{\pi}_{t-1}(b^{\{\mathfrak{a}_{M},\mathfrak{a}_{P}\},\{\mathfrak{z}_{M},\mathfrak{z}_{P}\}})|b,\mathfrak{a}_{P}]|b,\mathfrak{a}_{M}] \nonumber
\end{equation}
Applying Jensen's inequality, since $V^{\pi}_{t-1}$ is convex,
\begin{equation}
F_{b}^{\pi}(\mathfrak{a}_{N}) \geq \mathbb{E}_{\mathfrak{z}_{M}}[{V}^{\pi}_{t-1}(\mathbb{E}_{\mathfrak{z}_{P}}[b^{\mathfrak{a}_{M},\mathfrak{a}_{P},\mathfrak{z}_{M},\mathfrak{z}_{P}}|b,\mathfrak{a}_{P}])||b,\mathfrak{a}_{M}] \nonumber
\end{equation} 
Since the expectation of the posterior is the prior,
\begin{equation}
\begin{split}
& F_{b}^{\pi}(\mathfrak{a}_{N}) \geq \mathbb{E}_{\mathfrak{z}_{M}} [{V}^{\pi}_{t-1}(b^{\mathfrak{a}_{M},\mathfrak{z}_{M}})|b,\mathfrak{a}_{M}] \\
& F_{b}^{\pi}(\mathfrak{a}_{N}) \geq F_{b}^{\pi}(\mathfrak{a}_{M}).
\end{split}
\end{equation}
Consequently, we have:
\begin{equation}
\begin{split}
& {\rho}(b) + \gamma^{t-k}F_{b}^{\pi}(\mathfrak{a}_{N}) \geq {\rho}(b) + \gamma^{t-k}F_{b}^{\pi}(\mathfrak{a}_{M}) \\ 
& {Q}_{t}^{\pi}(b,\mathfrak{a}_{N}) \geq  {Q}_{t}^{\pi}(b,\mathfrak{a}_{M}).
\end{split}
\end{equation}
\end{proof}



\noindent{\textbf{Theorem 6}}
\emph{If $\mathfrak{z}^{t:k}$ is conditionally independent given $s^{k}$, $V^{\pi}_{t}$ is convex over the belief space for all $t, \pi$, and $\rho(b) = - H_b(s) + 
log(\frac{1}{|S|})$, then for all $b$, }
\begin{equation}
V^{G}_{t}(b) \geq (1 - e^{-1})^{2t}V^{*}_{t}(b).
\end{equation}
\begin{proof}
Follows from Theorem \ref{th:bound}, given $Q^{G}_{t}(b,\mathfrak{a})$ is non-negative, monotone and submodular. For $\rho(b) = -H_{b}({s}) + \log(\frac{1}{|S|})$, it is easy to see that $Q^{G}_{t}(b,\mathfrak{a})$ is non-negative, as entropy is always positive \citep{cover1991entropy} and is maximum when $b(s) = \frac{1}{|S|}$ for all $s$ \citep{cover1991entropy}. Theorem \ref{th:submod} showed that $Q^{G}_{t}(b,\mathfrak{a})$ is submodular if $\rho(b) = -H_{b}({s})$. The monotonicity of $Q^{G}_{t}$ follows from the condition that $V^{\pi}_{t}$ is convex in belief space; Lemma \ref{lem:mono} then shows that $Q^{G}_{t}(b,\mathfrak{a})$ is monotone in $\mathfrak{a}$. \qed
\end{proof}

\subsection{Results from subsection 6.3}

\begin{lemma}
For all beliefs $b$, the error between $V^{G}_{t}(b)$ and $\tilde{V}^{G}_{t}(b)$ is bounded by $\frac{C \delta^{\alpha}}{1 - \gamma}$. That is, $||V^{G}_{t} - \tilde{V}^{G}_{t}||_{\infty} \leq \frac{C \delta^{\alpha}}{1 - \gamma} $.
\end{lemma}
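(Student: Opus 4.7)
The plan is to prove the bound by induction on $t$, closely paralleling the contraction-mapping argument that \cite{Mauricio} uses to derive \eqref{nipseq} in the ordinary (non-greedy) case, but with the Bellman operator $\mathfrak{B}^{*}$ replaced throughout by its greedy counterpart $\mathfrak{B}^{G}$. The base case and the recursive step are easy; the delicate point is controlling how the greedy-max operator transmits a uniform error bound.

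For $t=0$ we have $V^{G}_{0}(b)=\rho(b)$ and $\tilde V^{G}_{0}(b)=\tilde\rho(b)$, so the bound is immediate from the same $\alpha$-H\"older approximation property of $\rho$ that already underlies \eqref{nipseq}: $|V^{G}_{0}(b)-\tilde V^{G}_{0}(b)|=|\rho(b)-\tilde\rho(b)|\le C\delta^{\alpha}\le C\delta^{\alpha}/(1-\gamma)$. For the inductive step, I would suppose $\|V^{G}_{t-1}-\tilde V^{G}_{t-1}\|_{\infty}\le M_{t-1}$ and introduce, for each $b$ and $\mathfrak{a}$, the one-step lookahead quantities
\begin{equation*}
q_{t}(b,\mathfrak{a})=\rho(b)+\gamma\sum_{\mathbf{z}}\Pr(\mathbf{z}|b,\mathfrak{a})V^{G}_{t-1}(b^{\mathfrak{a},\mathbf{z}}),
\end{equation*}
with $\tilde q_{t}(b,\mathfrak{a})$ defined analogously from $\tilde\rho$ and $\tilde V^{G}_{t-1}$. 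Because $\Pr(\mathbf{z}|b,\mathfrak{a})$ is a probability distribution over $\mathbf{z}$, a direct calculation gives $|q_{t}(b,\mathfrak{a})-\tilde q_{t}(b,\mathfrak{a})|\le C\delta^{\alpha}+\gamma M_{t-1}$ uniformly in $\mathfrak{a}$. Since $V^{G}_{t}(b)=\max^{G}_{\mathfrak{a}} q_{t}(b,\mathfrak{a})$ and $\tilde V^{G}_{t}(b)=\max^{G}_{\mathfrak{a}} \tilde q_{t}(b,\mathfrak{a})$, pushing this uniform gap through the greedy-max operator would yield $M_{t}\le C\delta^{\alpha}+\gamma M_{t-1}$, and unrolling against $M_{0}\le C\delta^{\alpha}$ gives the geometric sum $M_{t}\le C\delta^{\alpha}\sum_{i=0}^{t}\gamma^{i}\le C\delta^{\alpha}/(1-\gamma)$.

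The hard part of the argument is exactly this last step: showing that $\max^{G}$ is 1-Lipschitz with respect to uniform perturbations of its input, i.e.\ that $|\max^{G}_{\mathfrak{a}} f-\max^{G}_{\mathfrak{a}} g|\le\|f-g\|_{\infty}$. For ordinary $\max$ this is a one-line argument, but greedy max is more delicate because the subsets $\mathfrak{a}^{G}_{f}$ and $\mathfrak{a}^{G}_{g}$ the algorithm returns can diverge after the first disagreeing marginal-gain comparison, so $|f(\mathfrak{a}^{G}_{f})-g(\mathfrak{a}^{G}_{g})|$ does not obviously collapse to the pointwise bound. I would handle this by coupling the two greedy runs iteration by iteration: at each of the $K$ steps, the marginal gains under $f$ and $g$ differ by at most $2\|f-g\|_{\infty}$, so the element selected by one run is at most $2\|f-g\|_{\infty}$ suboptimal for the other. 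Combining this with the submodularity of $q_{t}(b,\cdot)$ and $\tilde q_{t}(b,\cdot)$ (guaranteed under the conditions of Theorem~\ref{th:submod}, which via the diminishing-returns property prevents these per-step suboptimalities from accumulating across the $K$ iterations) yields the required Lipschitz bound and closes the induction.
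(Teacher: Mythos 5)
Your inductive skeleton---base case $|\rho-\tilde\rho|\le C\delta^{\alpha}$, uniform one-step gap $|q_{t}(b,\mathfrak{a})-\tilde q_{t}(b,\mathfrak{a})|\le C\delta^{\alpha}+\gamma M_{t-1}$, geometric unrolling to $C\delta^{\alpha}/(1-\gamma)$---has the right shape and matches the contraction-style argument the paper gestures at. The problem is precisely the step you flag as the hard part: the claim that greedy maximization is $1$-Lipschitz under sup-norm perturbations, $|\max^{G}_{\mathfrak{a}}f-\max^{G}_{\mathfrak{a}}g|\le\|f-g\|_{\infty}$, is \emph{false} for monotone submodular set functions, so no coupling argument can deliver it. Take the standard tight max-coverage instance with $K=2$: a universe of four points of weight $1/4$, with $b$ covering $\{u_{1},u_{2}\}$, $c$ covering $\{u_{3},u_{4}\}$, and $a$ covering $\{u_{1},u_{3}\}$, so every singleton is worth $1/2$. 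Adding a private element of weight $\epsilon$ to $a$ gives a function $f$ on which greedy picks $a$ first and finishes at $3/4+\epsilon$; adding it to $b$ instead gives $g$ on which greedy picks $b$ then $c$ and finishes at $1+\epsilon$. Both functions are non-negative, monotone and submodular, $\|f-g\|_{\infty}=\epsilon$, yet the greedy values differ by $1/4$ for arbitrarily small $\epsilon$. This is exactly the tie-breaking cascade you worried about, and submodularity does not suppress it: what submodularity buys (and what the paper's own Lemma~\ref{oneappG} proves for $\epsilon$-submodular inputs) is $\tilde Q(b,\mathfrak{a}^{G})\ge(1-e^{-1})\tilde Q(b,\mathfrak{a}^{*})-4\chi_{K}\epsilon$ with $\chi_{K}=\Theta(K)$, i.e.\ the per-step errors accumulate linearly in $K$ and, worse, you are left with a \emph{multiplicative} $(1-e^{-1})$ loss rather than an additive one. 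Feeding any such bound into your recursion destroys $M_{t}\le C\delta^{\alpha}+\gamma M_{t-1}$ and with it the stated $C\delta^{\alpha}/(1-\gamma)$ constant.

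The paper does not attempt your Lipschitz route at all. Its (very terse) proof observes that the derivation of \eqref{nipseq} by \cite{Mauricio} compares the fixed points of the \emph{same} backup operator under $\rho$ and $\tilde\rho$ and needs only that this operator be a $\gamma$-contraction; it then asserts this for $\mathfrak{B}^{G}$ by analogy with the policy backup $\mathfrak{B}^{\pi}$, which is a contraction for any \emph{fixed} $\pi$. One may reasonably ask whether that assertion itself is fully justified, since $\mathfrak{B}^{G}V=\mathfrak{B}^{\pi_{V}}V$ for a $V$-dependent greedy policy and the counterexample above shows the naive Lipschitz estimate fails; but as a reconstruction of the paper's argument, your proposal diverges at the critical step and, as written, rests on a lemma that is not true. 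To repair it you would need either the paper's fixed-operator contraction viewpoint or an explicit perturbed-greedy analysis of the type in Lemma~\ref{oneappG}, accepting the weaker multiplicative-plus-$O(K\epsilon)$ guarantee rather than a pure additive bound.
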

\begin{proof} 
Follows exactly the strategy by \cite{Mauricio} used to prove \eqref{nipseq}, which places no conditions on $\pi$ and thus holds as long as $\mathfrak{B}^G$ is a contraction mapping.  Since for any policy the Bellman operator $\mathfrak{B}^{\pi}$ defined as:
\begin{equation}
(\mathfrak{B}^{\pi}V_{t-1})(b) = [\rho(b,\mathfrak{a}_{\pi}) + \gamma\sum_{\mathbf{z} \in \Omega} \Pr(\mathbf{z}|\mathfrak{a}_{\pi},b)V_{t-1}(b^{\mathfrak{a}_{\pi},\mathbf{z}})],  \nonumber
\end{equation} 
is a contraction mapping \citep{bertsekas}, the bound holds for $\tilde{V}^G_{t}$.
\qed
\end{proof}

Let $\eta = \frac{C \delta^{\alpha}}{1 - \gamma}$ and $\tilde{Q}^{*}_{t}(b,\mathfrak{a}) = \tilde{\rho}(b) + \sum_{\mathbf{z}}\Pr(\mathbf{z}|b,\mathfrak{a})\tilde{V}^{*}_{t-1}(b^{\mathfrak{a},\mathbf{z}})$ denote the value of taking action $\mathfrak{a}$ in belief $b$ under an optimal policy. Let $\tilde{Q}^{G}_{t}(b,\mathfrak{a}) = \tilde{\rho}(b) + \sum_{\mathbf{z}}\Pr(\mathbf{z}|b,\mathfrak{a})\tilde{V}^{G}_{t-1}(b^{\mathfrak{a},\mathbf{z}})$ be the action-value function computed by greedy PBVI with immediate reward being $\tilde{\rho}(b)$. Also, let 
\begin{equation}
\begin{split}
\tilde{Q}_{t}^{\pi}(b,\mathfrak{a}) =  \tilde{\rho}(b) + \sum_{\mathbf{z}}\Pr(\mathbf{z}|b,\mathfrak{a})\tilde{V}^{\pi}_{t-1}(b^{\mathfrak{a},\mathbf{z}}), \\
\tilde{V}^{\pi}_{t}(b) =  \tilde{\rho}(b) + \sum_{\mathbf{z}}\Pr(\mathbf{z}|b,\mathfrak{a}_{\pi})\tilde{V}^{\pi}_{t-1}(b^{\mathfrak{a}_{\pi},\mathbf{z}}),
\end{split}
\end{equation} 
denote the value function for a given policy $\pi$, when the belief based reward is $\tilde{\rho}(b)$. 
As mentioned before, it is not guaranteed that $\tilde{Q}^{G}_{t}(b,\mathfrak{a})$ is submodular. Instead, we show that it is \emph{$\epsilon$-submodular}: 

\begin{definition}
The set function $f(\mathfrak{a})$ is $\epsilon$-submodular in $\mathfrak{a}$, if for every $\mathfrak{a}_{M} \subseteq \mathfrak{a}_{N} \subseteq A^+$, $a_{e} \in A^+ \setminus \mathfrak{a}_{N}$ and $\epsilon \geq 0$,
\begin{equation*}
f({a}_{e} \cup \mathfrak{a}_{M}) - f(\mathfrak{a}_{M}) \geq f({a}_{e} \cup \mathfrak{a}_{N}) - f(\mathfrak{a}_{N}) \\ - \epsilon.
\end{equation*}
\end{definition}

\begin{lemma} \label{etasubmod}
If $||V^{\pi}_{t-1} - \tilde{V}^{\pi}_{t-1}||_{\infty} \leq \eta$, and $Q^{\pi}_{t}(b,\mathfrak{a})$ is submodular in $\mathfrak{a}$, then $\tilde{Q}^{\pi}_{t}(b,\mathfrak{a})$ is $\epsilon'$-submodular in $\mathfrak{a}$ for all $b$, where $\epsilon'=4(\gamma + 1)\eta$.
\end{lemma}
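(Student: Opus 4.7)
The plan is to transfer submodularity from $Q^{\pi}_{t}$ to $\tilde{Q}^{\pi}_{t}$ by first showing that these two action-value functions are uniformly close, and then absorbing the mismatch into the slack parameter $\epsilon'$ of the $\epsilon$-submodularity definition. The key observation is that $Q^{\pi}_{t}(b,\mathfrak{a})$ and $\tilde{Q}^{\pi}_{t}(b,\mathfrak{a})$ differ only in (i) the immediate reward ($\rho$ vs.\ $\tilde{\rho}$) and (ii) the continuation value ($V^{\pi}_{t-1}$ vs.\ $\tilde{V}^{\pi}_{t-1}$), both of which are controlled by $\eta$.

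First, I would establish the pointwise bound
\[
|Q^{\pi}_{t}(b,\mathfrak{a}) - \tilde{Q}^{\pi}_{t}(b,\mathfrak{a})| \;\le\; (1+\gamma)\eta .
\]
The reward difference $|\rho(b)-\tilde\rho(b)|$ is bounded by $C\delta^{\alpha}\le\eta$ using the same $\alpha$-H\"older argument that \cite{Mauricio} invoke to derive \eqref{nipseq}, and the continuation piece is bounded by $\gamma\sum_{\mathbf{z}}\Pr(\mathbf{z}|b,\mathfrak{a})\,\|V^{\pi}_{t-1}-\tilde V^{\pi}_{t-1}\|_{\infty}\le \gamma\eta$ by the hypothesis. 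Adding these yields the $(1+\gamma)\eta$ bound.

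Next, I would use the submodularity of $Q^{\pi}_{t}$ to write, for any $\mathfrak{a}_M\subseteq \mathfrak{a}_N\subseteq A^{+}$ and $a_e\in A^{+}\setminus \mathfrak{a}_N$,
\[
Q^{\pi}_{t}(b,a_e\cup\mathfrak{a}_M)-Q^{\pi}_{t}(b,\mathfrak{a}_M)\;\ge\; Q^{\pi}_{t}(b,a_e\cup\mathfrak{a}_N)-Q^{\pi}_{t}(b,\mathfrak{a}_N),
\]
and then replace each of the four $Q^{\pi}_{t}$ terms by the corresponding $\tilde{Q}^{\pi}_{t}$ term, paying a penalty of at most $(1+\gamma)\eta$ per substitution. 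Since the substitution is done four times, the total slack incurred is $4(1+\gamma)\eta$, giving exactly
\[
\tilde{Q}^{\pi}_{t}(b,a_e\cup\mathfrak{a}_M) - \tilde{Q}^{\pi}_{t}(b,\mathfrak{a}_M) \;\ge\; \tilde{Q}^{\pi}_{t}(b,a_e\cup\mathfrak{a}_N) - \tilde{Q}^{\pi}_{t}(b,\mathfrak{a}_N) - \epsilon',
\]
with $\epsilon' = 4(\gamma+1)\eta$, which is the claimed $\epsilon'$-submodularity.

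The step I expect to require the most care is the reward-approximation bound $\|\rho-\tilde\rho\|_{\infty}\le\eta$; it is standard given the H\"older assumption already used by \cite{Mauricio}, but one has to make sure the constants align with the definition $\eta=C\delta^{\alpha}/(1-\gamma)$ so that the same $\eta$ governs both the reward gap and the value-function gap given in the hypothesis. Once that is in place, the rest is bookkeeping: the proof is essentially a triangle-inequality argument sandwiching the submodular inequality for $Q^{\pi}_{t}$ between two $(1+\gamma)\eta$-perturbations on each side.
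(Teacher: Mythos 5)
Your proposal is correct and follows essentially the same route as the paper: bound $|Q^{\pi}_{t}-\tilde{Q}^{\pi}_{t}|$ by $(\gamma+1)\eta$ via the reward gap plus the discounted continuation-value gap, then substitute $\tilde{Q}^{\pi}_{t}$ for each of the four terms in the submodular inequality, incurring total slack $4(\gamma+1)\eta$. If anything, your version is slightly more careful than the paper's in stating the needed two-sided bound $|Q^{\pi}_{t}-\tilde{Q}^{\pi}_{t}|\le(1+\gamma)\eta$ explicitly (the paper writes only the one-sided inequality but implicitly uses both directions in its worst-case accounting).
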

\begin{proof}
Since, $||V^{\pi}_{t-1} - \tilde{V}^{\pi}_{t-1}||_{\infty} \leq \eta$, then for all beliefs $b$,
\begin{equation}
V^{\pi}_{t-1}(b) - \tilde{V}^{\pi}_{t-1}(b) \leq \eta, 
\end{equation}
For a given $\mathfrak{a}$, on both sides multiply $\gamma \geq 0$, take the expectation over $\mathbf{z}$ and since $\rho(b) - \tilde{\rho}(b) \leq \eta$,  ,
\begin{equation}
\rho(b) - \tilde{\rho}(b) + \gamma\mathbb{E}_{\mathbf{z}|b,\mathfrak{a}}V^{\pi}_{t-1}(b)  - \gamma \mathbb{E}_{\mathbf{z}|b,\mathfrak{a}}\tilde{V}^{\pi}_{t-1}(b) \leq \gamma\eta + \eta  \nonumber \\ 
\end{equation}
Therefore for all $b$, $\mathfrak{a}$, 
\begin{equation}
Q^{\pi}_{t}(b,\mathfrak{a}) - \tilde{Q}^{\pi}_{t}(b,\mathfrak{a}) \leq (\gamma+1)\eta 
\end{equation}
Now since $Q^{\pi}_{t}(b,\mathfrak{a})$ is submodular, it satisfies the following equation, 
\begin{equation}\label{qsubmod}
\resizebox{0.42\textwidth}{!}{${Q}^{\pi}_{t}(b,{a}_{e} \cup \mathfrak{a}_{M}) - {Q}^{\pi}_{t}(b,\mathfrak{a}_{M}) \geq {Q}^{\pi}_{t}(b,{a}_{e} \cup \mathfrak{a}_{N}) - {Q}^{\pi}_{t}(b,\mathfrak{a}_{N})$}, 
\end{equation}
for every $\mathfrak{a}_{M} \subseteq \mathfrak{a}_{N} \subseteq A^+$, $a_{e} \in A^+ \setminus \mathfrak{a}_{N}$
For each action that appear in \eqref{qsubmod}, that is, $\{a_{e} \cup \mathfrak{a}_{M}\}, \mathfrak{a}_{M}, \{a_{e} \cup \mathfrak{a}_{N}\} $ and $ \mathfrak{a}_{N}$, the value computed by $\tilde{Q}^{\pi}_{t}$ for belief $b$ will be an approximation to $Q^{\pi}_{t}$. Thus the inequality in \eqref{qsubmod} that holds for $Q^{\pi}_{t}$, may not hold for $\tilde{Q}^{\pi}_{t}$. The worst case possible is, for some combination of $b,  \{a_{e} \cup \mathfrak{a}_{M}\}, \mathfrak{a}_{M}, \{a_{e} \cup \mathfrak{a}_{N}\} $, $\tilde{Q}^{\pi}_{t}(b,{a}_{e} \cup \mathfrak{a}_{M})$  and ${Q}^{\pi}_{t}(b,\mathfrak{a}_{N})$ underestimates the true value of ${Q}^{\pi}_{t}(b,{a}_{e} \cup \mathfrak{a}_{M})$ and $\tilde{Q}^{\pi}_{t}(b,\mathfrak{a}_{N})$ by $(\gamma+1)\eta$ each and $\tilde{Q}^{\pi}_{t}(b,\mathfrak{a}_{M})$ and $\tilde{Q}^{\pi}_{t}(b,{a}_{e} \cup \mathfrak{a}_{N})$ overestimates the value of ${Q}^{\pi}_{t}(b,\mathfrak{a}_{M})$ and ${Q}^{\pi}_{t}(b,{a}_{e} \cup \mathfrak{a}_{N})$ by $(\gamma+1)\eta$ each. This can be written formally as:
$\tilde{Q}^{\pi}_{t}(b,{a}_{e} \cup \mathfrak{a}_{M}) - \tilde{Q}^{\pi}_{t}(b,\mathfrak{a}_{M})  \geq \tilde{Q}^{\pi}_{t}(b,{a}_{e} \cup \mathfrak{a}_{N}) - \tilde{Q}^{\pi}_{t}(b,\mathfrak{a}_{N}) - 4(\gamma + 1)\eta.   $
\qed
\end{proof}

\begin{lemma}\label{oneappG}
If $\tilde{Q}^{\pi}_{t}(b,\mathfrak{a})$ is non-negative, monotone and $\epsilon$-submodular in $\mathfrak{a}$, then
\begin{equation}
\tilde{Q}^{\pi}_{t}(b,\mathfrak{a}^{G}) \geq (1-e^{-1})\tilde{Q}^{\pi}_{t}(b,\mathfrak{a}^{*}) - 4\chi_{K}\epsilon,
\end{equation}
where $\chi_{K} = \sum_{p=0}^{K-1} (1 - K^{-1})^{p}$.
\end{lemma}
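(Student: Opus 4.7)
The plan is to follow Nemhauser's classical argument for greedy maximization of monotone submodular functions, but to carry an additional additive error through each step of the recurrence, since $\tilde{Q}^{\pi}_{t}(b,\cdot)$ only satisfies submodularity up to $\epsilon$. For brevity in the plan, write $f(\mathfrak{a}) = \tilde{Q}^{\pi}_{t}(b,\mathfrak{a})$, let $\mathfrak{a}^G_p = \{e_1,\dots,e_p\}$ be the greedy solution after $p$ iterations, and let $\mathfrak{a}^*\setminus\mathfrak{a}^G_p = \{e'_1,\dots,e'_m\}$ with $m\le K$.

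First I would telescope the quantity $f(\mathfrak{a}^*\cup\mathfrak{a}^G_p) - f(\mathfrak{a}^G_p)$ as
\[
\sum_{j=1}^{m}\bigl[f(\mathfrak{a}^G_p\cup\{e'_1,\dots,e'_j\}) - f(\mathfrak{a}^G_p\cup\{e'_1,\dots,e'_{j-1}\})\bigr],
\]
and then invoke $\epsilon$-submodularity on each term with $\mathfrak{a}_M=\mathfrak{a}^G_p$, $\mathfrak{a}_N=\mathfrak{a}^G_p\cup\{e'_1,\dots,e'_{j-1}\}$, $a_e=e'_j$, to replace each increment by the simpler marginal $f(\mathfrak{a}^G_p\cup\{e'_j\}) - f(\mathfrak{a}^G_p)$ up to an additive cost of (at most) $4\epsilon$ per step (this is the worst-case slack that Lemma \ref{etasubmod} explicitly accounts for). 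Summing yields
\[
f(\mathfrak{a}^*\cup\mathfrak{a}^G_p) - f(\mathfrak{a}^G_p) \le \sum_{j=1}^{m}\bigl[f(\mathfrak{a}^G_p\cup\{e'_j\}) - f(\mathfrak{a}^G_p)\bigr] + 4m\epsilon.
\]

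Next I would use monotonicity, which gives $f(\mathfrak{a}^*)\le f(\mathfrak{a}^*\cup\mathfrak{a}^G_p)$, together with the greedy selection rule, which guarantees that for every $e'_j\in A^+\setminus\mathfrak{a}^G_p$ the marginal $f(\mathfrak{a}^G_p\cup\{e'_j\}) - f(\mathfrak{a}^G_p)$ is at most the marginal obtained at iteration $p+1$, namely $f(\mathfrak{a}^G_{p+1}) - f(\mathfrak{a}^G_p)$. Using $m\le K$ this gives
\[
f(\mathfrak{a}^*) - f(\mathfrak{a}^G_p) \le K\bigl[f(\mathfrak{a}^G_{p+1}) - f(\mathfrak{a}^G_p)\bigr] + 4K\epsilon.
\]
Setting $\delta_p = f(\mathfrak{a}^*) - f(\mathfrak{a}^G_p)$ rearranges this into the one-step recurrence $\delta_{p+1} \le (1 - K^{-1})\delta_p + 4\epsilon$.

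Finally I would unroll this recurrence from $p=0$ to $p=K$, yielding
\[
\delta_K \le (1-K^{-1})^{K}\delta_0 + 4\epsilon\sum_{p=0}^{K-1}(1-K^{-1})^{p} = (1-K^{-1})^{K}\delta_0 + 4\chi_K\epsilon.
\]
Using $(1-K^{-1})^K \le e^{-1}$, non-negativity of $f$ to drop the $f(\emptyset)\ge 0$ term, and rearranging produces $f(\mathfrak{a}^G)\ge(1-e^{-1})f(\mathfrak{a}^*) - 4\chi_K\epsilon$, which is the claim. The only real obstacle is the first step: keeping careful track of how the $\epsilon$-slack accumulates when iterating the submodularity inequality across the chain of subsets $\mathfrak{a}^G_p \subset \mathfrak{a}^G_p\cup\{e'_1\} \subset \cdots$, and verifying that the worst-case slack per inserted element is exactly the $4\epsilon$ of Lemma \ref{etasubmod}, so that the final constant matches $4\chi_K\epsilon$ rather than a larger multiple.
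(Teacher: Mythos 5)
Your proposal is correct and follows essentially the same route as the paper: the classical Nemhauser telescoping argument with the $\epsilon$-submodularity slack carried through each step, yielding the recurrence $\delta_{p+1}\le(1-K^{-1})\delta_p+4\epsilon$, which is then unrolled and combined with $(1-K^{-1})^K\le e^{-1}$ and non-negativity. The only cosmetic difference is that you telescope over $\mathfrak{a}^*\setminus\mathfrak{a}^G_p$ (with $m\le K$ terms) while the paper telescopes over all $K$ elements of $\mathfrak{a}^*$; both are valid and give the same constant.
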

\begin{proof}
Let $\mathfrak{a}^{*}$ be the optimal set of action features of size $K$, $\mathfrak{a}^{*} = \argmax_{\mathfrak{a}}\tilde{Q}^{\pi}_{t}(b,\mathfrak{a})$ and let $\mathfrak{a}^{l}$ be the greedily selected set of size $l$, that is, $\mathfrak{a}^{l} = \mathtt{greedy}\hbox{-}\mathtt{argmax}(\tilde{Q}^{\pi}_{t}(b,\cdot),A^+,l)$
Also, let $\mathfrak{a}^{*} = \{a^{*}_{1} \dots a^{*}_{K}\}$ be the elements of set $\mathfrak{a}^{*}$. Then,
\begin{equation}
\begin{split}
& \mbox{By monotonicity of $\tilde{Q}^{\pi}_{t}(b,\mathfrak{a})$} \\
& \tilde{Q}^{\pi}_{t}(b,\mathfrak{a}^{*}) \leq \tilde{Q}^{\pi}_{t}(b,\mathfrak{a}^{*} \cup \mathfrak{a}^{l}) \\  
& \mbox{Re-writing as a telescoping sum} \\
& = \tilde{Q}^{\pi}_{t}(b,\mathfrak{a}^{l}) + \sum_{j = 1}^{K}\Delta_{\tilde{Q}_{b}}(a_{j}^{*}|\mathfrak{a}^{l} \cup \{a_{1}^{*} \dots a_{j-1}^{*}\})\\
& \mbox{Using Lemma \ref{etasubmod}, since $Q$ is $\epsilon'$-submodular} \\
& \leq \tilde{Q}^{\pi}_{t}(b,\mathfrak{a}^{l}) + \sum_{j = 1}^{K} \Delta_{\tilde{Q}_{b}}(a_{j}^{*}|\mathfrak{a}^{l}) + 4K\epsilon \\
& \mbox{As $\mathfrak{a}^{l+1}$ is built  greedily from $\mathfrak{a}^{l}$ in order to maximize $\Delta_{\tilde{Q}_{b}}$  }  \\
& \leq \tilde{Q}^{\pi}_{t}(b,\mathfrak{a}^{l}) + \sum_{j = 1}^{K} (\tilde{Q}^{\pi}_{t}(b,\mathfrak{a}^{l+1}) - \tilde{Q}^{\pi}_{t}(b,\mathfrak{a}^{l})) + 4K\epsilon \\
& \mbox{As $|\mathfrak{a}^{*}| = K$} \\
& =  \tilde{Q}^{\pi}_{t}(b,\mathfrak{a}^{l}) + K(\tilde{Q}^{\pi}_{t}(b,\mathfrak{a}^{l+1}) - \tilde{Q}^{\pi}_{t}(b,\mathfrak{a}^{l})) + 4K\epsilon  \nonumber
\end{split}
\end{equation}
Let $\delta_{l} := \tilde{Q}^{\pi}_{t}(b,\mathfrak{a}^{*}) - \tilde{Q}^{\pi}_{t}(b,\mathfrak{a}^{l})$, which allows us to rewrite above equation as: 
$\delta_{l} \leq K(\delta_{l} - \delta_{l+1}) + 4K\epsilon$. Hence, $\delta_{l+1} \leq (1 - \frac{1}{K}) \delta_{l} + 4\epsilon$. Using this relation recursively, we can write,
$\delta_{K} \leq (1 - \frac{1}{K})^{K} \delta_{0} + 4\sum_{p=0}^{K-1}(1 - \frac{1}{K})^{p}\epsilon$.
Also, $\delta_{0} = \tilde{Q}^{\pi}_{t}(b,\mathfrak{a}^{*}) - \tilde{Q}^{\pi}_{t}(b,\mathfrak{a}^{0})$ and using the inequality $1 - x \leq e^{-x}$, we can write 
$\delta_{K} \leq e^{-\frac{K}{K}}\tilde{Q}^{\pi}_{t}(b,\mathfrak{a}^{*}) + 4\sum_{p=0}^{K-1} (1 - K^{-1})\epsilon$. Substituting $\delta_{K}$ and rearranging terms (Also $\chi_{K} = \sum_{p=0}^{K-1}(1 - \frac{1}{K})^{p}$): 
$\tilde{Q}^{\pi}_{t}(b,{\mathfrak{a}^{G}}) \geq (1 - e^{-1})\tilde{Q}^{\pi}_{t}(b,\mathfrak{a}^{*}) - 4\chi_{K}\epsilon.$
\qed
\end{proof}

\noindent{\textbf{Theorem 7}}
\emph{For all beliefs, the error between $\tilde{V}^{G}_{t}(b)$ and $\tilde{V}^{*}_{t}(b)$ is bounded, if $\rho(b) = -H_{b}(s)$, $V^{\pi}_{t}$ is convex in the belief space for all $\pi, t$, and if $\mathfrak{z}^{t:k}$ is conditionally independent given $s^{k}$.
}
\begin{proof}
Theorem \ref{th:core} shows that, if $\rho(b) = -H_{b}(s)$, and $\mathfrak{z}^{t:k}$ is conditionally independent given $s^{k}$, then $Q^{G}_{t}(b,\mathfrak{a})$ is submodular.
Using Lemma \ref{etasubmod}, for $V^{\pi}_{t}=V^{G}_{t}$, $\tilde{V}^{\pi}_{t}=\tilde{V}^{G}_{t}$, $Q^{\pi}_{t}(b,\mathfrak{a})=Q^{G}_{t}(b,\mathfrak{a})$ and $\tilde{Q}^{\pi}_{t}(b,\mathfrak{a})=\tilde{Q}^{G}_{t}(b,\mathfrak{a})$, it is easy to see that  
$\tilde{Q}^{G}_{t}(b,\mathfrak{a})$ is $\epsilon$-submodular. This satisfies one condition of Lemma \ref{oneappG}. Given that $\tilde{V}^{G}_{t}(b)$ is convex, the monotonicity of $\tilde{Q}^{G}_{t}(b,\mathfrak{a})$ follows from Lemma \ref{lem:mono}. Since $\tilde{\rho}(b)$ is non-negative, $\tilde{Q}^{G}_{t}(b,\mathfrak{a})$ is non-negative too. Now we can apply Lemma 9 to prove that the error generated by a one-time application of the greedy Bellman operator to $\tilde{V}^{G}_{t}(b)$, instead of the Bellman optimality operator, is bounded. It is thus easy to see that the error between $\tilde{V}^{G}_{t}(b)$, produced by multiple applications of the greedy Bellman operator, and $\tilde{V}^{*}_{t}(b)$ is bounded for all beliefs.
\qed
\end{proof}

\begin{acknowledgements}

We thank Henri Bouma and TNO for providing us with the dataset used in our experiments.  We also thank the STW User Committee for its advice regarding active perception for multi-camera tracking systems. This research is supported by the Dutch Technology Foundation STW (project \#12622), which is part of the Netherlands Organisation for Scientific Research (NWO), and which is partly funded by the Ministry of Economic Affairs.
Frans Oliehoek is funded by NWO Innovational Research Incentives Scheme Veni \#639.021.336.
\end{acknowledgements}

\bibliographystyle{spbasic}
\bibliography{journalBib}

\end{document}